
\pdfminorversion=4 

\documentclass[twoside]{article}
\usepackage[accepted]{aistats2023_preprint}

\usepackage[utf8]{inputenc} 
\usepackage[T1]{fontenc}    
\usepackage{url}            
\usepackage{booktabs}       
\usepackage{amsfonts}       
\usepackage{nicefrac}       
\usepackage{microtype}      
\usepackage{algolyx}        
\usepackage[english]{babel}
\usepackage{graphicx}
\usepackage{wrapfig}

\usepackage[round]{natbib}

\usepackage[xindy,acronym,toc,smallcaps,nomain]{glossaries}
\usepackage[normalem]{ulem}
\usepackage[colorlinks=true,allcolors=blue,backref=page]{hyperref}

\usepackage{color}
\usepackage{array}
\usepackage{multirow}
\usepackage{amsmath}
\usepackage{amsthm}
\usepackage{amssymb}

\makeatletter

\newcommand{\stkout}[1]{\ifmmode\text{\sout{\ensuremath{#1}}}\else\sout{#1}\fi}

\newacronym{BO}{bo}{Bayesian optimization}
\newacronym{GP}{gp}{Gaussian process}
\newacronym{ERM}{erm}{expected regret minimization}
\newacronym{CBM}{cbm}{confidence bound minimization}
\newacronym{MAB}{mab}{Multi-Armed Bandit}
\newacronym{BES}{bes}{bounded entropy search}
\newacronym{RF}{rf}{Random Forest}
\newacronym{RMSE}{rmse}{root mean square error}
\newacronym{SRGP}{srgp}{square-root transformed Gaussian process}
\newacronym{RS}{rs}{rejection-sampling}
\newacronym{EI}{ei}{expected improvement}
\newacronym{TS}{ts}{Thompson sampling}

\providecommand{\tabularnewline}{\\}
\floatstyle{ruled}
\newfloat{algorithm}{tbp}{loa}
\providecommand{\algorithmname}{Algorithm}
\floatname{algorithm}{\protect\algorithmname}

\usepackage{algolyx}
\theoremstyle{plain}
\newtheorem{lem}{\protect\lemmaname}

\usepackage{mathptmx}
\usepackage{microtype}
\usepackage{flushend}
\usepackage{pdfoverlay}
\usepackage{tcolorbox}

\usepackage{nicefrac}       

\usepackage{subfig}
\date{}

\@ifundefined{showcaptionsetup}{}{%
 \PassOptionsToPackage{caption=false}{subfig}}
\usepackage{subfig}
\makeatother

\providecommand{\lemmaname}{Lemma}

\usepackage{amsmath}               
  {
      \theoremstyle{plain}
      \newtheorem{definition}{Definition}
  }


%


\begin{document}

\twocolumn[

\aistatstitle{Gaussian Process Sampling and Optimization with Approximate Upper and Lower Bounds}

\aistatsauthor{ Vu Nguyen \And Marc Peter Deisenroth \And  Michael A. Osborne }

\aistatsaddress{ Amazon \And  University College London \And University of Oxford} ]












\global\long\def\se{\hat{\text{se}}}%

\global\long\def\interior{\text{int}}%

\global\long\def\boundary{\text{bd}}%

\global\long\def\new{\text{*}}%

\global\long\def\stir{\text{Stirl}}%

\global\long\def\dist{d}%

\global\long\def\HX{\entro\left(X\right)}%
 
\global\long\def\entropyX{\HX}%

\global\long\def\HY{\entro\left(Y\right)}%
 
\global\long\def\entropyY{\HY}%

\global\long\def\HXY{\entro\left(X,Y\right)}%
 
\global\long\def\entropyXY{\HXY}%

\global\long\def\mutualXY{\mutual\left(X;Y\right)}%
 
\global\long\def\mutinfoXY{\mutualXY}%

\global\long\def\xnew{y}%

\global\long\def\bm{\mathbf{m}}%

\global\long\def\bx{\mathbf{x}}%

\global\long\def\bw{\mathbf{w}}%

\global\long\def\bz{\mathbf{z}}%

\global\long\def\bu{\mathbf{u}}%

\global\long\def\bs{\boldsymbol{s}}%

\global\long\def\bk{\mathbf{k}}%

\global\long\def\bX{\mathbf{X}}%

\global\long\def\tbx{\tilde{\bx}}%

\global\long\def\by{\mathbf{y}}%

\global\long\def\bY{\mathbf{Y}}%

\global\long\def\bZ{\boldsymbol{Z}}%

\global\long\def\bU{\boldsymbol{U}}%

\global\long\def\bv{\boldsymbol{v}}%

\global\long\def\bn{\boldsymbol{n}}%

\global\long\def\bh{\boldsymbol{h}}%

\global\long\def\bV{\boldsymbol{V}}%

\global\long\def\bK{\boldsymbol{K}}%

\global\long\def\bbeta{\gvt{\beta}}%

\global\long\def\bmu{\gvt{\mu}}%

\global\long\def\btheta{\boldsymbol{\theta}}%

\global\long\def\blambda{\boldsymbol{\lambda}}%

\global\long\def\bgamma{\boldsymbol{\gamma}}%

\global\long\def\bpsi{\boldsymbol{\psi}}%

\global\long\def\bphi{\boldsymbol{\phi}}%

\global\long\def\bpi{\boldsymbol{\pi}}%

\global\long\def\eeta{\boldsymbol{\eta}}%

\global\long\def\bomega{\boldsymbol{\omega}}%

\global\long\def\bepsilon{\boldsymbol{\epsilon}}%

\global\long\def\btau{\boldsymbol{\tau}}%

\global\long\def\bSigma{\gvt{\Sigma}}%

\global\long\def\realset{\mathbb{R}}%

\global\long\def\realn{\realset^{n}}%

\global\long\def\integerset{\mathbb{Z}}%

\global\long\def\natset{\integerset}%

\global\long\def\integer{\integerset}%

\global\long\def\natn{\natset^{n}}%

\global\long\def\rational{\mathbb{Q}}%

\global\long\def\rationaln{\rational^{n}}%

\global\long\def\complexset{\mathbb{C}}%

\global\long\def\comp{\complexset}%

\global\long\def\compl#1{#1^{\text{c}}}%

\global\long\def\and{\cap}%

\global\long\def\compn{\comp^{n}}%

\global\long\def\comb#1#2{\left({#1\atop #2}\right) }%

\global\long\def\nchoosek#1#2{\left({#1\atop #2}\right)}%

\global\long\def\param{\vt w}%

\global\long\def\Param{\Theta}%

\global\long\def\meanparam{\gvt{\mu}}%

\global\long\def\Meanparam{\mathcal{M}}%

\global\long\def\meanmap{\mathbf{m}}%

\global\long\def\logpart{A}%

\global\long\def\simplex{\Delta}%

\global\long\def\simplexn{\simplex^{n}}%

\global\long\def\dirproc{\text{DP}}%

\global\long\def\ggproc{\text{GG}}%

\global\long\def\DP{\text{DP}}%

\global\long\def\ndp{\text{nDP}}%

\global\long\def\hdp{\text{HDP}}%

\global\long\def\gempdf{\text{GEM}}%

\global\long\def\ei{\text{EI}}%

\global\long\def\rfs{\text{RFS}}%

\global\long\def\bernrfs{\text{BernoulliRFS}}%

\global\long\def\poissrfs{\text{PoissonRFS}}%

\global\long\def\grad{\gradient}%
 
\global\long\def\gradient{\nabla}%

\global\long\def\cpr#1#2{\Pr\left(#1\ |\ #2\right)}%

\global\long\def\var{\text{Var}}%

\global\long\def\Var#1{\text{Var}\left[#1\right]}%

\global\long\def\cov{\text{Cov}}%

\global\long\def\Cov#1{\cov\left[ #1 \right]}%

\global\long\def\COV#1#2{\underset{#2}{\cov}\left[ #1 \right]}%

\global\long\def\corr{\text{Corr}}%

\global\long\def\sst{\text{T}}%

\global\long\def\SST{\sst}%

\global\long\def\ess{\mathbb{E}}%

\global\long\def\Ess#1{\ess\left[#1\right]}%

\global\long\def\fisher{\mathcal{F}}%

\global\long\def\bfield{\mathcal{B}}%
 
\global\long\def\borel{\mathcal{B}}%

\global\long\def\bernpdf{\text{Bernoulli}}%

\global\long\def\betapdf{\text{Beta}}%

\global\long\def\dirpdf{\text{Dir}}%

\global\long\def\gammapdf{\text{Gamma}}%

\global\long\def\gaussden#1#2{\text{Normal}\left(#1, #2 \right) }%

\global\long\def\gauss{\mathbf{N}}%

\global\long\def\gausspdf#1#2#3{\text{Normal}\left( #1 \lcabra{#2, #3}\right) }%

\global\long\def\multpdf{\text{Mult}}%

\global\long\def\poiss{\text{Pois}}%

\global\long\def\poissonpdf{\text{Poisson}}%

\global\long\def\pgpdf{\text{PG}}%

\global\long\def\wshpdf{\text{Wish}}%

\global\long\def\iwshpdf{\text{InvWish}}%

\global\long\def\nwpdf{\text{NW}}%

\global\long\def\niwpdf{\text{NIW}}%

\global\long\def\studentpdf{\text{Student}}%

\global\long\def\unipdf{\text{Uni}}%

\global\long\def\transp#1{\transpose{#1}}%
 
\global\long\def\transpose#1{#1^{\mathsf{T}}}%

\global\long\def\mgt{\succ}%

\global\long\def\mge{\succeq}%

\global\long\def\idenmat{\mathbf{I}}%

\global\long\def\trace{\mathrm{tr}}%

\global\long\def\argmax#1{\underset{_{#1}}{\text{argmax}} }%

\global\long\def\argmin#1{\underset{_{#1}}{\text{argmin}\ } }%

\global\long\def\diag{\text{diag}}%

\global\long\def\norm{}%

\global\long\def\spn{\text{span}}%

\global\long\def\vtspace{\mathcal{V}}%

\global\long\def\field{\mathcal{F}}%
 
\global\long\def\ffield{\mathcal{F}}%

\global\long\def\inner#1#2{\left\langle #1,#2\right\rangle }%
 
\global\long\def\iprod#1#2{\inner{#1}{#2}}%

\global\long\def\dprod#1#2{#1 \cdot#2}%

\global\long\def\norm#1{\left\Vert #1\right\Vert }%

\global\long\def\entro{\mathbb{H}}%

\global\long\def\entropy{\mathbb{H}}%

\global\long\def\Entro#1{\entro\left[#1\right]}%

\global\long\def\Entropy#1{\Entro{#1}}%

\global\long\def\mutinfo{\mathbb{I}}%

\global\long\def\relH{\mathit{D}}%

\global\long\def\reldiv#1#2{\relH\left(#1||#2\right)}%

\global\long\def\KL{KL}%

\global\long\def\KLdiv#1#2{\KL\left(#1\parallel#2\right)}%
 
\global\long\def\KLdivergence#1#2{\KL\left(#1\ \parallel\ #2\right)}%

\global\long\def\crossH{\mathcal{C}}%
 
\global\long\def\crossentropy{\mathcal{C}}%

\global\long\def\crossHxy#1#2{\crossentropy\left(#1\parallel#2\right)}%

\global\long\def\breg{\text{BD}}%

\global\long\def\lcabra#1{\left|#1\right.}%

\global\long\def\lbra#1{\lcabra{#1}}%

\global\long\def\rcabra#1{\left.#1\right|}%

\global\long\def\rbra#1{\rcabra{#1}}%


\vspace{-15pt}
\begin{abstract}

Many functions have approximately-known upper and/or lower bounds, potentially aiding the modeling of such functions.
In this paper, we introduce Gaussian process models for functions where such bounds are (approximately) known.
More specifically,  
we propose the \emph{first} use of such bounds to improve \gls{GP} posterior sampling and \gls{BO}.
That is, we transform a \gls{GP} model satisfying the given bounds,  and then sample and weight functions from its posterior. 
To further exploit these bounds in \gls{BO} settings, we present \gls{BES} to select the point gaining the most information about the underlying function, estimated by the \gls{GP} samples, while satisfying the output constraints. 
We characterize the sample variance bounds and show that the decision made by \gls{BES} is explainable. 
Our proposed approach is conceptually straightforward and can be used as a plugin extension to existing methods for \gls{GP} posterior sampling and Bayesian optimization.

\end{abstract}

\vspace{-15pt}

\section{Introduction}



Gaussian processes (\gls{GP}s) provide a powerful probabilistic learning framework that has led to great success in many machine learning settings, such as black-box optimization \citep{Brochu_2010Tutorial}, reinforcement learning \citep{deisenroth2011pilco}, hyperparameter tuning \citep{perrone2019learning,PB2}, and battery forecasting \citep{richardson2017gaussian}. \gls{GP}s have been especially impactful  for Bayesian optimization (\gls{BO}) \citep{Shahriari_2016Taking} to optimize a black-box function $f(\cdot)$, where careful uncertainty representation is crucial. A fundamental challenge for Gaussian process modeling and optimization is that data are often limited. In \gls{BO}, a datum is usually an expensive evaluation of another model. Particularly, we evaluate and optimize the underlying black-box  without knowing the gradient or the analytical form of $f(\cdot)$. In the small data regime, it is always useful to utilize external knowledge about $f(\cdot)$ as has been successfully demonstrated in recent work, such as monotonicity trends \citep{riihimaki2010gaussian}, experimenter's hunches \citep{li2018accelerating_ICDM}, and known optimum values \citep{nguyen2020knowing}.



We propose to use another form of prior knowledge for improving both the \gls{GP} posterior sampling and Bayesian optimization. We consider the \textit{maximum and minimum values} of $f(\cdot)$. These optimal values can be found in machine learning applications, including inverse problems
\citep{mosegaard1995monte,aster2018parameter,bertero2020introduction},
where the goal is to retrieve an input $\bx^{*}$ resulting in the given target from a black-box function. Another example is tuning hyperparameters for classification algorithms \citep{krizhevsky2012imagenet}; users may have vague knowledge about $f^{+}:=\max f(\cdot)$ and $f^-:=\min f(\cdot)$, either from recent state-of-the-art results or directly from the definition of the accuracy metric $f^{+}=100\%$, $f^{-}=0\%$. Similar information can be found in tuning regression problems where the best \gls{RMSE} score is known to be $0$. Other examples are discussed by \cite{nguyen2020knowing}, such as known maximum rewards for some reinforcement learning environments. Despite  the usefulness of such prior knowledge for the low-data regime, the setting we consider in this paper, with \textit{approximately known} maximum and minimum values, is new to the best of our knowledge. 




In this paper, we propose to exploit the knowledge of maximum and minimum values of the black-box for improving the performance of \gls{GP} posterior sampling and making better decisions in \gls{BO}, especially when the number of observations is limited. Rather than using a regular \gls{GP} as a surrogate model to infer the underlying function $f(\cdot)$, we transform the surrogate given the  bound information. We assign different credits to samples based on the bounds and thus discard those falling outside. Moreover, we utilize these bounds for \gls{BO} by proposing bounded entropy search (\gls{BES}) to select the next point as that yielding the most information about the true underlying function. Our key contributions are: 
\begin{itemize}
\item the first use of exploiting  knowledge about \textit{both} the maximum $f^+$ and/or minimum $f^-$ values of a black-box function, which can be \textit{approximately} defined,
\item an efficient posterior sampling procedure for $f(\cdot)$ that explicitly accounts for $f^+$ and $f^-$,
\item a bounded entropy search (\gls{BES}) acquisition function for \gls{BO} given $f^+$ and $f^-$.
\end{itemize}



\section{Preliminaries}


A black-box $f(\cdot)$ is defined over some bounded domain $\mathcal{X}\subset\mathbb{R}^{d}$ where $d$ is the input dimension. The function $f$ can only be accessed through noisy queries of the form $y_{i}\sim\mathcal{N}\bigl(f(\bx_{i}),\sigma_f^{2}\bigr)$
where the input $\bx_i \in \mathbb{R}^d$, the output $y_i \in \mathbb{R}$, and $\sigma_f^{2}$ is the output noise variance. 
We denote $\bX=[\bx_{1},...,\bx_{N}]^{T}\in\mathbb{R}^{N\times d}$, $\by=[y_{1},...,y_{N}]^{T}\in\mathbb{R}^{N \times 1}$, and the training set at iteration $t$ by $D_t=\{\bx_{i}, y_{i} \}_{i=1}^t$.



\subsection{Gaussian processes}
\label{subsec:Sampling-from-GP}

We assume $f$ to be drawn from a \gls{GP} \citep{Rasmussen_2006gaussian} which is a random function $f: \mathcal{X} \rightarrow \mathbb{R}$, such
that every finite collection of those random variables $f(\bX_*) \mid \bX_* \in \mathcal{X}$ follows a multivariate Gaussian distribution. Formally,
we denote a \gls{GP} as $f(\bx)\sim \gls{GP}\bigl(m\left(\bx\right),k\left(\bx,\bx'\right)\bigr)$, where $m(\bx)=\mathbb{E}\left[f\left(\bx\right)\right]$ and $k(\bx,\bx')=\mathbb{E}\left[ \bigl(f\left(\bx\right)-m\left(\bx\right) \bigr)^{T} \bigl(f\left(\bx'\right)-m\left(\bx'\right) \bigr)\right]$
are the mean and covariance functions.
Given the observed input $\bX$ and output $\by$, the \gls{GP} posterior at a new input $\bx_*$ is defined as $ f_{*}\mid\bX,\by,\bx_{*} \sim\mathcal{N}\left(\mu\left(\bx_{*}\right),\sigma^{2}\left(\bx_{*}\right)\right)$.
The posterior predictive mean and variance are:

\begin{minipage}[t]{.48\textwidth}
\vspace{-15pt}
\begin{align}
\mu\left(\bx_{*}\right)=  \; & \mathbf{k}^T_{*}\left[\mathbf{K}+\sigma^2_{f}\idenmat\right]^{-1} \left( \mathbf{y} - \bm \right) + m(\bx_*)
\end{align}
\end{minipage}
\begin{minipage}[t]{.47\textwidth}
\vspace{-15pt}
\begin{align}
\sigma^{2}\left(\bx_{*}\right)= \; &  k_{**}-\mathbf{k}^T_{*}\left[\mathbf{K}+\sigma^2_{f}\idenmat\right]^{-1}\mathbf{k}_{*}
\end{align}
\end{minipage}
where $k_{**}=k\left(\bx_{*},\bx_{*}\right)$, $\bk_{*}=[k\left(\bx_{*},\bx_{i}\right)]^T_{\forall i} \in \mathbb{R}^{N \times 1}$, $\bm=[m\left(\bx_{i}\right)]^T_{\forall i} \in \mathbb{R}^{N \times 1}$ is the prior mean at the observed locations
and $\mathbf{K}=\left[k\left(\bx_{i},\bx_{j}\right)\right]_{\forall i,j} \in \mathbb{R}^{N \times N}$.


Samples $g(\cdot) \sim \gls{GP} \bigl(\mu(\cdot),\sigma^2(\cdot)\bigr)$ from a \gls{GP} posterior can be used for several applications, e.g., \gls{TS} \citep{Thompson_1933Likelihood,russo2018tutorial}, or an information-theoretic decision in \gls{BO} \citep{Hernandez_2014Predictive,Hernandez_2017Parallel,Kandasamy_2018Parallelised}. We follow the decoupled approach \citep{wilson2020efficiently,wilson2021pathwise} to
draw samples $g(\cdot)$ from a \gls{GP} posterior. The central idea of decoupled sampling relies on Matheron\textquoteright s rule for  Gaussian random variables \citep{journel1978mining,chiles2009geostatistics,doucet2010note}, using variational Fourier features for an approximate prior and a deterministic data-dependent update term. The key benefit of this sampling is that the complexity scales linearly in the number of test points. This  is particularly useful in optimizing the acquisition function in \gls{BO} where we need to evaluate at many test points to select a next query. 

\subsection{Optimum values prior in Bayesian optimization}


\gls{BO} aims at maximizing an expensive black-box function $f(\cdot)$ using few function evaluations, i.e. finding $\bx^* := \arg \max_{\bx \in \mathcal{X} } f(\bx)$. In \gls{BO}, we typically construct a surrogate model of $f$ using a \gls{GP}. The surrogate model mimics the behavior of $f$ while it is cheaper to evaluate. Then, we sequentially select points $\bx_t$ at which to evaluate $f$.

Prior knowledge about the optimum value of the black-box contains useful information dictating the upper bound of $f(\cdot)$.
A recent approach has considered the \gls{BO} setting where the true optimum value $f^{+}=\max_{\bx\in\mathcal{X}}f(\bx)$
of the black-box function is available \citep{nguyen2020knowing}. In particular, they transform the \gls{GP} surrogate to satisfy that $f^+$ is an upper bound, and propose \gls{ERM} and \gls{CBM} acquisition functions.  However, this approach can be ineffective when the value of the true optimum $\max f(\cdot)$ is not known. If the optimum value is mis-specified, the performance degrades. In addition, this approach can only cope with knowledge of $f^+$, but it is unable to account for  knowledge of both $f^+$ and $f^-$ at the same time. The method we propose in this paper will overcome these limitations.


\vspace{-0pt}

\section{Gaussian process posterior sampling with approximate bounds $f^+$ and $f^-$}
\label{sec:GP-Posterior-Sampling_with_fmax_fmin}





We consider settings where we know  approximately the maximum and/or minimum value of $f$. We express the prior knowledge about the upper bound by specifying the value of $f^+$ and how loosely such bound via $\eta_+$. Similar design is for the lower bound via $f^-$ and $\eta_-$.

\begin{definition} \label{def:bound_random_variable}
Define the approximate bounds as random variables: $f^+ \sim \mathcal{N}(\max f ,\eta^2_+ )$ and $f^-  \sim \mathcal{N}(\min f,\eta^2_-)$. 
\end{definition}
 We may observe both $f^{+}$ and $f^{-}$ or either of them. The useful information  $f^+$ and $f^-$ give us is that (i) $f(\cdot)$ should not exceed these thresholds and (ii) $f(\cdot)$ should attain closely $f^+,f^-$. In particular, for maximization problems, the location at which  $f^+$ is attained, is \textit{potentially} the global maximizer of $f$ we are looking for in \gls{BO} settings.  

\begin{figure*}
\vspace{0pt}
\centering
\includegraphics[width=0.52\textwidth]{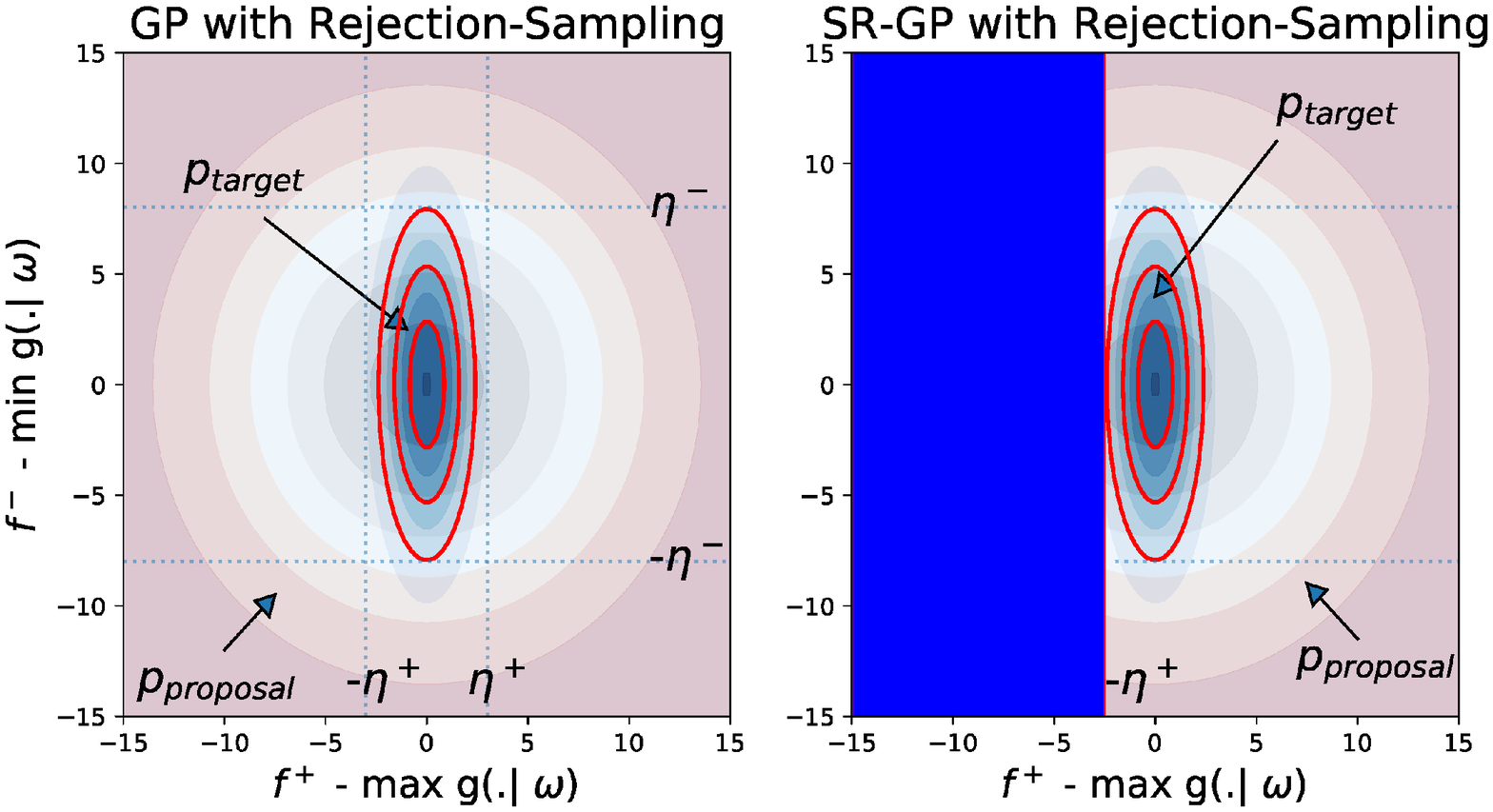} \hspace{1pt} \includegraphics[width=0.46\textwidth]{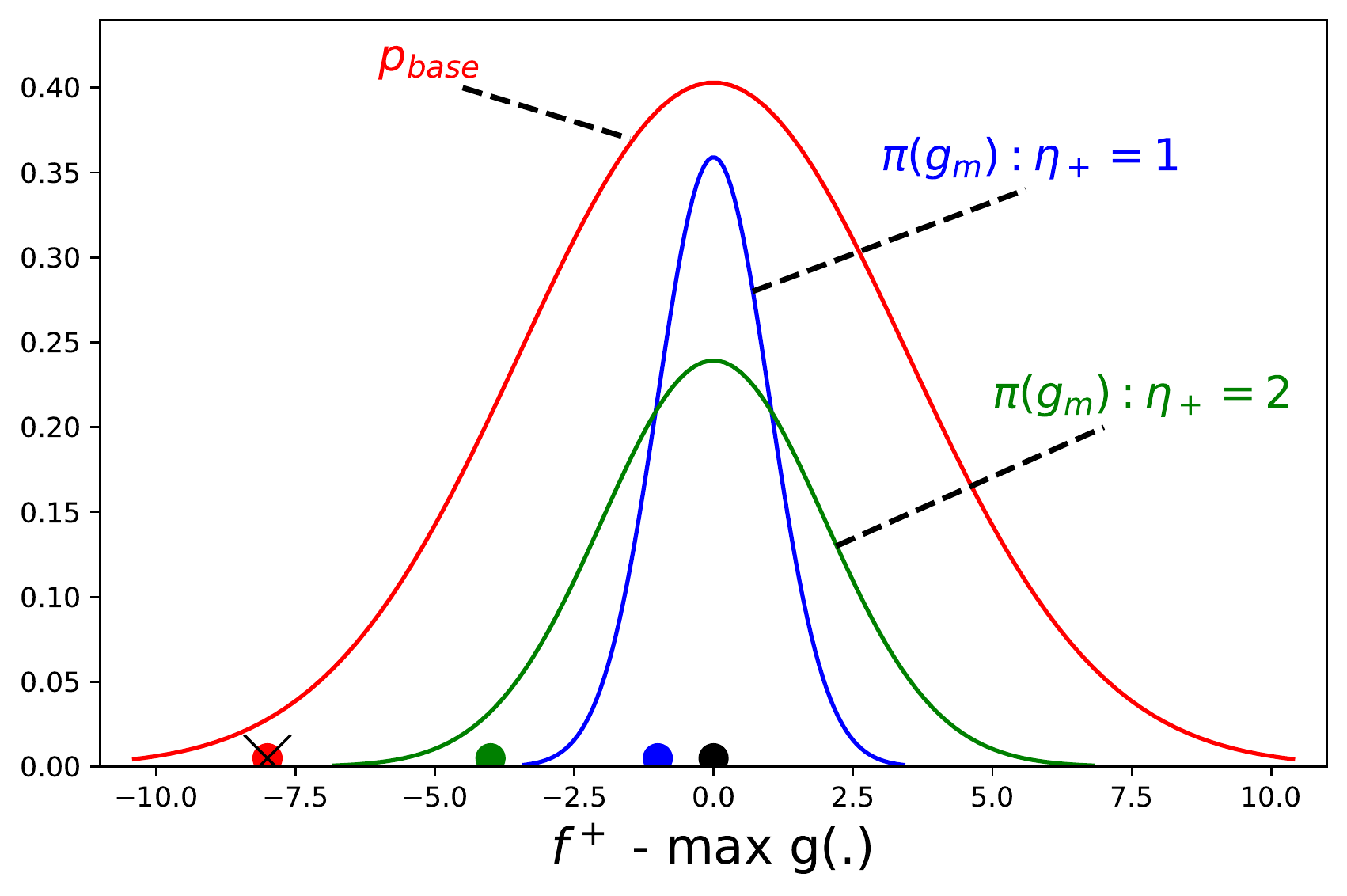}
\vspace{-3pt}
\caption{ Illustration of the sampling and weighting step. \textit{Left}: We can use  a \gls{GP} or a \gls{SRGP} for $p_{\textrm{base}}$. Given the same weighting probability $\pi(g_m)$, using \gls{SRGP} has a smaller sample variance by ignoring the redundant space (\textcolor{orange}{orange} area). 
\textit{Right}: For ease of representation, we use a univariate Gaussian distribution to visualize $\pi(g_m)$. The \gls{GP} sample $g_m(\cdot)$ (represented by the maximum value of $g_m(\cdot)$) is drawn from  $p_{\textrm{base}}$. We assign high weight to a sample depending on the density of $\pi(g_m)$. The \textcolor{red}{red} point will be rejected. The \textcolor{green}{green} point falls within the density of $\pi(g_m)$ with $\eta_+=2$, but not for $\eta_+=1$. The \textcolor{blue}{blue} point receives positive weights by both $\pi(g_m)$ including  $\eta_+=1$ and $\eta_+=2$. The \textbf{black} point receives the highest weight. 
}
\label{fig:Illustration_2d_prob_space}
\vspace{-2pt}
\end{figure*}

\subsection{Sampling and weighting \gls{GP} posterior samples}
\label{subsec:Sampling-from-GP_rejection_sampling}

Although the existing \gls{GP} posterior sampling approach \citep{wilson2020efficiently} has demonstrated great success, the resulting \gls{GP} samples usually do not obey the bounds $[f^-, f^+]$. Intuitively, assume we are given a few (noisy) observations from the black-box $f$. We can be confident in deciding which \gls{GP} samples are not admissible by looking at the bounds $f^{+},f^{-}$ and thus not making use of these samples. Therefore, we propose to assign different credits to the \gls{GP} samples based on the bound knowledge. 

We denote $p_{\textrm{base}}$ the base distribution to generate \gls{GP} posterior samples, e.g., using decoupling approach \citep{wilson2020efficiently}. Given the constraints $f^+,f^-$, we can perform weighted-averaging  across all \gls{GP} samples as:
\vspace{-7pt}
\small
\begin{align} \label{eq:weighted_averaging}
\hat{g}& := 
\sum_{g_m \sim p_{base}} g_m \frac{ p ( f^+, f^- \mid g_m ) }{ \sum p ( f^+, f^- \mid g_m ) } = \sum_{m=1}^M g_m \frac{ \pi( g_m) }{ \sum_{m=1}^M \pi( g_m) } 
\end{align}
\normalsize
where $g_m$ is a \gls{GP} posterior sample, see Eq. (\ref{eq:GP_sample_decoupled})  in the appendix for details. We denote $\pi( g_m):=p \Bigl( f^+, f^- \mid g_m \Bigr)$ which assigns high probability to a sample $g_m$ satisfying the approximate bounds while it assigns zero or very low probability otherwise. The fraction of $\frac{ \pi( g_m) }{ \sum_{m=1}^M \pi( g_m) }$ indicates the normalized weight of a sample $m$. We define $\pi(g_m)$ in the next section.

\subsection{The weighting probability $\pi( g_m)$ given $f^{+}$ and $f^{-}$ }\label{subsec:defining_p_target}
Given the data $\bX,\by$, we draw $M$ samples from a \gls{GP} posterior and expect that a \gls{GP} sample $g_m(\cdot)$ naturally follows the
conditions: (i) $f^{-} \lesssim
 g_m(\cdot) \lesssim
 f^{+}$ and
(ii) $\exists \bx_m^+$ such that  $g_m(\bx_m^+) \approx f^{+}$ and $\exists \bx_m^-$ such that $ g_m(\bx_m^-) \approx f^{-}$. Let us denote the maximum and minimum values of the sample by $g^+_m=\max g_{m}(\cdot)$ and $g^-_m=\min g_{m}(\cdot)$ which can be computed efficiently using the off-the-shelf toolboxes, such as multi-start gradient descent \cite{gyorgy2011efficient} because we have the analytical form available for $g_m(\cdot)$, presented in Appendix \ref{appendix_subsec_decouple_gp_posterior}. To satisfy the above conditions, we define the weighting probability $\pi(g_m)$ using an isotropic bivariate Gaussian distribution:
\begin{align} \label{eq:likelihood_gp_sample}
\pi(g_m) & :=\mathcal{N} \Big( \left[g^-_m, g^+_m \right] \mid \left[ f^{-}, f^+ \right],\text{diag}[\eta^2_-,\eta^2_+] \Big).
\end{align}
The mean of $\pi(g_m)$ is located at $[f^{-},f^{+}]$ and the $M$ observations $\left[g^-_m, g^+_m \right]$ are the min and max values of the \gls{GP} posterior samples,  see Fig. \ref{fig:Illustration_2d_prob_space} (left). This bivariate will flexibly collapse into a univariate Gaussian when either $f^{+}$ or $f^{-}$ is observed.  

\begin{table}
\vspace{-0pt} 
\caption{Acceptance ratio over $M=200$ samples from \gls{GP} and \gls{SRGP} respectively. \gls{SRGP} can generate better samples by incorporating $f^{+}$ and $\textrm{Var}[p_{\textrm{base-\gls{SRGP}}}]\le\textrm{Var}[p_{\textrm{base-\gls{GP}}}]$. See Section \ref{sec:experiments} for detail. \label{tab:Acceptance-ratio}}
\vspace{-4pt}

\begin{tabular}{lllll}
\toprule
\multirow{2}{*}{} & \multicolumn{2}{c|}{\small{}$\eta_+=\eta_-=0.5d $} & \multicolumn{2}{c}{\small{}$\eta_+=\eta_-=1d$} \tabularnewline
\cmidrule{2-5} 
 & \multicolumn{1}{c|}{{\small{}\gls{GP}}} & {\small{}\textsc{srgp}} & \multicolumn{1}{c|}{{\small{}\gls{GP}}} & {\small{}\textsc{srgp}}\tabularnewline
\midrule
{\small{}branin d=2} & {\small{}$.39(.2)$} & {\small{}$\mathbf{.89(.2)}$} & {\small{}$.62(.1)$} & {\small{}$\mathbf{.96(.2)}$}\tabularnewline
{\small{}rosenbrock d=2} & {\small{}$.52(.2)$} & \textbf{\small{}$\mathbf{.88(.2)}$} & {\small{}$.62(.2)$} & {\small{}$\mathbf{.95(.2)}$}\tabularnewline
{\small{}mccormick d=2} & {\small{}$.65(.1)$} & {\small{}$\mathbf{.91(.2)}$} & {\small{}$.69(.1)$} & {\small{}$\mathbf{1.0(.1)}$}\tabularnewline
{\small{}hartmann d=3} & {\small{}$.08(.1)$} & {\small{}$\mathbf{.40(.3)}$} & {\small{}$.31(.2)$} & {\small{}$\mathbf{.76(.2)}$}\tabularnewline
{\small{}alpine1 d=5} & {\small{}$.0(.0)$} & {\small{}$\mathbf{.13(.1)}$} & {\small{}$.10(.1)$} & {\small{}$\mathbf{.30(.2)}$}\tabularnewline
{\small{}gSobol d=5} & {\small{}$.01(.1)$} & {\small{}$\mathbf{.46(.3)}$} & {\small{}$.14(.2)$} & \textbf{\small{}$\mathbf{.74(.4)}$}\tabularnewline
\bottomrule
\end{tabular}
\vspace{-5pt}
\end{table}

\paragraph{The view of rejection sampling.}
The weighting scheme above can be seen from the rejection-sampling perspective.
We accept a \gls{GP} sample $g_{m}(\cdot)$ if it lies within a two standard deviation band around the mean of $\pi(g_m)$ as illustrated in Fig. \ref{fig:Illustration_2d_prob_space} (right). Specifying a smaller value for $\eta_+$ will make $\pi(g_m)$ strict and only accept a \gls{GP} sample which has the maximum value $\max g_m(\cdot)$ closer to $f^+$. On the other hand, increasing $\eta_+,\eta_-$ will lead to more accepted samples, but in lower quality.  
We visualize $\pi(g_m)$ using different values of $\eta=\{1,2\}$ in Fig. \ref{fig:Illustration_2d_prob_space} (right) where we have simplified the \textit{2d} space into one dimension for better representation.



\subsection{Square-root \gls{GP} for sampling $g_m$ \label{subsec:Square-root-Transformation-and} }

\begin{minipage}{0.5\textwidth}
\vspace{-15pt}
\begin{algorithm}[H]
	    \caption{\gls{SRGP}-weighting with $f^{+},f^{-}$\label{alg:TS_kov}}
	    
	\begin{algorithmic}[1]
		\STATE {\bfseries Input:}  \#GP sample $M$, data $D$, $f^{+}$, $f^{-}$
		
        	\STATE Learn a \gls{SRGP} posterior model $\mathcal{G}$ from ${D}$ and $f^{+}$
        	
        	\STATE Draw \gls{GP} samples $g_{1},....g_{M}$  from $p_{\textrm{base}}$ by (i) drawing from $\mu_h(\cdot)$ and (ii) computing $\mu_f(\cdot)$ in Eq. (\ref{eq:tgp_mu}) 
        	
        	\STATE Compute $\pi(g_m),\forall m=1,...,M$  by Eq. (\ref{eq:likelihood_gp_sample})

		\STATE {\bfseries Output:} $\hat{g}(\bx) = \sum_{ m=1}^M g_m(\bx)  \pi(g_m)$
	\end{algorithmic}
\end{algorithm}
\vspace{-5pt}

\end{minipage}

We  utilize  Matheron's rule to efficiently draw samples from a \gls{GP} through  $p_{\textrm{base}}$. However, taking a standard \gls{GP} model for $p_{\textrm{base}}$ can lead to unwanted samples being taken, as demonstrated empirically in Table \ref{tab:Acceptance-ratio} especially for high-dimensional functions such as \textit{alpine1} and \textit{gSobol}. Here, we consider a sample is accepted when the density of $\pi(g_m)$ is within two standard deviation of the Gaussian distribution defined in Eq. (\ref{eq:likelihood_gp_sample}) and rejected otherwise. 


Therefore, we propose to use a transformed \gls{GP} surrogate model given $f^{+}$ for $p_{\textrm{base}}$, using a technique presented by \cite{gunter2014sampling}.
Based on Definition \ref{def:bound_random_variable}, we have that $f^+ \in \big[\max f(\bx)-2\eta_+, \max f(\bx)+2\eta_+ \big]$\footnote{$2\eta_+$ is chosen to cover $95\%$ of the probability density} with high probability. Therefore, we can write $ \forall \bx, f(\bx) \le f^+ + 2\eta_+ $. Thus, there exists $h(\cdot)$ satisfying $\forall \bx, f(\bx) = f^+ + 2\eta_+ - ½ h^2(\bx)$. Formally, we define $f(\cdot)$ via $h(\cdot)\sim \gls{GP}$ as follows:
\begin{align}
f(\cdot) & =f^{+} + 2 \eta_+ -\tfrac{1}{2} h^{2}(\cdot), \quad \text{where } h \sim \gls{GP}(m_h,K_h). \label{eq:transformed_GP}
\end{align}
This transformation ensures the resulting
function $f(\cdot)$ to satisfy the bounded constraints $f^+ 	\gtrsim f(\cdot)$ and $\exists\bx^+,f(\bx^+)\rightarrow f^{+}$
when $h(\bx^+)\rightarrow0$. We refer to Fig. \ref{fig:1d_gp_sampling} (bottom right) for an illustration of this transformation.

We estimate a posterior for drawing samples of $h(\cdot)$ as follows. Denote the data in the original space by $D_{f}=\{\bx_{i},y_{i}\}_{i=1}^{N}$, we compute $D_{h}=\{\bx_{i},h_{i}\}_{i=1}^{N}$ where $h_{i}=\sqrt{2(f^{+} + 2 \eta_+  -y_{i})}$. Next we write the posterior of $p\left( h(\bx_*) \mid D_{h} \right) \approx \mathcal{N}\left( \mu_{h}(\bx_*),\sigma^2_{h}(\bx_*) \right)$ with $\mu_{h}(\bx_*)=m_h(\bx_*)+\mathbf{k}^T_{*}\left[\mathbf{K}+\sigma_h^{2}\idenmat\right]^{-1}\left(\bh-m_h\right)$ and $\sigma^{2}\left(\bx_*\right)=k_{**}-\mathbf{k}^T_{*}\left[\mathbf{K}+\sigma_h^{2}\idenmat\right]^{-1}\mathbf{k}_{*}$ where $\sigma_h^2$ is the measurement noise variance of $h(\cdot)$ and other variables are defined analogously to Section \ref{subsec:Sampling-from-GP}. 

\begin{figure*}
\vspace{-3pt}
\begin{centering}
\includegraphics[width=0.48\textwidth]{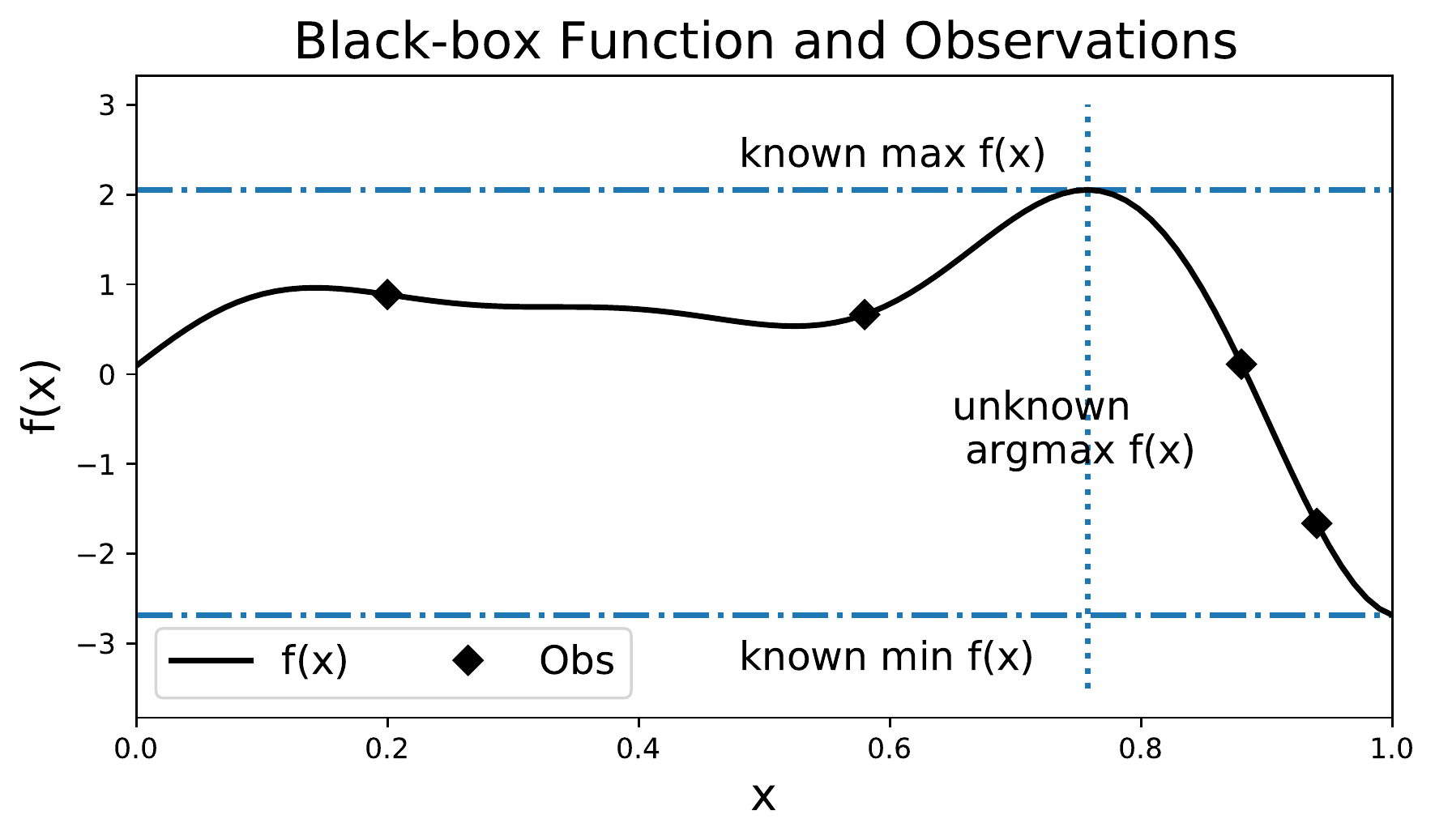}\includegraphics[width=0.48\textwidth]{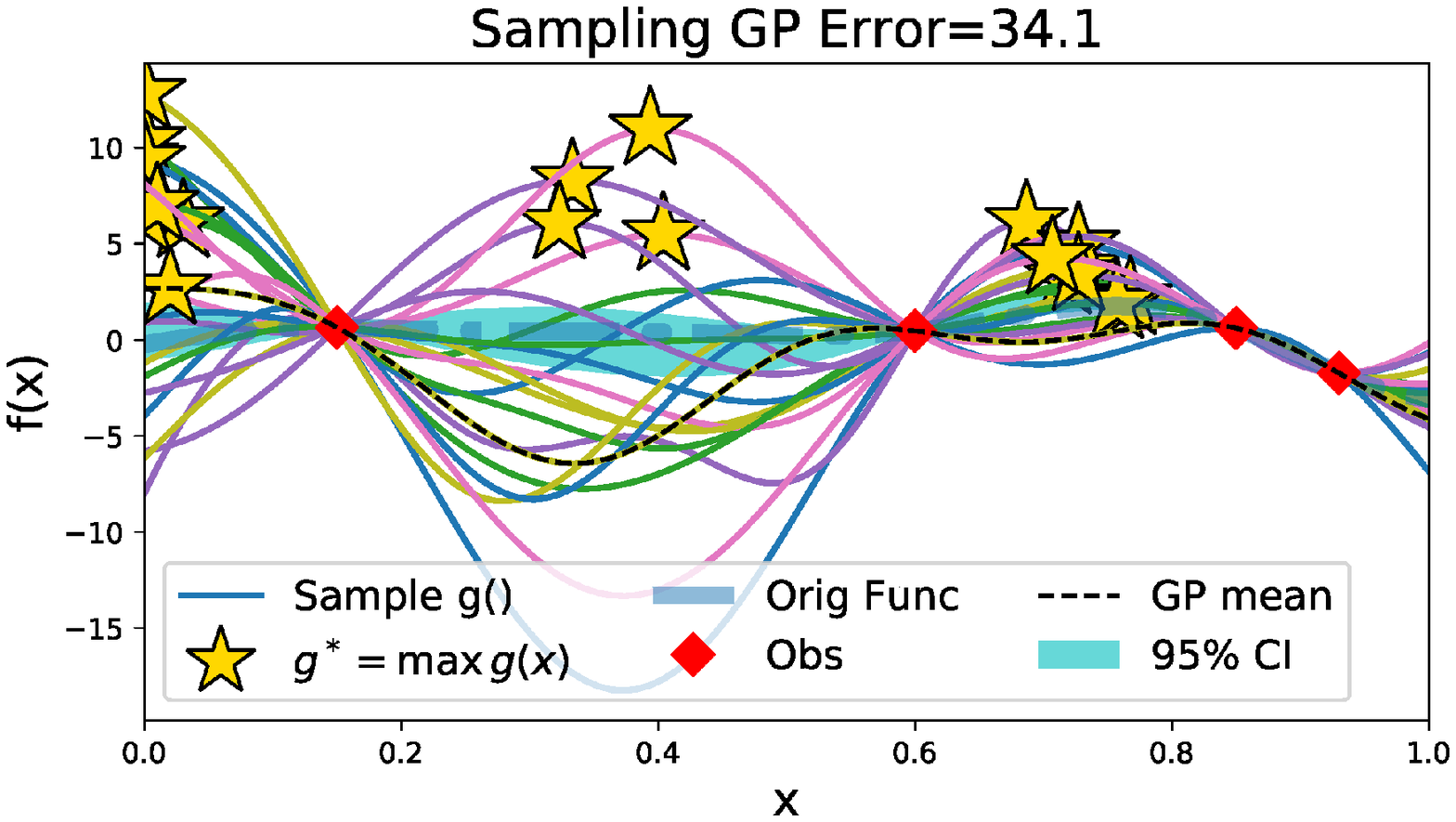}
\par\end{centering}
\vspace{-5pt}
\begin{centering}
\includegraphics[width=0.48\textwidth]{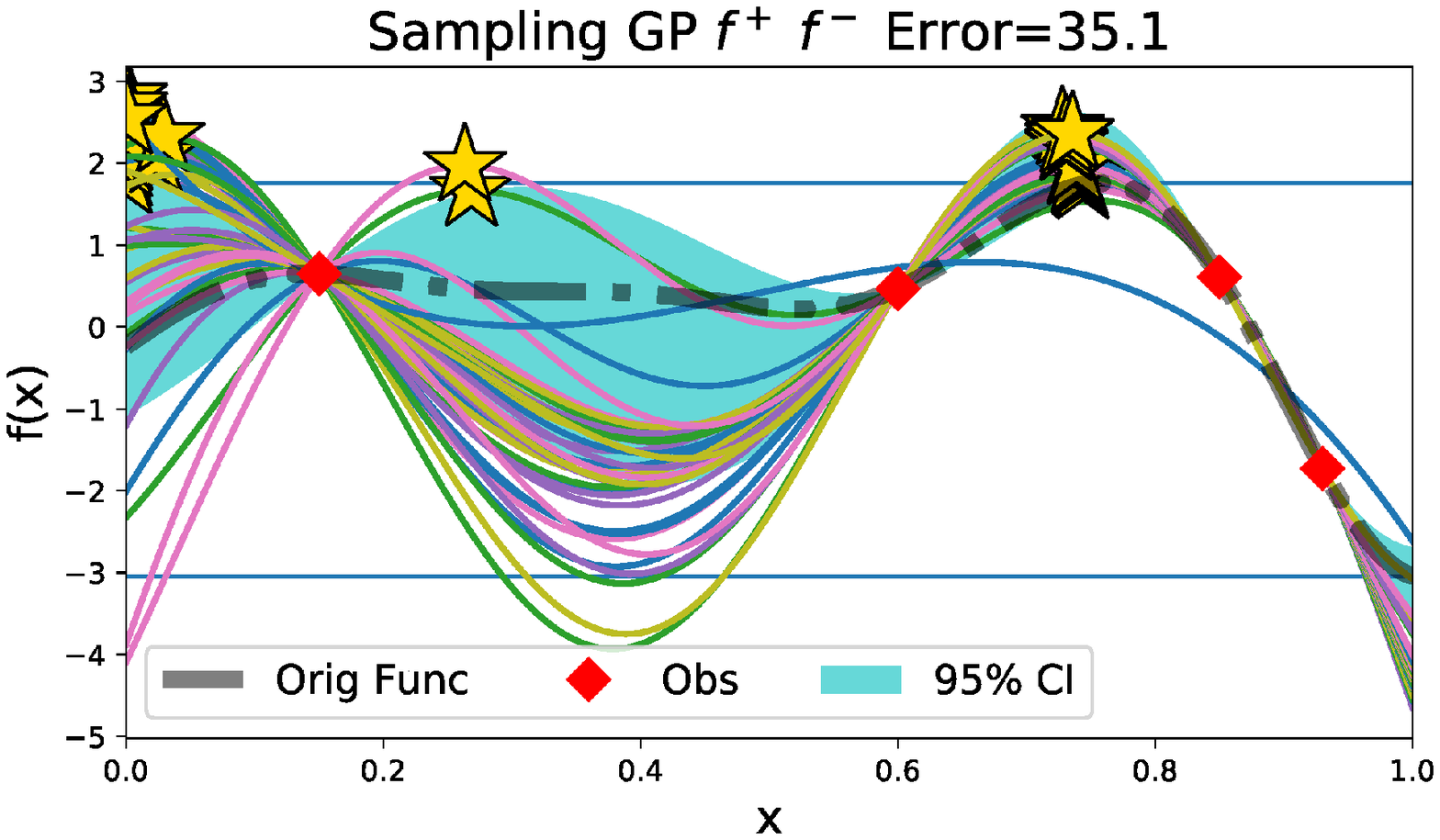}\includegraphics[width=0.48\textwidth]{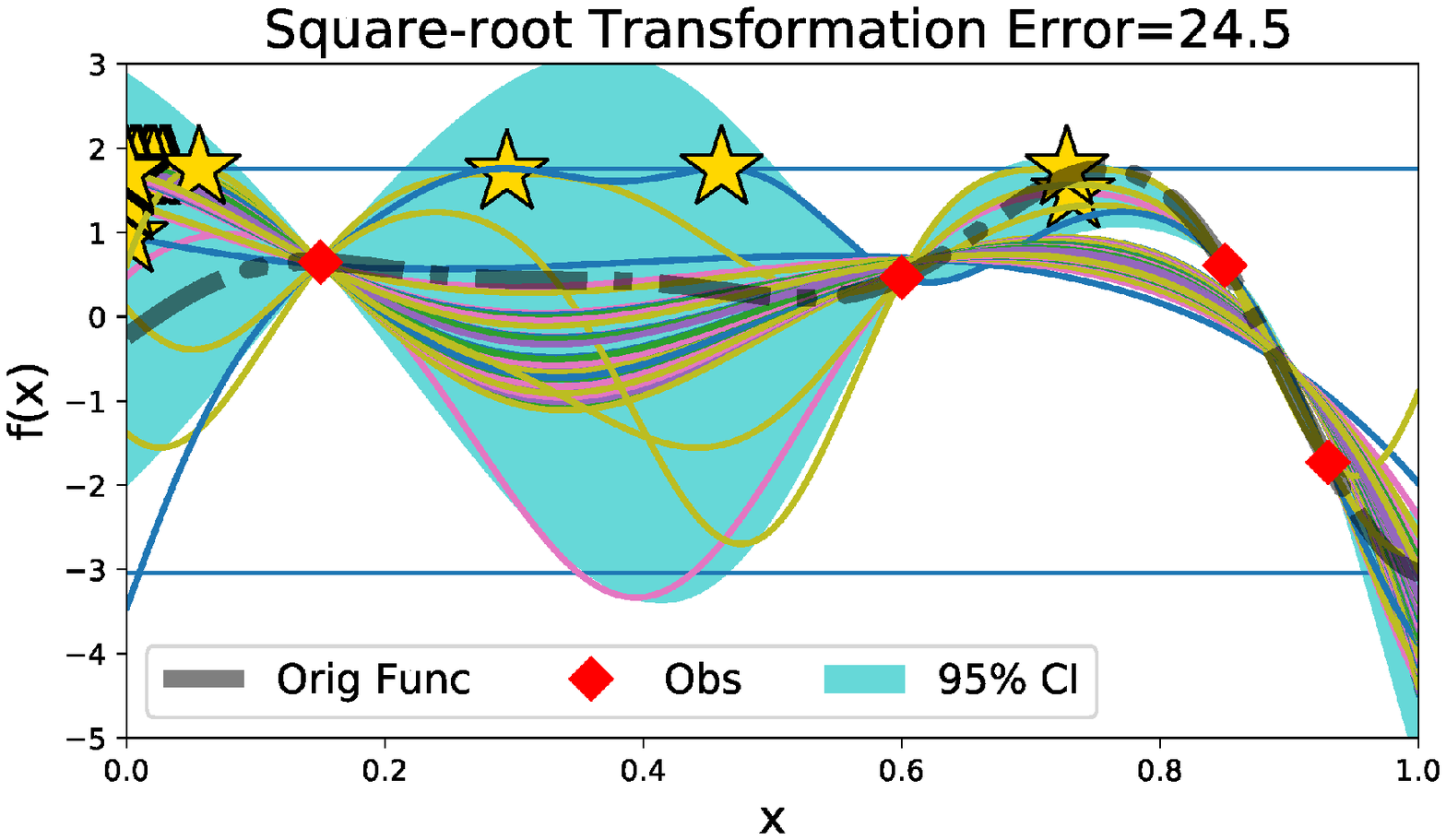}
\par\end{centering}
\vspace{-15pt}
\caption{\textit{Top Right}: Decoupled \gls{GP} sampling \citep{wilson2020efficiently} given
four observations from $f(\protect\bx)$. \textit{Bottom Left}: \gls{GP} sampling and weighting with both $f^{+}$ and $f^{-}$. \textit{Bottom Right}:
\gls{GP} sampling and weighting with square-root transformation (or \gls{SRGP}. Given limited observations,
the existing \gls{GP} sampling can be misleading to fluctuate widely beyond the bounds (see the \textit{y-axis}) while exploiting  $f^{+}$ and $f^{-}$ can result in better \gls{GP} sample.\label{fig:1d_gp_sampling}}
\vspace{-10pt}
\end{figure*}

We draw \gls{GP} samples from $h(\cdot)$ to get $f(\cdot)$ using Eq. (\ref{eq:transformed_GP}). The resulting samples of $f(\cdot)$ follow the upper bound conditions as desired---staying below $f^+$ and reaching (closely) the optimum value. After using $f^{+}$ to sample $f(\cdot)$, we further exploit $f^{-}$ via the weighting scheme using Eq. (\ref{eq:likelihood_gp_sample}).

To estimate the posterior predictive mean and variance in the original space, we use a Taylor expansion to approximate the posterior predictive distribution \citep{gunter2014sampling} $p \big(f(\bx_*) \mid \bX, \by, \eta_+, f^+ \big) \approx \mathcal{N}(\mu_{f},\sigma^2_{f})$ where

\begin{minipage}[t]{.49\textwidth}
\vspace{-20pt}
\begin{align}
\mu_{f}(\bx_*)=f^{+} + 2 \eta_+ - \tfrac{1}{2} \mu_{h}^{2}(\bx_*) \label{eq:tgp_mu}
\end{align}
\end{minipage}
\begin{minipage}[t]{.49\textwidth}
\vspace{-15pt}
\begin{align}
\sigma^2_{f}(\bx_*)=\mu_{h}(\bx_*)\sigma^2_{h}(\bx_*)\mu_{h}(\bx_*).
\label{eq:tgp_sigma}
\end{align}
\end{minipage}

Taylor expansion gives the best possible approximation at the mode which is a multivariate function. This mode is set to $\mu_g$ which is well approximated and thus for the resulting $\mu_f$ and $\sigma_f$. 
We derive the above equations in Appendix \ref{app_sec:posterior_approximation} and summarize the above process in Algorithm \ref{alg:TS_kov}. 

\subsection{Tightening the variances of \gls{GP} posterior samples\label{subsec:Improving-the-error}}




The base distribution $p_\textrm{base}$ for drawing \gls{GP} posterior samples in Eq. (\ref{eq:weighted_averaging}) can be used as (i) a standard \gls{GP} \citep{wilson2020efficiently}, (ii) weighting using \gls{GP} or (iii) weighting using \gls{SRGP} (presented in Section \ref{subsec:Square-root-Transformation-and}), respectively. For brevity, we denote these approaches as  $p_{\gls{GP}}$, $p_{\textrm{w-}\gls{GP}}$ and $p_{\textrm{w-}\gls{SRGP}}$.

We show that our approaches  $p_{\textrm{w-}\gls{GP}}$ and  $p_{\textrm{w-}\gls{SRGP}}$ tighten the sample variance relative to $p_{\gls{GP}}$ ignoring the bounds $f^+$, $f^-$. We state the main theoretical results and refer to Appendix \ref{app_subsec:proof_variance} for the proofs.

\begin{lem}
\label{lem:var_target_better_var_proposal}
In the 2d
space formed by $\left[ f^{+},f^{-} \right]$ and the \gls{GP} posterior samples $\{g^+_m, g^-_m \}_{m=1}^M$,
the sample variances by weighting \gls{SRGP} and \gls{GP} are smaller than the
\gls{GP} posterior sampling \citep{wilson2020efficiently} making no use of the bounds: ${\displaystyle \var\left[p_{\textrm{w-}\gls{GP}}\right]} \le{\displaystyle \var\left[p_{\gls{GP}}\right]}$ and ${\displaystyle \var\left[p_{\textrm{w-}\gls{SRGP}}\right]}\le{\displaystyle \var\left[p_{\gls{GP}}\right]}$.
\end{lem}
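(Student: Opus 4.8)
The plan is to read the weighted-averaging step of Eq.~(\ref{eq:weighted_averaging}) as producing a weighted empirical distribution over the $2d$ points $z_m:=[g^-_m,g^+_m]\in\mathbb{R}^2$ with normalized weights $w_m:=\pi(g_m)/\sum_j\pi(g_j)$, and to compare its weighted sample covariance against the unweighted sample covariance of $\{z_m\}$ drawn from $p_{\textrm{base}}$. As $M$ grows, the weighted empirical measure converges to the tilted distribution with density proportional to $p_{\textrm{base}}(z)\,\pi(z)$, so it is enough to show that this tilted distribution has no larger covariance than $p_{\textrm{base}}$ itself; taking the per-axis variances (or the trace) of the covariance then yields the scalar statement $\var[\cdot]$ appearing in the lemma.

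First I would handle the plain-\gls{GP} case, $p_{\textrm{base}}=p_{\gls{GP}}$. Using the same Gaussian modeling of the pair $z=[g^-,g^+]$ that already underlies Eq.~(\ref{eq:likelihood_gp_sample}), write $p_{\gls{GP}}\approx\mathcal{N}(\mu_b,\Sigma_b)$; since $\pi(g_m)$ is by definition Gaussian in $z$ with covariance $D:=\text{diag}[\eta^2_-,\eta^2_+]$, the product $p_{\gls{GP}}(z)\,\pi(z)$ is, up to normalization, Gaussian with precision $\Sigma_b^{-1}+D^{-1}$, hence covariance $\Sigma_w=(\Sigma_b^{-1}+D^{-1})^{-1}\preceq\Sigma_b$ because $D^{-1}\succeq0$. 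This gives $\var[p_{\textrm{w-}\gls{GP}}]\le\var[p_{\gls{GP}}]$. Repeating the identical computation with $p_{\textrm{base}}=p_{\textrm{base-}\gls{SRGP}}$ gives $\var[p_{\textrm{w-}\gls{SRGP}}]\le\var[p_{\textrm{base-}\gls{SRGP}}]$.

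It then remains to bound $\var[p_{\textrm{base-}\gls{SRGP}}]\le\var[p_{\gls{GP}}]$, i.e. that the square-root transformation of Section~\ref{subsec:Square-root-Transformation-and} does not increase the sampling variance. For this I would use the \gls{SRGP} posterior predictive variance $\sigma^2_f(\bx_*)=\mu_h(\bx_*)\sigma^2_h(\bx_*)\mu_h(\bx_*)$ from Eq.~(\ref{eq:tgp_sigma}) and compare it, under matched kernel hyperparameters, with the plain-\gls{GP} posterior variance at the same input; equivalently, the transformation $f(\cdot)=f^++2\eta_+-\tfrac{1}{2}h^2(\cdot)$ confines every sample to the half-space $f(\cdot)\le f^++2\eta_+$, discarding the ``redundant'' region (the orange area in Fig.~\ref{fig:Illustration_2d_prob_space}, left), and a restriction to a half-space can only shrink the spread of the induced pair $[g^-,g^+]$. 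Chaining this with the previous inequality yields $\var[p_{\textrm{w-}\gls{SRGP}}]\le\var[p_{\gls{GP}}]$, completing the argument.

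The step I expect to be the main obstacle is making two approximations rigorous. First, the exact law of the minimum/maximum of a \gls{GP} sample is not Gaussian, so the clean product-of-Gaussians inequality $\Sigma_w\preceq\Sigma_b$ must either be invoked under the same Gaussian surrogate already assumed for $\pi(g_m)$, or replaced by a direct second-moment computation showing that the reweighting contracts the covariance without requiring full Gaussianity. Second, the comparison $\var[p_{\textrm{base-}\gls{SRGP}}]\le\var[p_{\gls{GP}}]$ rests on the Taylor-expansion posterior $\sigma^2_f=\mu_h\sigma^2_h\mu_h$ and on the transformed data $h_i=\sqrt{2(f^++2\eta_+-y_i)}$, so it has to be stated carefully (matched hyperparameters, or via the geometric half-space argument) rather than as an unconditional identity.
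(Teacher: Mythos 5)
Your route is genuinely different from the paper's. The paper proves both inequalities with a single tool: Popoviciu's inequality applied coordinate-wise in the $[f^+-\max g_m,\,f^--\min g_m]$ plane, arguing that weighting by $\pi(g_m)$ shrinks the support $[L^{(d)},U^{(d)}]$ in each dimension and hence shrinks the bound $(U^{(d)}-L^{(d)})^2/4$; the same range argument is then applied directly to $p_{\textrm{w-}\gls{SRGP}}$ versus $p_{\gls{GP}}$. Your Gaussian-tilting argument, $(\Sigma_b^{-1}+D^{-1})^{-1}\preceq\Sigma_b$, buys something the paper's proof does not: an ordering of the variances themselves rather than of their Popoviciu upper bounds (the paper's inference from $U-L$ shrinking to $\var$ shrinking is really only a comparison of worst-case variances). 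The price is the Gaussian surrogate for the law of $[g^-_m,g^+_m]$, which you correctly flag.

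The concrete gap is your third link, $\var[p_{\textrm{base-}\gls{SRGP}}]\le\var[p_{\gls{GP}}]$, which the paper never needs and which is doubtful as stated. The transformation $f(\cdot)=f^++2\eta_+-\tfrac{1}{2}h^2(\cdot)$ truncates only the \emph{upper} tail of the samples; in the $g^-$ coordinate the map $h\mapsto f$ sends large $|h|$ to arbitrarily negative values, so the unweighted \gls{SRGP} base can have \emph{larger} spread in the second coordinate than the plain \gls{GP} (the paper's own proof of Lemma \ref{lem:var_SRGP_better_var_GP} concedes that the bounds in the second dimension are unchanged, and only the first coordinate tightens). Your half-space argument therefore does not close this step. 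The fix is to drop the chaining through the unweighted \gls{SRGP} base and compare $p_{\textrm{w-}\gls{SRGP}}$ to $p_{\gls{GP}}$ directly — either by your tilting argument applied to the \gls{SRGP} base (the weighted law still has precision at least $D^{-1}$ added, so it is dominated by any Gaussian surrogate whose covariance dominates both bases), or by the paper's range argument, which only needs that the weighted samples are confined to a band of width $O(\eta_\pm)$ around $[f^-,f^+]$ regardless of which base generated them.
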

To intuitively understand our analysis, we visualize these variances in the 2d space in Fig. \ref{fig:Illustration_2d_prob_space} (left), created 
by $\left[ f^{+},f^{-} \right]$ and  $\{g^+_m, g^-_m \}_{m=1}^M$ taken from \gls{GP} posterior samples. In Lemma \ref{lem:var_target_better_var_proposal},
the sample variances  using our approaches will be smaller than the approach in \cite{wilson2020efficiently}, which makes no use of the bounds. This highlights the benefit of using the bounds to eliminate bad samples, which violate the bounds, to obtain a smaller sample variance.

Next, we show in Lemma \ref{lem:var_SRGP_better_var_GP} that using $p_{\textrm{w-}\gls{SRGP}}$
will be more sample-efficient than using $p_{\textrm{w-}\gls{GP}}$. In other words, we get more accepted samples by using the weighting \gls{SRGP}, given the same number of posterior samples $M$.

\begin{lem}
\label{lem:var_SRGP_better_var_GP}
In the 2d space formed
by $\left[ f^{+},f^{-} \right]$ and the \gls{GP} posterior samples $\{g^+_m, g^-_m \}_{m=1}^M$,
the sample variance of weighting \gls{SRGP} is smaller than weighting \gls{GP}: $\var\left[p_{\textrm{w-\gls{SRGP}}}\right]\le{\displaystyle \var\left[p_{\textrm{w-\gls{GP}}}\right]}$.
\end{lem}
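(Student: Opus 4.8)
\emph{Proof proposal.} The plan is to reduce the statement to a Loewner-order comparison between the covariances of the two \emph{base} sampling laws, reusing the machinery from the proof of Lemma~\ref{lem:var_target_better_var_proposal}. Both $p_{\textrm{w-}\gls{GP}}$ and $p_{\textrm{w-}\gls{SRGP}}$ are obtained by multiplying a base law on the $2$d space $[f^-,f^+]$ --- namely $p_{\gls{GP}}$ and the square-root \gls{GP} of Section~\ref{subsec:Square-root-Transformation-and}, respectively --- by the \emph{same} Gaussian weighting kernel $\pi$ of Eq.~(\ref{eq:likelihood_gp_sample}), whose covariance is $\Lambda=\diag[\eta^2_-,\eta^2_+]$, and renormalizing. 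Approximating each base law by its moment-matched (Laplace) Gaussian, $p_{\textrm{base-}\gls{GP}}\approx\mathcal{N}(\cdot,\Sigma_{\gls{GP}})$ and $p_{\textrm{base-}\gls{SRGP}}\approx\mathcal{N}(\cdot,\Sigma_{\gls{SRGP}})$ --- the same kind of Gaussian approximation the paper already commits to in deriving Eqs.~(\ref{eq:tgp_mu})--(\ref{eq:tgp_sigma}) --- the weighted laws are Gaussian with covariances $(\Sigma_{\gls{GP}}^{-1}+\Lambda^{-1})^{-1}$ and $(\Sigma_{\gls{SRGP}}^{-1}+\Lambda^{-1})^{-1}$. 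Since $A\mapsto(A^{-1}+\Lambda^{-1})^{-1}$ is operator-monotone on positive-definite matrices, it suffices to prove $\Sigma_{\gls{SRGP}}\preceq\Sigma_{\gls{GP}}$; restricting to the $f^+$-marginal (the $(2,2)$ entry) or taking the trace then gives $\var\left[p_{\textrm{w-}\gls{SRGP}}\right]\le\var\left[p_{\textrm{w-}\gls{GP}}\right]$.

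Next I would establish $\Sigma_{\gls{SRGP}}\preceq\Sigma_{\gls{GP}}$. A base sample contributes the point $\left(g^-_m,g^+_m\right)=\left(\min_{\bx}g_m(\bx),\max_{\bx}g_m(\bx)\right)$; applying the delta method at the extremum, its coordinate variances are, to first order, the posterior predictive variances of the base model evaluated at the (near-deterministic) argmin/argmax. For the $f^+$ coordinate this is $\sigma^2(\bx^+)$ under $p_{\gls{GP}}$ but $\sigma^2_f(\bx^+)=\mu_h^2(\bx^+)\,\sigma^2_h(\bx^+)$ under the square-root \gls{GP}, by Eq.~(\ref{eq:tgp_sigma}). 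The transform (\ref{eq:transformed_GP}) forces $g^+_m\le f^{+}+2\eta_+$ deterministically and drives $\mu_h\to0$ near a maximizer, so the \gls{SRGP} predictive variance \emph{collapses} exactly in the region where $g^+_m$ is realized, whereas the \gls{GP} predictive variance there is generically bounded away from zero; in terms of Fig.~\ref{fig:Illustration_2d_prob_space}~(left), \gls{SRGP} has simply removed the ``redundant'' mass that a standard \gls{GP} places above $f^{+}+2\eta_+$ (the orange region). This yields the $f^+$-coordinate inequality. Because $\Lambda$ is diagonal in $[f^-,f^+]$ the $f^-$ coordinate is governed by the weighting alone and the cross term is a higher-order correction, so the per-coordinate comparison assembles into the matrix inequality $\Sigma_{\gls{SRGP}}\preceq\Sigma_{\gls{GP}}$.

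The step I expect to be the main obstacle is the informal claim ``a base law concentrated inside the admissible region yields a more concentrated weighted law'': for general, possibly multimodal, base densities neither restricting the support to $\{g^+\le f^++2\eta_+\}$ nor multiplying by a likelihood is guaranteed to shrink the variance, so this cannot be argued non-parametrically. The Gaussian/Laplace approximation of the two base laws is precisely what makes the argument go through, after which the whole comparison is an elementary monotonicity fact about sums of precisions (Schur complements); one should state this approximation explicitly as the standing assumption, noting it is the same one already used for Eqs.~(\ref{eq:tgp_mu})--(\ref{eq:tgp_sigma}). A secondary subtlety is that the square-root transform in (\ref{eq:transformed_GP}) only exploits $f^+$, so one must check it does not inflate the spread of the $g^-$ coordinate enough to overturn the comparison --- the cleanest fixes are either to state the inequality for the $f^+$-marginal, or to invoke the diagonal structure of $\Lambda$ together with the fact that the dominant term is the $f^+$-direction effect.
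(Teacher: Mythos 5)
Your proposal takes a genuinely different route from the paper, and it does not close. The paper's own proof is far more elementary: it works coordinate-wise in the 2d space, observes that the square-root transform of Eq.~(\ref{eq:transformed_GP}) imposes the \emph{deterministic} cap $g^+_m \le f^{+}+2\eta_+$, so the support of the first coordinate under $p_{\textrm{w-}\gls{SRGP}}$ is contained in that under $p_{\textrm{w-}\gls{GP}}$ while the second coordinate's support is unchanged, and then invokes the two-dimensional Popoviciu bound of Lemma~\ref{app_lem:(Popoviciu's-inequality-)}, $\var[l_d]\le (U_d-L_d)^2/4$, to order the variances. No Gaussian approximation of the base laws, no precision addition, and no operator monotonicity enter the paper's argument. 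The support-containment observation does appear in your write-up (the ``orange region'' remark), but you do not make it the load-bearing step.

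The genuine gap in your route is the matrix inequality $\Sigma_{\gls{SRGP}}\preceq\Sigma_{\gls{GP}}$, which carries the entire proof once you commit to moment-matched Gaussian base laws. Your delta-method argument addresses only the $g^+$ coordinate, and even there it is heuristic: $\left(g^-_m,g^+_m\right)$ are extrema of a random function, not evaluations at a fixed input, so their variability across samples is not the posterior predictive variance at ``the'' argmax, and $\mu_h\to 0$ at a maximizer holds only for samples that actually attain the cap. More importantly, you explicitly leave the $g^-$ coordinate open --- the transform uses only $f^+$ and gives no control there --- and your suggested fixes (restrict to the $f^+$-marginal, or declare the cross term higher order) change the statement rather than prove it: without the full Loewner inequality the operator-monotonicity step has nothing to act on and the trace comparison fails. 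The standing Gaussian assumption is also not one the lemma makes; the paper needs only boundedness of the supports. The cleanest repair is to drop the Laplace-approximation machinery, use the deterministic cap to show range containment in the first coordinate and range equality in the second, and then bound each coordinate's variance by its range via Lemma~\ref{app_lem:(Popoviciu's-inequality-)} --- which is exactly what the paper does.
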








\section{Bounded entropy search for Bayesian optimization given $f^{+}$ and $f^{-}$} \label{sec:BES}



The knowledge of $f^+$ and $f^-$ is also applicable and useful to inform and speed up \gls{BO}. Given a collection of \gls{GP} samples representing the underlying function $f(\cdot)$, each satisfies the bounds at different degrees. For \gls{BO} setting, we can not take all of them, but to select a single point for evaluation among these $M$ candidates.  We derive an acquisition function to find a single point maximizing the information gain about `good' \gls{GP} posterior samples satisfying the bounds.

\begin{minipage}{0.5\textwidth}
\vspace{-10pt}
\begin{algorithm}[H]
	    \caption{\gls{BO} with \gls{BES} given $f^{+},f^{-}$\label{alg:BO_KOV}}
	\begin{algorithmic}[1]
		\STATE {\bfseries Input:}  \#iter $T$, data $D_0$,    $f^{+}$,  $f^{-}$
	
        \FOR{$t=1, \dots, T$}
        	\STATE Draw \gls{GP} samples $g_{1},....,g_{M}$ given $D_{t-1}$
        	\STATE $\bx_{t}=\arg\max\alpha^{\text{BES}} (\bx\mid g_1,...,g_M)$ by Eq. (\ref{eq:final_acq_BES})
        	\STATE Evaluate $y_{t}=f(\bx_{t})$ and $D_{t}=D_{t-1}\cup\left(\bx_{t},y_{t}\right)$
    	\ENDFOR
    	
		\STATE {\bfseries Output:}  $\arg\max_{\bx\in\mathcal{X}}\mu_{T}\left(\bx\mid {D_{T}}\right)$

	\end{algorithmic}
\end{algorithm}
\end{minipage}

\paragraph{Bounded entropy search (\gls{BES}). \label{subsec:Decision-function}}
Let us write $\bx_{m}^{+}:=\arg\max_{\bx}g_m(\bx)$ as the maximum location of the $m$-th sample from a \gls{GP} posterior and 
$g_{m}^{+}=\max g_m(\bx) = g_m(\bx_{m}^{+}) , m=1,...,M$
be the corresponding maximum value. We propose to select a next point $\bx_{t} := \arg\max_{\bx \in\mathcal{X}} \alpha^{\gls{BES}}(\bx)$ by maximizing the  mutual information given the optimum samples $\{g_{m}^{+},\bx_{m}^{+}\}$ and $f^{+}$, $f^{-}$, i.e., 
\begin{align} 
\alpha^{\gls{BES}}(\bx) := \mathbb{I} \big(y_{\bx},\{g_{m}^{+}, \bx_m^+\}\mid \bX, \by,f^{+},f^{-} \big).
\end{align}
Our acquisition function aims at gaining the most information about the \gls{GP} posterior samples $g_m(\cdot)$. Equivalently it  learns the most about $f(\cdot)$, especially at the locations giving the outputs closer to the known optimum values $f^+$. Although the idea of gaining information about the optimums has been extensively used in recent works \citep{Hennig_2012Entropy,Hernandez_2014Predictive,Wang_2017Max}, we are \textit{the first} to extend it to situations when the knowledge about $f^+$ and $f^-$ is available. 

Denote $y_{\bx}:=\mathbb{E}[f_{\bx}\mid\bX,\by,\bx]$ and $\Lambda:=\{\bX,\by,f^{+},f^{-}\}$ for brevity, we can write the mutual information using the KL divergence as $\mathbb{I}(y_{\bx},\Upsilon_m \mid \Lambda)$
\vspace{-5pt}
\small
\begin{align} 
  \approx \: & \frac{1}{M} \sum_{ g_{m}^{+},\bx_{m}^{+} \sim p_{\textrm{base}}} \int p\left(g_{m}^{+},\bx_{m}^{+}, y_\bx\mid \Lambda \right) \log\frac{p\left(y_{\bx},g_{m}^{+},\bx_{m}^{+}\mid \Lambda\right)}{p\left(y_{\bx}\mid \Lambda \right) p\left(g_{m}^{+},\bx_{m}^{+}\mid \Lambda\right)} dy_\bx\nonumber \\ 
= \: & \frac{1}{M} \mathbb{E}_{p(y_{\bx}\mid \Lambda)} \left[\sum_{ g_{m}^{+},\bx_{m}^{+}}\hspace{-3mm} p\left(g_{m}^{+},\bx_{m}^{+}\mid y_{\bx},\Lambda \right)\log\frac{p\left(g_{m}^{+},\bx_{m}^{+}\mid y_{\bx}, \Lambda\right)}{ p\left(g_{m}^{+},\bx_{m}^{+}\mid \Lambda \right)} \right]   \label{eq:final_acq_BES}
\end{align}
\normalsize
where $\{g_{m}^{+},\bx_{m}^{+}\} \sim p_{\textrm{base}}$ is generated as follows: $g_m \sim p_{\textrm{base}}$, $\bx_{m}^{+}=\arg \max g_m(\cdot)$, $g^+_m = g_m(\bx^+_m)$ and $y_\bx \approx g_m(\bx)$. We have used the product rule $p(y_{\bx},g_m^+,\bx_m^+\mid \Lambda)=p(y_{\bx} \mid \Lambda)p(g_m^+,\bx_m^+ \mid y_\bx, \Lambda)$ to simplify the terms. Although $\bx_m^+$ and $g_m^+$ can be estimated efficiently given the analytical form of $g_m(\cdot)$, we are unable to compute the objective function in Eq. (\ref{eq:final_acq_BES}) analytically, but with approximations.  

\begin{figure*}
\vspace{-7pt}
\begin{centering}
\includegraphics[width=0.325\textwidth]{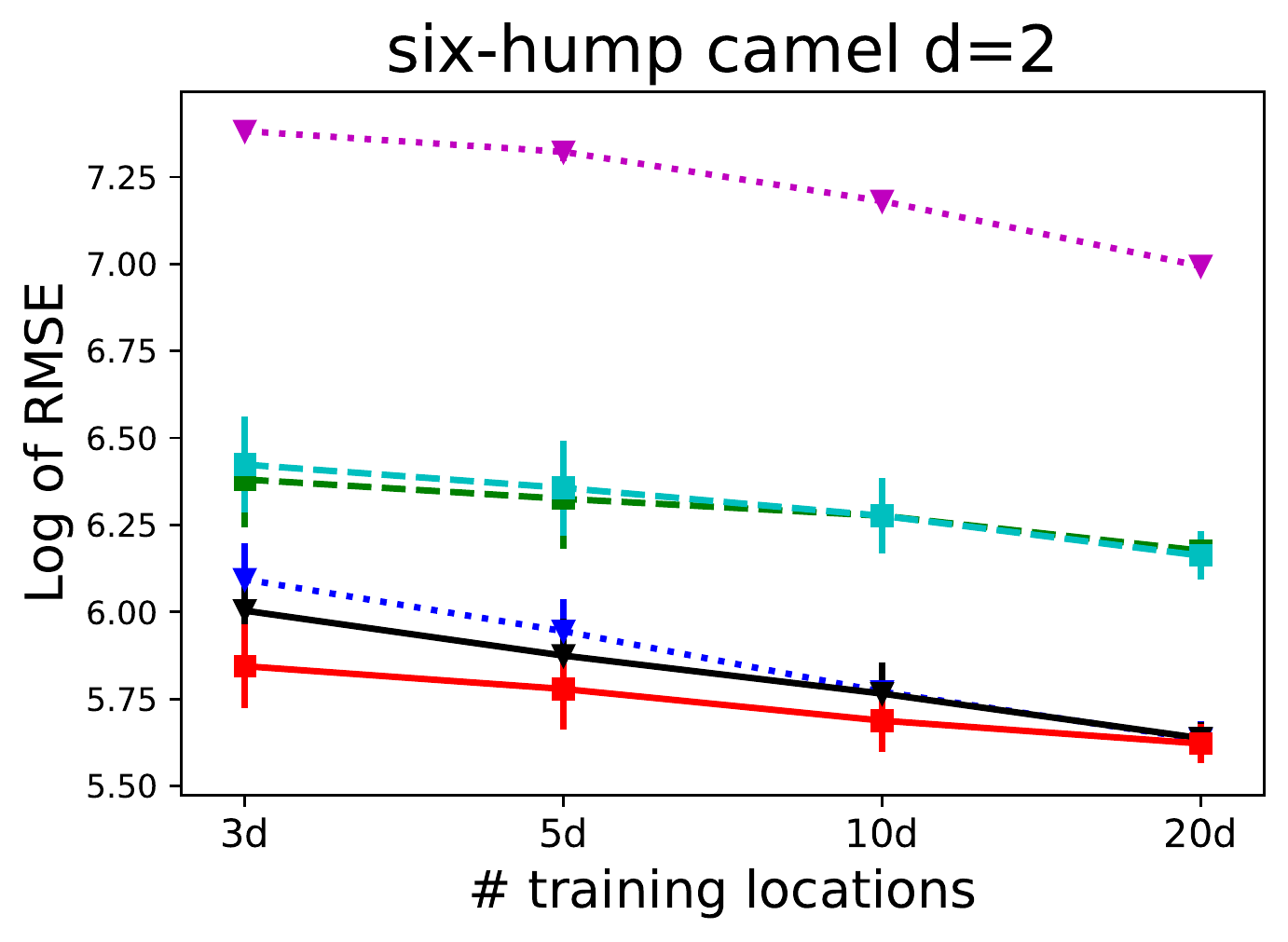}
\includegraphics[width=0.325\textwidth]{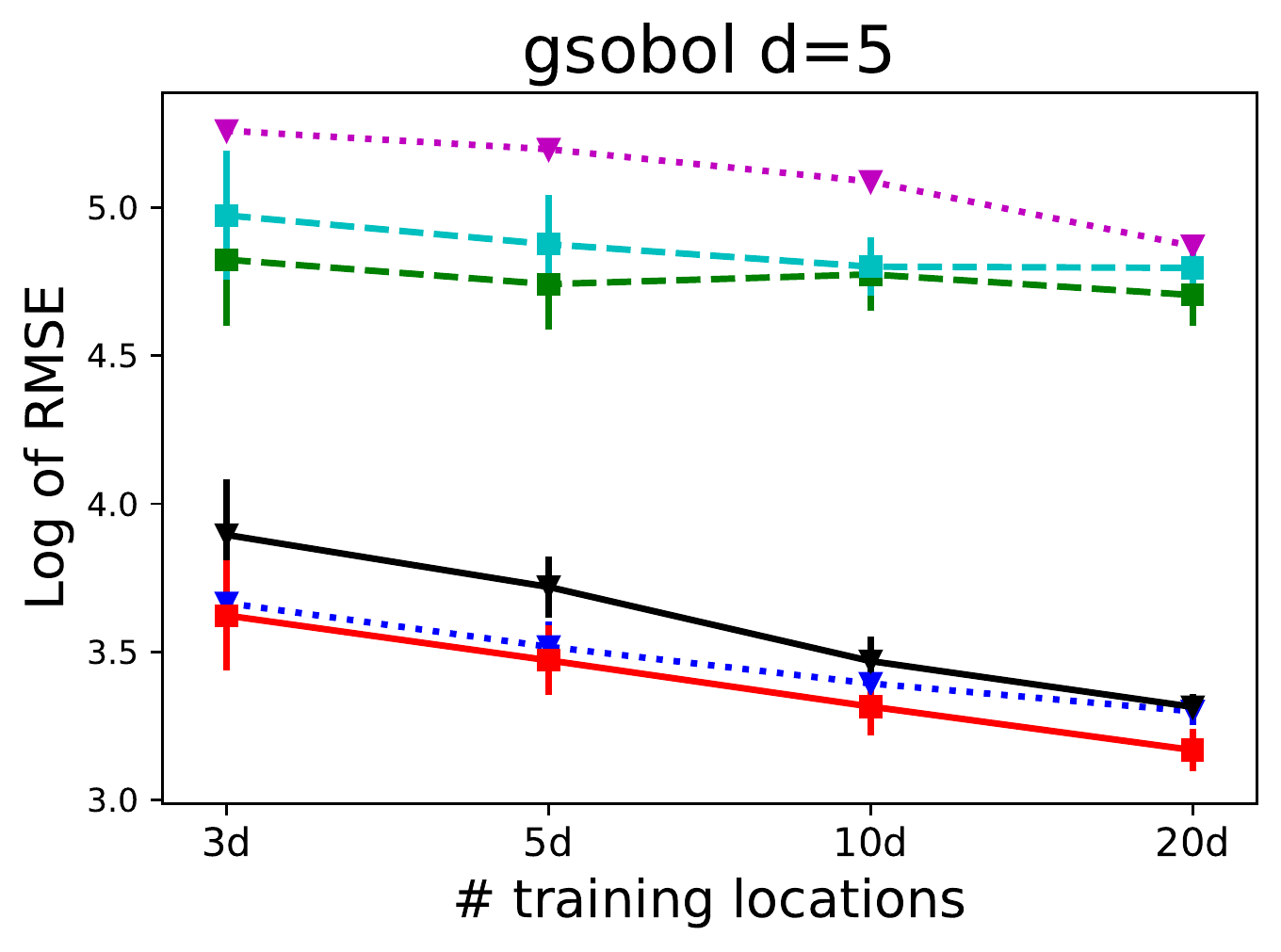}
\includegraphics[width=0.325\textwidth]{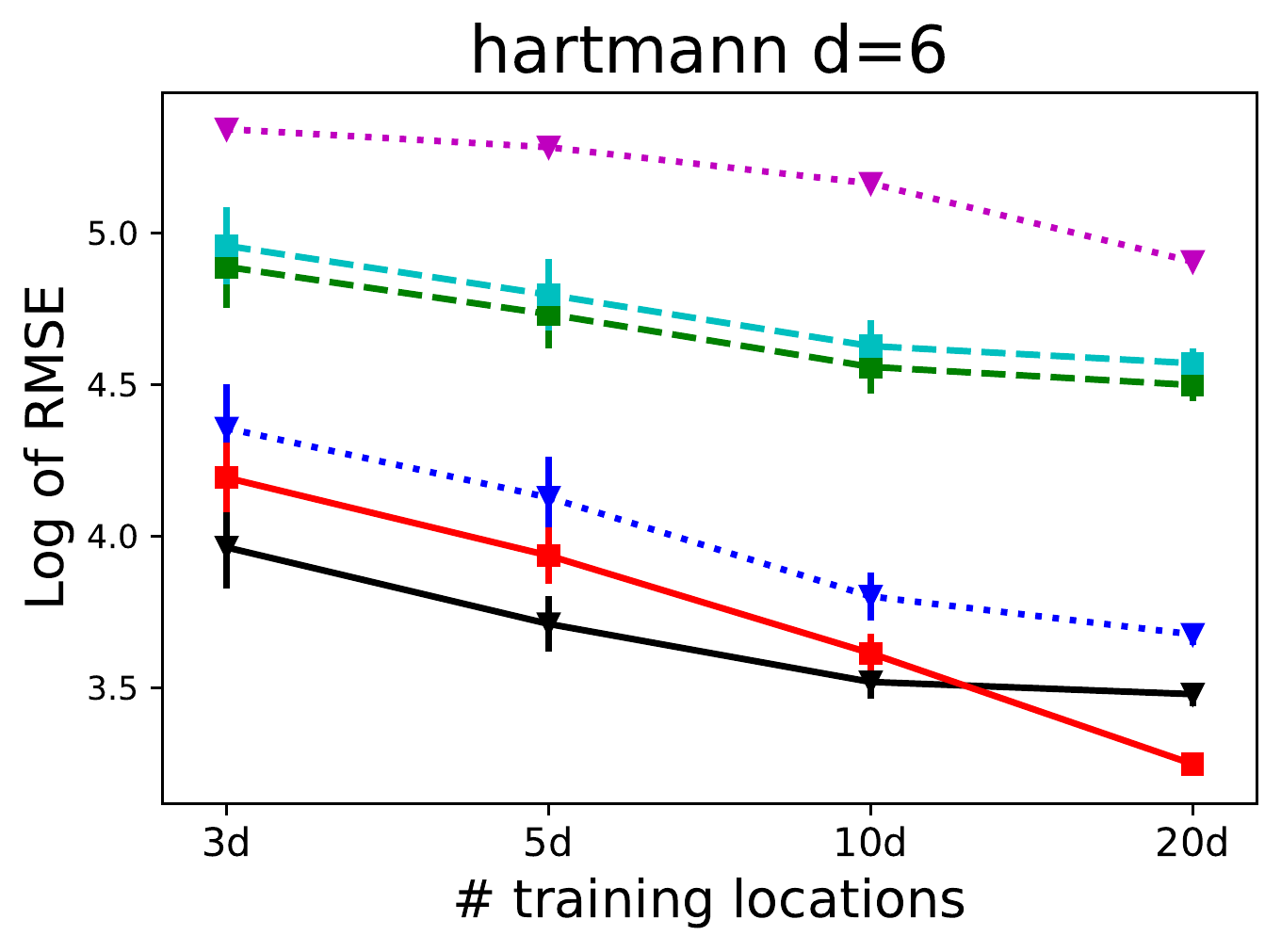}
\vspace{-5pt}
\includegraphics[width=0.82\textwidth]{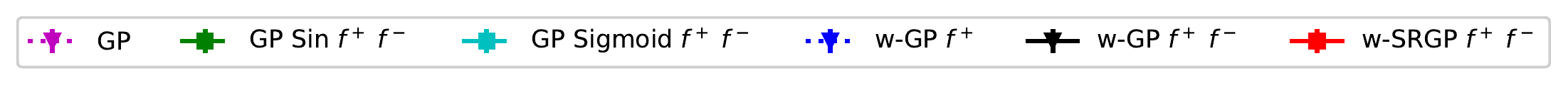}
\par\end{centering}
\vspace{-7pt}
\caption{Exploiting external knowledge about $f^{+}$ and $f^{-}$ leads to better \gls{GP} posterior sampling.  \label{fig:gp_sampling_comparison}}
\vspace{-9pt}
\end{figure*} 

Given $\pi(g_m)=\mathcal{N} \big( \left[g^-_m, g^+_m \right] \mid \left[ f^{-}, f^+ \right],\text{diag}[\eta^2_-,\eta^2_+] \big)$ in Eq. (\ref{eq:likelihood_gp_sample}), we want to quantify the influence of a new data point $\{\bx,y_\bx\}$ on a given \gls{GP} posterior sample. We write the likelihood with and without the presence of $\bx$. 
\begin{align} \label{eq:BES_density_gmax_given_yx}
p\left(  g_m^+,\bx_m^+  \mid \Lambda\right)  &=  \textcolor{blue}{ p\left(  g_{m}^{+} \mid \bx_{m}^{+}, \Lambda\right) }    \textcolor{orange}{ p \left( \bx_{m}^{+}\mid  \Lambda\right) }  \\ 
&\approx  \textcolor{blue}{ \pi(g_m) } \textcolor{orange}{ \mathcal{N} \left( g_{m}(\bx^+_m) \mid \mu(\bx^+_m), \sigma^2(\bx_{m}^{+}) \right) } \nonumber
\\ 
p\left( g_m^+,\bx_m^+ \mid y_{\textcolor{red}{\bx}}, \Lambda \right) &= p\left(  g_{m}^{+} \mid \bx_{m}^{+}, y_{\textcolor{red}{\bx}}, \Lambda\right)    p \left( \bx_{m}^{+}\mid y_{\textcolor{red}{\bx}}, \Lambda\right)  \label{eq:BES_density_gmax} \\
&\approx  \pi(g_m)  \mathcal{N} \left( g_{m}(\bx^+_m) \mid \mu_{\textcolor{red}{\bx}}(\bx^+_m), \sigma^2_{ \textcolor{red}{\bx}}(\bx_{m}^{+}) \right) \nonumber
\end{align}
where $\mu_{\bx}(\bx_{m}^{+})$,  $\mu(\bx_{m}^{+})$, $\sigma^2_{\bx}(\bx_{m}^{+})$, and $\sigma^2(\bx_{m}^{+})$\footnote{ we use a subscript $\sigma_{\bx}(\cdot)$ to indicate the inclusion of $\bx$.} are the \gls{GP} posterior predictive means and variances at $\bx_{m}^{+}$ with and without the inclusion of $\bx$. These predictive quantities are defined in Eqs. \eqref{eq:tgp_mu} and \eqref{eq:tgp_sigma}.  Plugging Eqs. (\ref{eq:BES_density_gmax_given_yx}) and \eqref{eq:BES_density_gmax} into Eq. (\ref{eq:final_acq_BES}), we obtain the final form of $\alpha^{\gls{BES}}(\bx)$.

In Eq. (\ref{eq:BES_density_gmax_given_yx}), the likelihood $\textcolor{orange}{p \left( \bx_{m}^{+}\mid  \Lambda\right)}$ can be defined to a prior knowledge about the input $\bx^+_m$ if available, such as \cite{li2020incorporating,souza2021bayesian}. Instead of using uniform prior, we can have better estimation by using the GP predictive mean $\mu(\bx^+_m)$ and variance $\sigma^2(\bx_{m}^{+})$  available to quantify the likelihood of the location of interest $\bx^+_m$ being the optimum (given the data). 

Without knowing the true output at a test location $\bx$, we can simplify the \gls{GP} predictive mean $\mu_{\bx}(\bx^+_m) \approx \mu(\bx^+_m)$.  This simplification saves computation cost for updating the \gls{GP} predictive mean at each considered location $\bx$.


The term $\pi(g_m)$ in Eq. (\ref{eq:BES_density_gmax}) will be retained when plugging into Eq. (\ref{eq:final_acq_BES}). Thus, $\pi(g_m)$ will remove any \gls{GP} sample which does not follow our bounded constraints. As a result, our decision only gains information about the `good' samples. In Appendix \ref{subsec:BES-without-rejection_ablation}, we provide an ablation study by accepting all samples (without weighting step), which drops the performance significantly.






\vspace{-4pt}

\paragraph{Explainable decision.}
The point selected by Eq. (\ref{eq:final_acq_BES}) will gain the maximum information
about the posterior samples. Our decision can be interpretable in each single term. The acquisition function value in Eq. (\ref{eq:final_acq_BES}) is high when the density $\frac{p\left(g_m^+,\bx_m^+\mid y_{\bx},\Lambda\right)}{p\left(g_m^+,\bx_m^+\mid \Lambda\right)}$ and $p\left(g_m^+,\bx_m^+\mid y_{\bx},\Lambda\right)$ are high, defined in Eqs. (\ref{eq:BES_density_gmax_given_yx}) and \eqref{eq:BES_density_gmax}. 
The first term $\frac{p\left(g_m^+,\bx_m^+\mid y_{\bx},\Lambda\right)}{p\left(g_m^+,\bx_m^+\mid \Lambda\right)} \propto \mathcal{N}(g_{m}^{+} \mid \mu(\bx^+_m),\left[\sigma^2_{\bx}(\bx_{m}^{+})^{-1}-\sigma^2(\bx_{m}^{+})^{-1}\right]^{-1})$ is high when $\sigma^2_{\bx}(\bx_{m}^{+}) \ll \sigma^2(\bx_{m}^{+})$ given that $\mathbb{E} [g^+_m] = \mu(\bx^+_m)$. To reduce this predictive uncertainty of $\sigma^2_{\bx}(\bx^+_m)$, our acquisition function encourages to place a point $\bx$ at the perceived optimum location $\bx^+_m$. Similarly, the second term $p\left(g_m^+,\bx_m^+\mid y_{\bx},\Lambda\right)$ takes high value when $\pi(g_m)$ is high and $\sigma^2_{ \bx}(\bx_{m}^{+})$ is small. This encourages (i) the sampled \gls{GP} has the maximum value $g_{m}^{+}$ consistent with $f^{+}$ and (ii) taking a location at $\bx_{m}^{+}$. 
 We refer to  Appendix  \ref{sec:Illustration-of-GES} for the illustration of the $\alpha^{\gls{BES}}$ and  \ref{app_sec:implementation_complexity} for the implementation discussion. We summarize all steps for computing \gls{BES} in Algorithm \ref{alg:BO_KOV}.

\section{Experiments} 
\label{sec:experiments}

We demonstrate the two claims in exploiting the knowledge of $f^{+}$and $f^{-}$ for improving \gls{GP} posterior sampling and \gls{BO}. We also perform ablation studies by varying the misspecified levels of such bounds for each setting. All experiments  are averaged over $30$ independent runs, the number of \gls{GP} samples $M=200$,\footnote{The ablation study with different choices of $M\in \{20,50,200,300,500\}$ is available in Appendix \ref{app_subsec:wrtM}} $\eta^2_+=0.02d$ and $\eta^2_-=0.5d$ where $d$ is the input dimension. The dimension of random Fourier features is set to default as $l=100$. We normalize the input $\bx\in [0,1]^d$ and standardize the output $y\sim \mathcal{N}(0,1)$ for robustness. We follow the common practice in optimizing \gls{GP} hyperparameter by maximizing the \gls{GP} log marginal likelihood \citep{Rasmussen_2006gaussian}. We refer to the appendix for additional experiments, illustrations, and ablation studies. We attach the Python source code in the submission and will release publicly in the final version.

\paragraph{\gls{SRGP} vs \gls{GP}.}
The key advantage of \gls{SRGP} is that we
can generate `good' samples more often than using a standard \gls{GP} in Section
\ref{subsec:Sampling-from-GP_rejection_sampling}. We demonstrate
this benefit in Fig. \ref{fig:Illustration_2d_prob_space} showing that \gls{SRGP} with weighting (or w-\gls{SRGP}) outperforms w-\gls{GP}. The results are more significant in higher dimensional functions where the w-\gls{GP} cannot get a single
sample accepted for \textit{alpine1} in $5d$ using $M=200$. 

\begin{figure*}
\vspace{-1pt}
\centering
\includegraphics[width=0.32\textwidth]{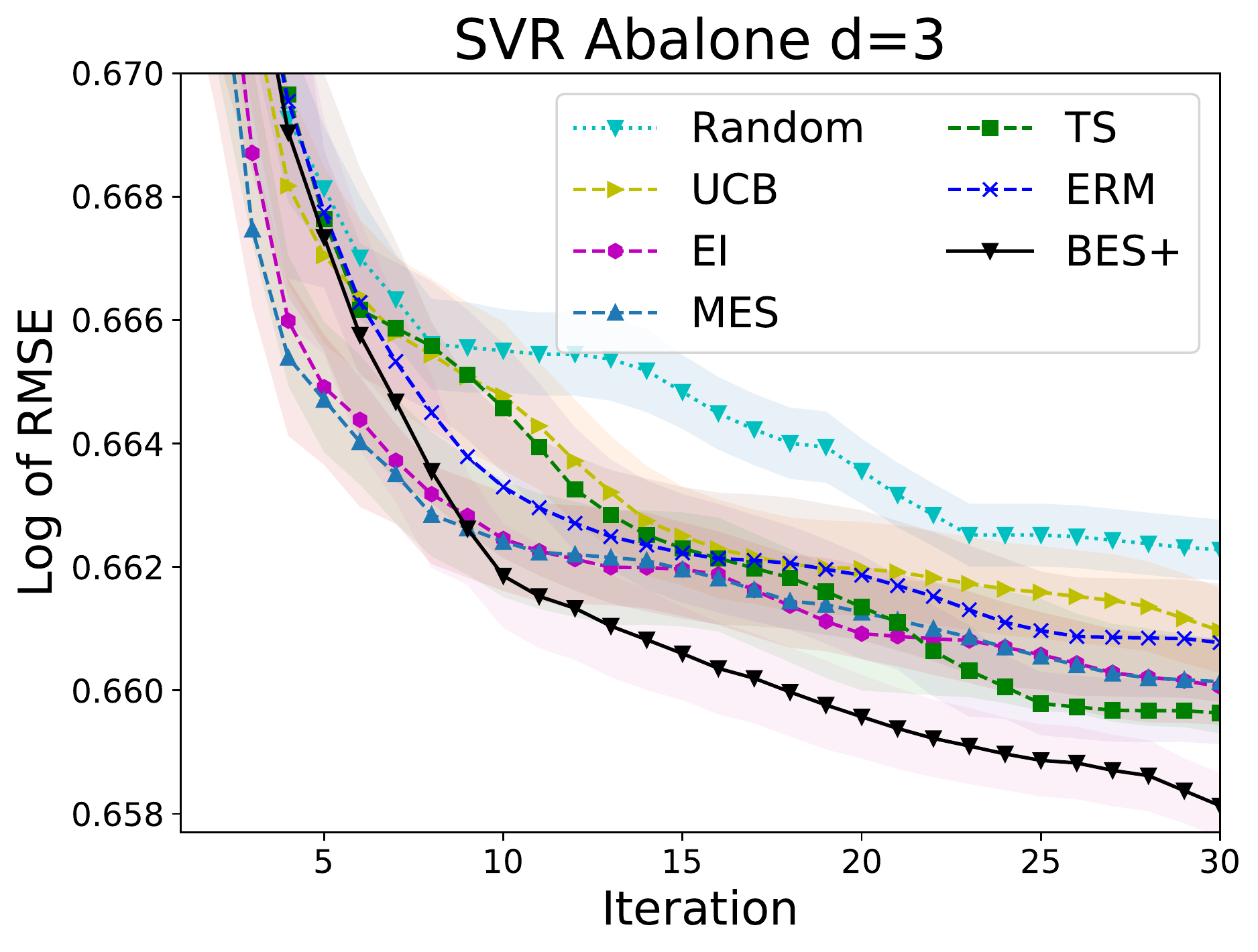}\includegraphics[width=0.32\textwidth]{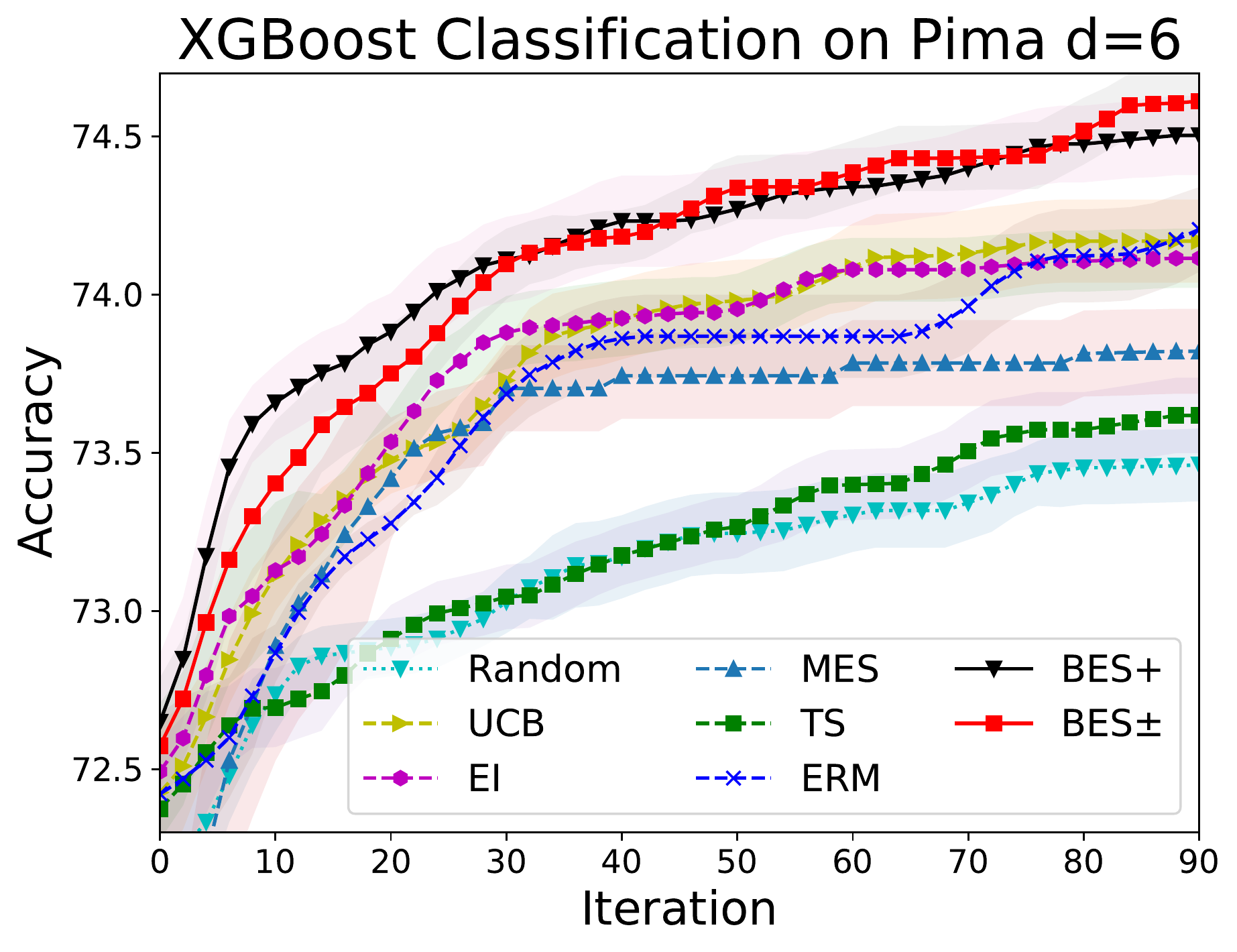}\includegraphics[width=0.32\textwidth]{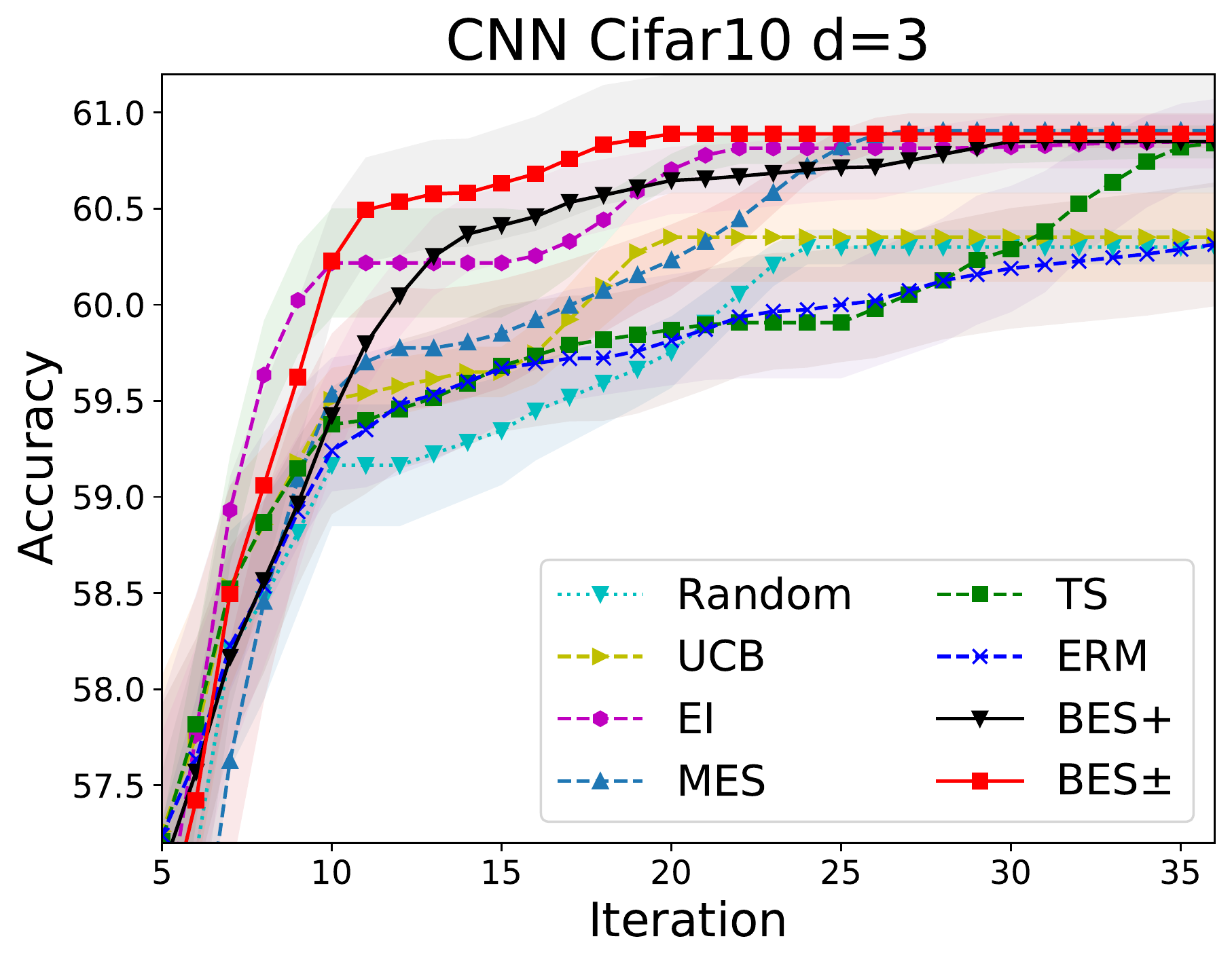}
\vspace{-3pt}
\caption{Machine learning hyperparameter tuning tasks. Exploiting the knowledge about optimum values will lead to better optimisation performance. \gls{BES}$\pm$ (with $f^{+}$ and $f^{-}$) performs the best. In \textsc{svr} setting, we only use $f^+$ while we have ultilized both $f^+$ and $f^-$ for \textsc{xgboost} and \textsc{cnn}.\label{fig:BO_result}}
\vspace{-4pt}
\end{figure*}

\begin{figure*}
\vspace{-2pt}
\begin{centering}
\includegraphics[width=0.93\textwidth]{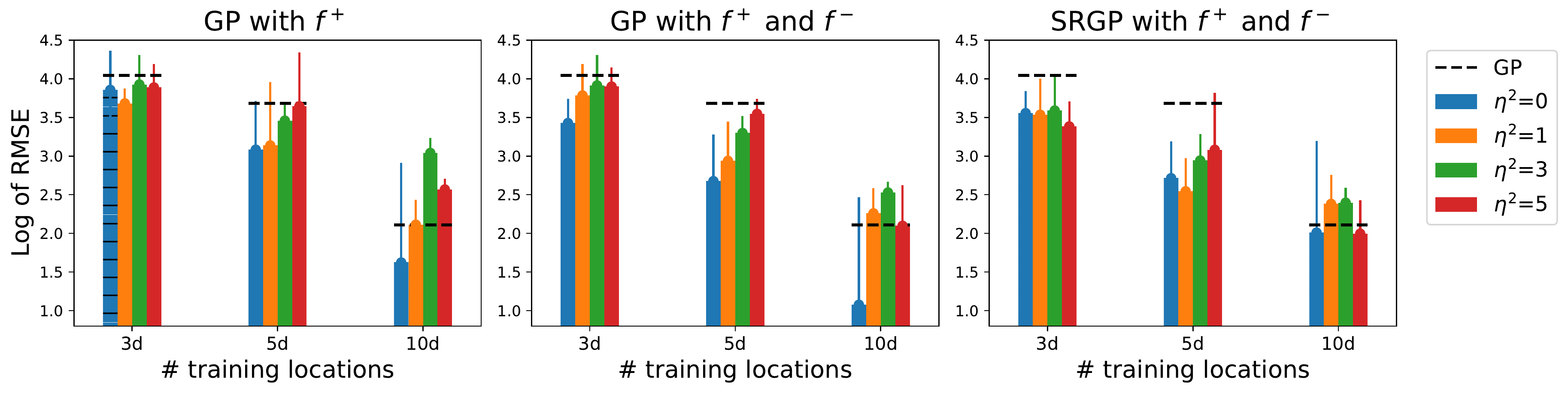}
\par\end{centering}
\begin{centering}
\par\end{centering}
\vspace{-6pt}
\caption{Our method can improve the performance of \gls{GP} posterior sampling without using $f^+$ or $f^-$ \citep{wilson2020efficiently}
(black dash line) under misspecification of $f^{+},f^{-}$. The gap is more significant given fewer training observations. The plot is using Forrester function. See the Appendix Fig. \ref{fig:ablation_gp_sampling_noise} for more results.  \label{fig:GP_sampling_misspecified}}
\vspace{-10pt}
\end{figure*}

\paragraph{Improving \gls{GP} posterior sampling. \label{subsec:exp_GP_sampling}}

A set of training locations $\bX_{\textrm{train}}\sim U[0,1]^{N_{\textrm{train}}\times d}$
is randomly generated and corresponding outputs $\by_{\textrm{train}}=f(\bX_{\textrm{train}})$
are subsequently observed. We construct a \gls{GP} given $\left\{ \bX_{\textrm{train}},\by_{\textrm{train}}\right\} $
and draw $M=200$ \gls{GP} posterior samples for each sampling scheme. Due to the trade-off between the number of accepted \gls{GP} samples versus the sampling quality for these methods with exploiting $f^+$ and $f-$, we rank and select the same ($M'=100$) number of posterior samples from the initial set of $M=200$ samples above. These samples are used to compute the \gls{RMSE} error against the true $f(\cdot)$. 

We vary the number of training locations $N_{\textrm{train}}=\{3d,5d,10d,20d\}$ and report the results in Fig. \ref{fig:gp_sampling_comparison} which
suggest that increasing the number of training observations will reduce
the error for all sampling schemes. More importantly, incorporating
the knowledge about $f^{+}$ and $f^{-}$ will significantly improve the \gls{GP} posterior
sampling performance. Furthermore, using both $f^{+}$ and $f^{-}$
results in better performance than using $f^{+}$ alone. Our approach surpasses the standard \gls{GP} sampling \citep{wilson2020efficiently}, which does not use the bound information, by a wide margin. The logistic and sigmoid transformations making use of the bounds are generally better than the standard GP, but are inferior to our proposed methods. We present further details and visualization of these transformations in Appendix \ref{app_subsec:Sampling-from-GP_sigmoid_sin}.


\paragraph{Improving Bayesian optimization. \label{subsec:exp_improving_BO}}
We next present a \gls{BO} task. We follow a popular evaluation criterion in information theoretic
approaches \citep{Hennig_2012Entropy,Hernandez_2014Predictive} that
selects the inferred argmax of the function for evaluation, i.e. $\widetilde{\bx}_{T}=\arg\max_{\bx\in\mathcal{X}}\mu_{T}\left(\bx\mid D_{T}\right)$.  The number of \gls{BO} iterations is $10d$, initialized
by $d$ observations randomly where $d$ is the number of input dimension.
We compare our model with standard \gls{BO} methods which do not use $f^{+}$ and
$f^{-}$, including \textsc{gp-ucb} \citep{Srinivas_2010Gaussian}, \gls{EI}
\citep{Mockus_1978Application}, max-value entropy search (\textsc{mes}) \citep{Wang_2017Max} and Thompson sampling (\gls{TS}). We compare with \gls{ERM} \citep{nguyen2020knowing}, which can make use of either $f^{+}$ or $f^{-}$, but not both of them. We compare the performance using popular benchmark functions and machine learning hyperparameter tunings including support vector regression \citep{Smola_1997Support}
on Abalone using root mean squared error (RMSE) as the main metric, \textsc{xgboost} \citep{chen2016xgboost} classification on Pima
Indians diabetes, and a \textsc{cnn} \citep{lecun1995convolutional} on \textsc{cifar10}.   We refer to Appendix \ref{subsec:BES-without-rejection_ablation} for studying a variant of \gls{BES} where $\pi(g_m)=\frac{1}{M}$ is set uniformly and Appendix \ref{subsec:Additional-experiments} for additional experiments. 
\begin{figure}
\centering
\vspace{-5pt}
\includegraphics[width=0.38\textwidth]{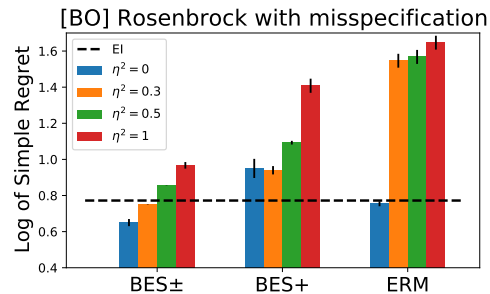} \hspace{5pt}
\vspace{-5pt}
\caption{Our \gls{BES} is more robust than \gls{ERM} in dealing with misspecification.\label{fig:BO_misspecified_Rosenbrock}}
\vspace{-10pt}
\end{figure}


\gls{BES}$\pm$ is an abbreviation for \gls{BES} using both $f^+$ and $f^-$ while \gls{BES}$+$ uses $f^+$ alone. We show in Fig. {\ref{fig:BO_result}} that exploiting the optimum knowledge in our \gls{BES} and \gls{ERM} should perform competitively. Notably, our \gls{BES} often achieves high performances at the \textit{earlier stage} of the optimization process where we do not have much information to infer the underlying function. Therefore, utilizing $f^+,f^-$ will bring significant benefits at the earlier stage that the existing approaches are unable to exploit. We note that when $f^{+}$ and $f^{-}$ are available, the \gls{ERM} is unable to exploit both.

\paragraph{Misspecifying $f^{+}$ and $f^{-}$.}
We perform ablation studies over different misspecified levels of the optimum values via $\eta^2:=\eta^2_{+}=\eta^2_{-}\in\{0,1,3,5\}$ for \gls{GP} sampling and $\eta^2\in\{0,0.3,0.5,1\}$ for \gls{BO}. Using the benchmark functions, we have access to the true values of $\max f(\cdot)$ and $\min f(\cdot)$.  Thus, the misspecified value is set as $f^+ = \max f(\cdot) \pm \eta^2_+$ where $\pm$ is randomly selected by a Bernoulli random variable. $f^-$ is specified analogously.

For a meaningful study, we consider these values after standardizing the output space $y\sim\mathcal{N}(0,1)$. This means that when $\eta^2=1$, the gap between the true $\max f(\cdot)$ and the specified $f^{+}$ is one standard deviation of $f(\cdot)$. We present further experimental results in the Appendix Fig. \ref{fig:ablation_gp_sampling_noise}.


When $\eta^2\le1$, the performance of \gls{GP} posterior
sampling in Fig. \ref{fig:GP_sampling_misspecified} is much better than \cite{wilson2020efficiently} in all
cases and when $\eta^2\le5$ our model is still better in a few cases.
Relaxing $\eta^2\rightarrow\infty$ will let the performance of w-\gls{GP}
reduce to the case of \cite{wilson2020efficiently} while the performance
of w-\gls{SRGP} will slightly suffer because the transformation depends
on $\eta^2_+$. Thus, we suggest to use our weighting \gls{SRGP} when the misspecification
is such that $\eta^2\le1$, one standard deviation of $f(\cdot)$, and
use w-\gls{GP} otherwise.


We study misspecifying $f^{+}$ and $f^{-}$ for \gls{BO}. In Fig. \ref{fig:BO_misspecified_Rosenbrock},
our \gls{BES} performs competitively the best,
especially when $\eta^2\le0.3$ in all cases. The information gain strategy in \gls{BES} is more noise-resilient in dealing with misspecification than \gls{ERM}. Especially if we underspecify $f^+$, the performance of \gls{ERM} will significantly drop as shown in \cite{nguyen2020knowing}, while our \gls{BES} can tolerate better. Utilizing both information about $f^{+}$ and $f^{-}$ will help to cope with misspecification better than using $f^{+}$ alone.


\begin{figure}
\vspace{-4pt}
\centering
\includegraphics[width=0.42\textwidth]{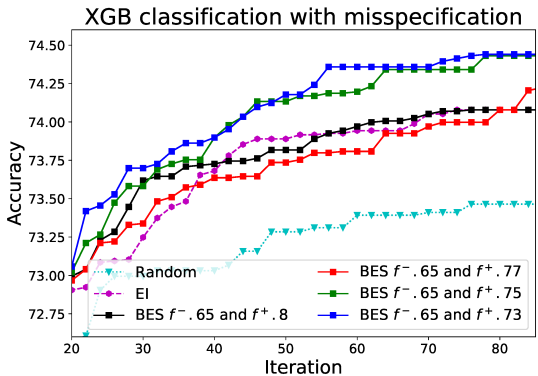}
\vspace{-5pt}
\caption{We specify $f^{+}$ to different values and set $f^{-}$ fixed in
\textsc{xgboost}. The performance is better when $f^{+}$ is set closer to
the true value $\approx0.745$.\label{fig:BO_misspecified}}
\vspace{-6pt}
\end{figure}

In Fig. \ref{fig:BO_misspecified}, we vary the values of $f^{+}$ within $[0.7, 0.9]$
and fix $f^{-}=0.65$ in the \textsc{xgboost} classification problem where
we do not know the true value for these bounds. The result indicates
that the best performance for \gls{BES} is achieved when $f^{+}$ is set
close to $\approx0.745$, which is likely the true (but unknown) upper bound. Our \gls{BES} with misspecifying $f^+ \in [0.73,0.75]$ still performs generally better than \gls{EI}.



\section{Conclusion, limitations and future work} \label{sec:conclusion}
\paragraph{Conclusion.} We have presented a new setting and approaches for exploiting the
upper and lower bounds available for some black-box functions. We exploit these bounds for improving the performance of \gls{GP} posterior sampling and \gls{BO}. The resulting \gls{GP} posterior sampling approach has tighter sample variance bounds than existing methods. Our approaches can be used as a plugin extension to existing distributed and asynchronous Thompson sampling as well as sparse \gls{GP} models.

\paragraph{Comparisons with \gls{ERM} \citep{nguyen2020knowing}.}
Our key advantages against \gls{ERM} is as follows.
First, our approach can handle the information about \textit{both} the max
and min of $f(\cdot)$ while the setting in \gls{ERM}
can only handle \textit{one} of them.
Second, our approach is more widely applicable in taking the optimum values which can be \textit{loosely} specified while \gls{ERM}
expects to observe the \textit{precise} value. Particularly, \gls{ERM} will perform
poorly if users underspecified the true value of $\max_{\bx}f(\bx)$.
Third, our approach is useful for both \gls{GP} posterior sampling and \gls{BO} settings while the one suggested by  \cite{nguyen2020knowing} is limited to \gls{BO}.

\paragraph{Limitations.} While we believe the results above are insightful, there are a number of limitations one needs to be aware of. First, our \gls{BES} relies on the \gls{GP} posterior samples to make a decision. There will be some iterations where none of the \gls{GP} samples stays within  the approximate bounds, such as when  $M$ is small or
the available data is biased to represent $f(\cdot)$. Whenever this happens, our \gls{BES} may be not applicable; instead we can  simply perform \gls{EI} \citep{Jones_1998Efficient}
for this iteration. We found this simple trick with \gls{EI} works well
empirically and illustrate it in  Appendix \ref{subsec:Switching-between-EI}.
Second, we acknowledge the challenge when the bounds are heavily mis-specified, where such knowledge about the bounds can be less beneficial. Thus, the results in Fig. \ref{fig:BO_misspecified} suggest that
if our loose bounds are defined more than $\eta^2>0.3$, it may be better
to use existing acquisition functions, such as \gls{EI}, without using the external knowledge.

\textbf{Future work} can extend the model to optimize multiple objectives simultaneously, each of which comes with loose upper and lower bounds. Using such bounds, other future works can be to improve parallel Thompson sampling \citep{Hernandez_2017Parallel,Kandasamy_2018Parallelised} or used jointly with other signals, such as monotonicity, for the best performance in the low data regime. 
Another direction for improving the sample efficiency is to consider important sampling on the space of random Fourier feature while we expect the high dimensional challenge.

\newpage

\bibliographystyle{apalike}

\bibliography{vunguyen}

\begin{thebibliography}{}

\bibitem[Aster et~al., 2018]{aster2018parameter}
Aster, R.~C., Borchers, B., and Thurber, C.~H. (2018).
\newblock {\em Parameter Estimation and Inverse Problems}.
\newblock Elsevier.

\bibitem[Bertero and Boccacci, 2020]{bertero2020introduction}
Bertero, M. and Boccacci, P. (2020).
\newblock {\em Introduction to inverse problems in imaging}.
\newblock CRC press.

\bibitem[Brochu et~al., 2010]{Brochu_2010Tutorial}
Brochu, E., Cora, V.~M., and De~Freitas, N. (2010).
\newblock A tutorial on {B}ayesian optimization of expensive cost functions,
  with application to active user modeling and hierarchical reinforcement
  learning.
\newblock {\em arXiv preprint arXiv:1012.2599}.

\bibitem[Chen and Guestrin, 2016]{chen2016xgboost}
Chen, T. and Guestrin, C. (2016).
\newblock Xgboost: A scalable tree boosting system.
\newblock In {\em Proceedings of the 22nd ACM SigKDD International Conference
  on Knowledge Discovery and Data Mining}, pages 785--794. ACM.

\bibitem[Chiles and Delfiner, 2009]{chiles2009geostatistics}
Chiles, J.-P. and Delfiner, P. (2009).
\newblock {\em Geostatistics: Modeling Spatial Uncertainty}, volume 497.
\newblock John Wiley \& Sons.

\bibitem[Deisenroth and Rasmussen, 2011]{deisenroth2011pilco}
Deisenroth, M.~P. and Rasmussen, C.~E. (2011).
\newblock {PILCO:} a model-based and data-efficient approach to policy search.
\newblock In {\em Proceedings of the International Conference on Machine
  Learning}, pages 465--472.

\bibitem[Doucet, 2010]{doucet2010note}
Doucet, A. (2010).
\newblock A note on efficient conditional simulation of {G}aussian
  distributions.
\newblock {\em Department of Computer Science and Statistics, University of
  British Columbia}, 4.

\bibitem[Gunter et~al., 2014]{gunter2014sampling}
Gunter, T., Osborne, M.~A., Garnett, R., Hennig, P., and Roberts, S.~J. (2014).
\newblock Sampling for inference in probabilistic models with fast {B}ayesian
  quadrature.
\newblock In {\em Advances in Neural Information Processing Systems}, pages
  2789--2797.

\bibitem[Gy{\"o}rgy and Kocsis, 2011]{gyorgy2011efficient}
Gy{\"o}rgy, A. and Kocsis, L. (2011).
\newblock Efficient multi-start strategies for local search algorithms.
\newblock {\em Journal of Artificial Intelligence Research}, 41:407--444.

\bibitem[Hennig and Schuler, 2012]{Hennig_2012Entropy}
Hennig, P. and Schuler, C.~J. (2012).
\newblock Entropy search for information-efficient global optimization.
\newblock {\em Journal of Machine Learning Research}, 13:1809--1837.

\bibitem[Hern{\'a}ndez-Lobato et~al., 2014]{Hernandez_2014Predictive}
Hern{\'a}ndez-Lobato, J.~M., Hoffman, M.~W., and Ghahramani, Z. (2014).
\newblock Predictive entropy search for efficient global optimization of
  black-box functions.
\newblock In {\em Advances in Neural Information Processing Systems}, pages
  918--926.

\bibitem[Hern{\'a}ndez-Lobato et~al., 2017]{Hernandez_2017Parallel}
Hern{\'a}ndez-Lobato, J.~M., Requeima, J., Pyzer-Knapp, E.~O., and
  Aspuru-Guzik, A. (2017).
\newblock Parallel and distributed {T}hompson sampling for large-scale
  accelerated exploration of chemical space.
\newblock {\em In International Conference on Machine Learning}, pages
  1470--1479.

\bibitem[Jones et~al., 1998]{Jones_1998Efficient}
Jones, D.~R., Schonlau, M., and Welch, W.~J. (1998).
\newblock Efficient global optimization of expensive black-box functions.
\newblock {\em Journal of Global Optimization}, 13(4):455--492.

\bibitem[Journel and Huijbregts, 1978]{journel1978mining}
Journel, A.~G. and Huijbregts, C.~J. (1978).
\newblock {\em Mining Geostatistics}, volume 600.
\newblock Academic Press London.

\bibitem[Kirthevasan et~al., 2018]{Kandasamy_2018Parallelised}
Kirthevasan, K., Akshay, K., Jeff, S., and Barnabas, P. (2018).
\newblock Parallelised {B}ayesian optimisation via {T}hompson sampling.
\newblock In {\em AISTATS}.

\bibitem[Krizhevsky et~al., 2012]{krizhevsky2012imagenet}
Krizhevsky, A., Sutskever, I., and Hinton, G.~E. (2012).
\newblock Imagenet classification with deep convolutional neural networks.
\newblock In {\em Advances in Neural Information Processing Systems}, pages
  1097--1105.

\bibitem[LeCun and Bengio, 1995]{lecun1995convolutional}
LeCun, Y. and Bengio, Y. (1995).
\newblock Convolutional networks for images, speech, and time series.
\newblock {\em The handbook of brain theory and neural networks},
  3361(10):1995.

\bibitem[Li et~al., 2020]{li2020incorporating}
Li, C., Gupta, S., Rana, S., Nguyen, V., Robles-Kelly, A., and Venkatesh, S.
  (2020).
\newblock Incorporating expert prior knowledge into experimental design via
  posterior sampling.
\newblock {\em arXiv preprint arXiv:2002.11256}.

\bibitem[Li et~al., 2018]{li2018accelerating_ICDM}
Li, C., Santu, R., Gupta, S., Nguyen, V., Venkatesh, S., Sutti, A., Leal, D. R.
  D.~C., Slezak, T., Height, M., Mohammed, M., and Gibson, I. (2018).
\newblock Accelerating experimental design by incorporating experimenter
  hunches.
\newblock In {\em International Conference on Data Mining}, pages 257--266.

\bibitem[Mockus et~al., 1978]{Mockus_1978Application}
Mockus, J., Tiesis, V., and Zilinskas, A. (1978).
\newblock The application of {B}ayesian methods for seeking the extremum.
\newblock {\em Towards Global Optimization}, 2(117-129):2.

\bibitem[Mosegaard and Tarantola, 1995]{mosegaard1995monte}
Mosegaard, K. and Tarantola, A. (1995).
\newblock Monte carlo sampling of solutions to inverse problems.
\newblock {\em Journal of Geophysical Research: Solid Earth},
  100(B7):12431--12447.

\bibitem[Nguyen et~al., 2019]{Nguyen_ICDM2019}
Nguyen, V., Gupta, S., Rana, S., Thai, M., Li, C., and Venkatesh, S. (2019).
\newblock Efficient {B}ayesian optimization for uncertainty reduction over
  perceived optima locations.
\newblock In {\em International Conference on Data Mining}.

\bibitem[Nguyen and Osborne, 2020]{nguyen2020knowing}
Nguyen, V. and Osborne, M.~A. (2020).
\newblock Knowing the what but not the where in {B}ayesian optimization.
\newblock In {\em International Conference on Machine Learning}, pages
  7317--7326.

\bibitem[Parker-Holder et~al., 2020]{PB2}
Parker-Holder, J., Nguyen, V., and Roberts, S.~J. (2020).
\newblock Provably efficient online hyperparameter optimization with
  population-based bandits.
\newblock {\em Advances in Neural Information Processing Systems}.

\bibitem[Perrone et~al., 2019]{perrone2019learning}
Perrone, V., Shen, H., Seeger, M.~W., Archambeau, C., and Jenatton, R. (2019).
\newblock Learning search spaces for {B}ayesian optimization: Another view of
  hyperparameter transfer learning.
\newblock In {\em Advances in Neural Information Processing Systems}, pages
  12771--12781.

\bibitem[Popoviciu, 1935]{popoviciu1935equations}
Popoviciu, T. (1935).
\newblock Sur les {\'e}quations alg{\'e}briques ayant toutes leurs racines
  r{\'e}elles.
\newblock {\em Mathematica}, 9:129--145.

\bibitem[Rahimi and Recht, 2007]{Rahimi_2007Random}
Rahimi, A. and Recht, B. (2007).
\newblock Random features for large-scale kernel machines.
\newblock In {\em Advances in Neural Information Processing Systems}, pages
  1177--1184.

\bibitem[Rasmussen and Williams, 2006]{Rasmussen_2006gaussian}
Rasmussen, C.~E. and Williams, C. K.~I. (2006).
\newblock {\em Gaussian Processes for Machine Learning}.
\newblock MIT Prees.

\bibitem[Richardson et~al., 2017]{richardson2017gaussian}
Richardson, R.~R., Osborne, M.~A., and Howey, D.~A. (2017).
\newblock Gaussian process regression for forecasting battery state of health.
\newblock {\em Journal of Power Sources}, 357:209--219.

\bibitem[Riihim{\"a}ki and Vehtari, 2010]{riihimaki2010gaussian}
Riihim{\"a}ki, J. and Vehtari, A. (2010).
\newblock Gaussian processes with monotonicity information.
\newblock In {\em International Conference on Artificial Intelligence and
  Statistics}, pages 645--652.

\bibitem[Ru et~al., 2018]{ru2018fast}
Ru, B., McLeod, M., Granziol, D., and Osborne, M.~A. (2018).
\newblock Fast information-theoretic {B}ayesian optimisation.
\newblock In {\em International Conference on Machine Learning}, pages
  4381--4389.

\bibitem[Russo et~al., 2018]{russo2018tutorial}
Russo, D.~J., Van~Roy, B., Kazerouni, A., Osband, I., and Wen, Z. (2018).
\newblock A tutorial on {T}hompsonsampling.
\newblock {\em Foundations and Trends{\textregistered} in Machine Learning},
  11(1):1--96.

\bibitem[Shahriari et~al., 2016]{Shahriari_2016Taking}
Shahriari, B., Swersky, K., Wang, Z., Adams, R.~P., and de~Freitas, N. (2016).
\newblock Taking the human out of the loop: A review of {B}ayesian
  optimization.
\newblock {\em Proceedings of the IEEE}, 104(1):148--175.

\bibitem[Smola and Vapnik, 1997]{Smola_1997Support}
Smola, A. and Vapnik, V. (1997).
\newblock Support vector regression machines.
\newblock {\em Advances in Neural Information Processing Systems}, 9:155--161.

\bibitem[Souza et~al., 2021]{souza2021bayesian}
Souza, A., Nardi, L., Oliveira, L.~B., Olukotun, K., Lindauer, M., and Hutter,
  F. (2021).
\newblock Bayesian optimization with a prior for the optimum.
\newblock In {\em Joint European Conference on Machine Learning and Knowledge
  Discovery in Databases}, pages 265--296. Springer.

\bibitem[Srinivas et~al., 2010]{Srinivas_2010Gaussian}
Srinivas, N., Krause, A., Kakade, S., and Seeger, M. (2010).
\newblock Gaussian process optimization in the bandit setting: No regret and
  experimental design.
\newblock In {\em International Conference on Machine Learning}, pages
  1015--1022.

\bibitem[Thompson, 1933]{Thompson_1933Likelihood}
Thompson, W.~R. (1933).
\newblock On the likelihood that one unknown probability exceeds another in
  view of the evidence of two samples.
\newblock {\em Biometrika}, 25(3/4):285--294.

\bibitem[Wang and Jegelka, 2017]{Wang_2017Max}
Wang, Z. and Jegelka, S. (2017).
\newblock Max-value entropy search for efficient {B}ayesian optimization.
\newblock In {\em International Conference on Machine Learning}, pages
  3627--3635.

\bibitem[Wilson et~al., 2020]{wilson2020efficiently}
Wilson, J.~T., Borovitskiy, V., Terenin, A., Mostowsky, P., and Deisenroth,
  M.~P. (2020).
\newblock Efficiently sampling functions from {G}aussian process posteriors.
\newblock In {\em International Conference on Machine Learning}, pages
  10292--10302.

\bibitem[Wilson et~al., 2021]{wilson2021pathwise}
Wilson, J.~T., Borovitskiy, V., Terenin, A., Mostowsky, P., and Deisenroth,
  M.~P. (2021).
\newblock Pathwise conditioning of gaussian processes.
\newblock {\em Journal of Machine Learning Research}, 22(105):1--47.

\end{thebibliography}

\newpage

\begin{figure*}
\begin{centering}
\subfloat[We have a black-box function $f(\protect\bx)$ including four observations
(red dots), the maximum value $f^{-}$ and minimum value $f^{-}$.
The goal is to find the unknown maximum location $\arg\max f(\protect\bx)$.\label{fig:blackbox_4obs}]{\begin{centering}
\includegraphics[width=0.8\textwidth]{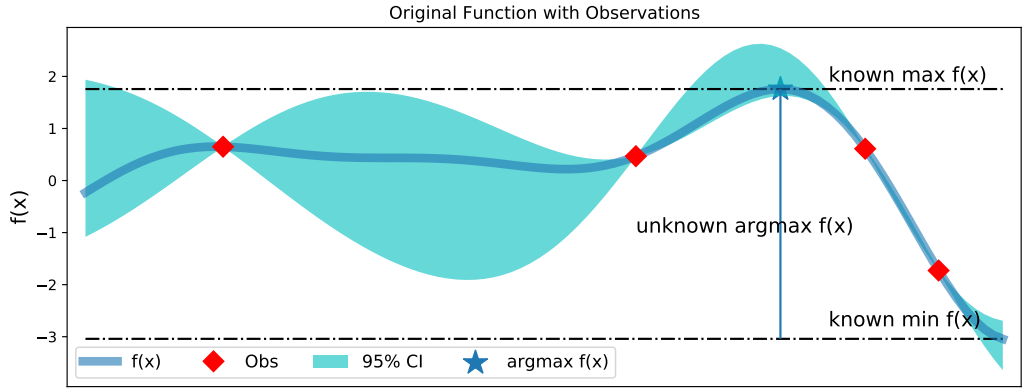}
\par\end{centering}
}
\par\end{centering}
\begin{centering}
\subfloat[\textit{Top}: \gls{GP} posterior sampling $p_{\textrm{base}}$ without the knowledge
about $f^{+}$ and $f^{-}$. \textit{Bottom}: the decision function will select
the left corner which is not the correct maximum location.\label{fig:illustration_BES_wo_rs}]{\begin{centering}
\includegraphics[width=0.8\textwidth]{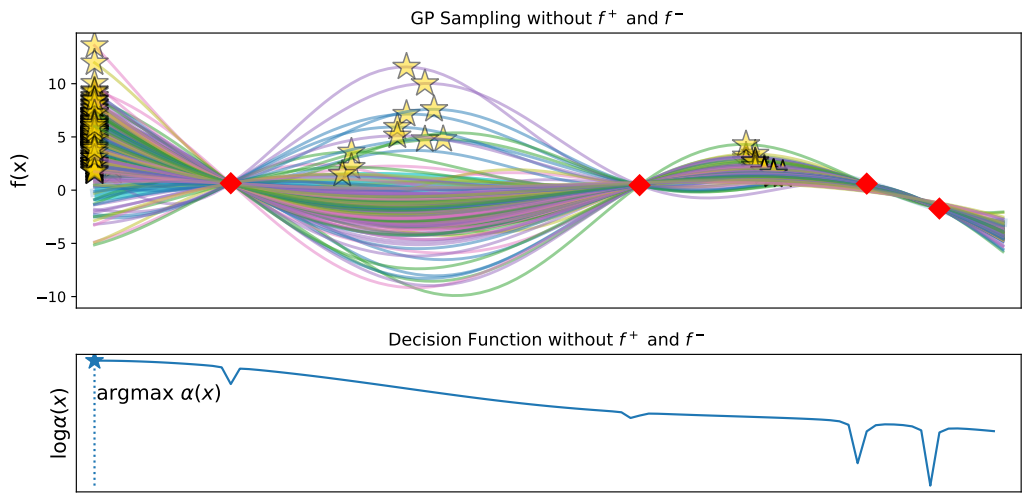}
\par\end{centering}

}
\par\end{centering}
\begin{centering}
\subfloat[\textit{Top}: \gls{GP} posterior sampling $\pi(g_m)$ guided by the $f^{+}$
and $f^{-}$. \textit{Bottom}: our decision selects the next point for evaluation
correctly as the unknown maximum location, i.e., $\arg\max\alpha^{\textrm{\gls{BES}}}(\protect\bx)=\arg\max f(\protect\bx)$.\label{fig:illustration_BES_rs}]{\begin{centering}
\includegraphics[width=0.8\textwidth]{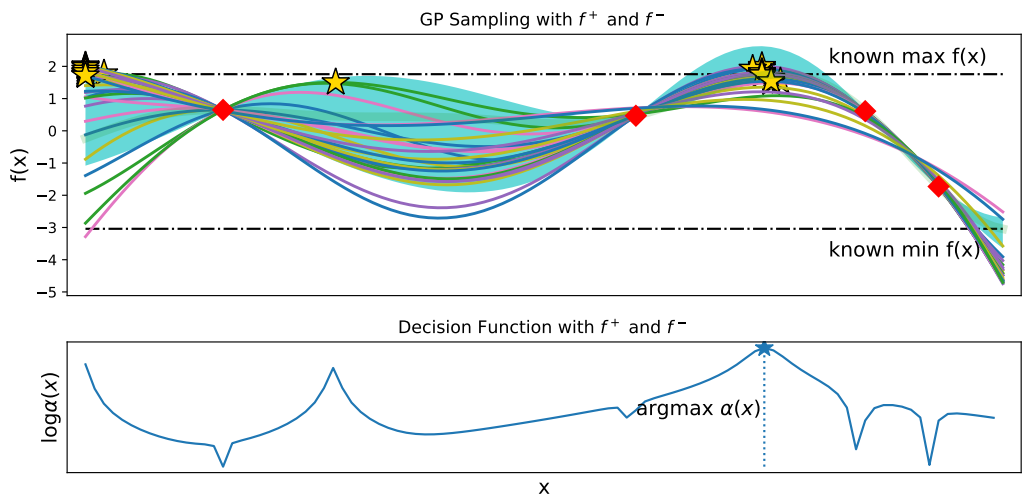}
\par\end{centering}
}
\par\end{centering}
\caption{Illustrating the \gls{BES}. \textit{Top}: a black-box function $f(\protect\bx)$
and four observations (red dots). Middle: $p_{\textrm{base}}$  as a \gls{GP} and the decision
without the weighting $\pi(g_m)$. \textit{Bottom}: $p_{\textrm{base}}$  as a \gls{SRGP} and the decision with the weighting $\pi(g_m)$ .\label{fig:decision_function_illustration}}
\end{figure*}

\appendix
\onecolumn

\newpage
\appendix

\newpage

\section{Illustration of Bounded Entropy Search (\gls{BES}) Decision Function \label{sec:Illustration-of-GES}}

We first illustrate the decision function of \gls{BES} given the knowledge
of $f^{+}=\max f(\bx)+\mathcal{N}(0,\eta^2_+)$ and $f^{-}=\min f(\bx)+\mathcal{N}(0,\eta^2_-)$
in Fig. \ref{fig:decision_function_illustration}. Given limited observations
in Fig. \ref{fig:blackbox_4obs}, it may be difficult to infer the
unknown optimum location $\arg\max f(\bx)$. However, we can utilize
the information about the upper bound and lower bound to better identify
this location of interest. In particular, we show that incorporating
$f^{+}$ and $f^{-}$ can help to identify correctly the unknown maximum
location in Fig. \ref{fig:illustration_BES_rs} while we may fail
without such extra knowledge as shown in Fig. \ref{fig:illustration_BES_wo_rs}.

\section{Further Technical Details}

\subsection{Transformation by sinusoidal and sigmoid \label{app_subsec:Sampling-from-GP_sigmoid_sin}}


\begin{figure*}
\begin{centering}
\includegraphics[width=0.49\textwidth]{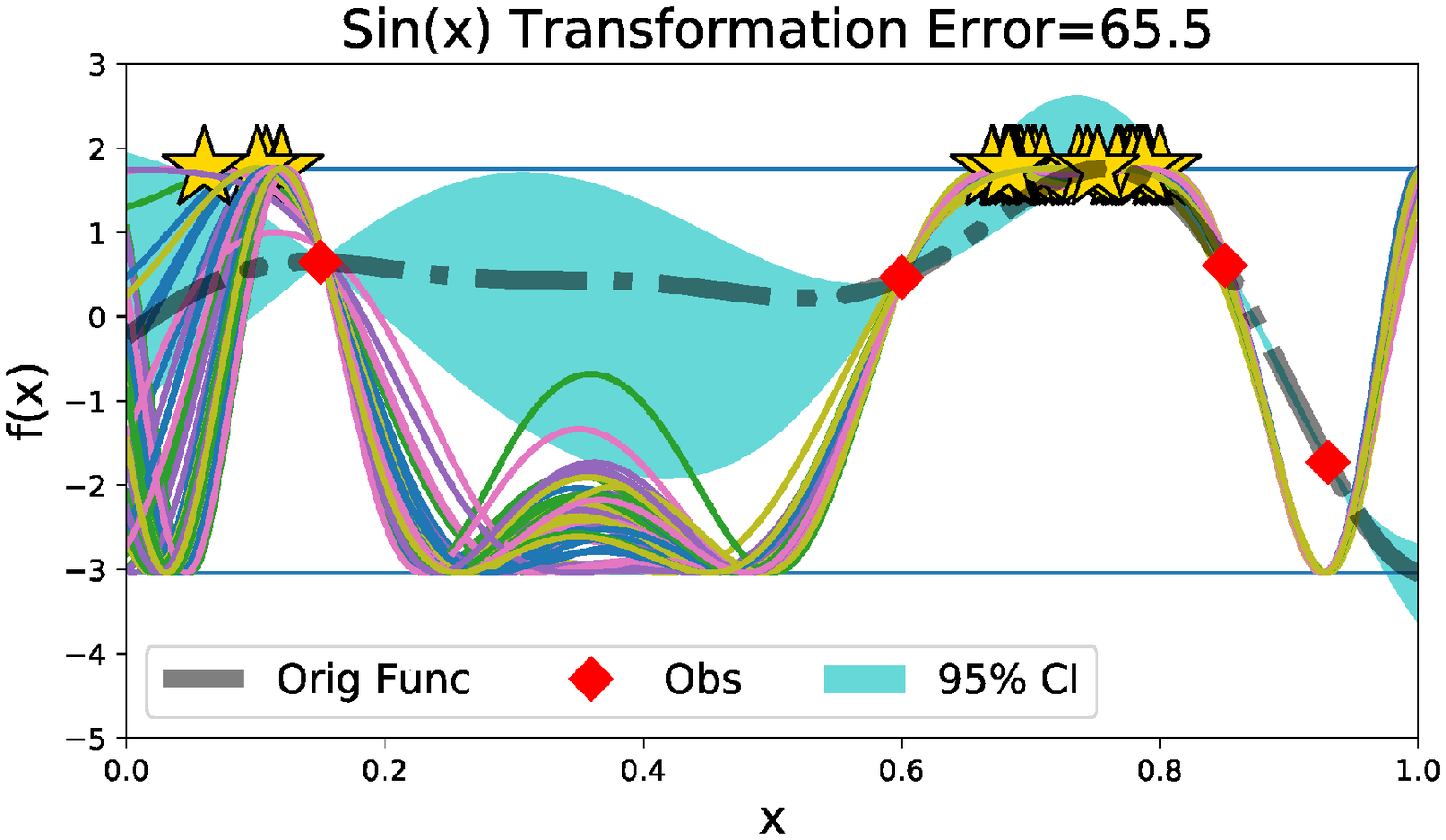}\includegraphics[width=0.49\textwidth]{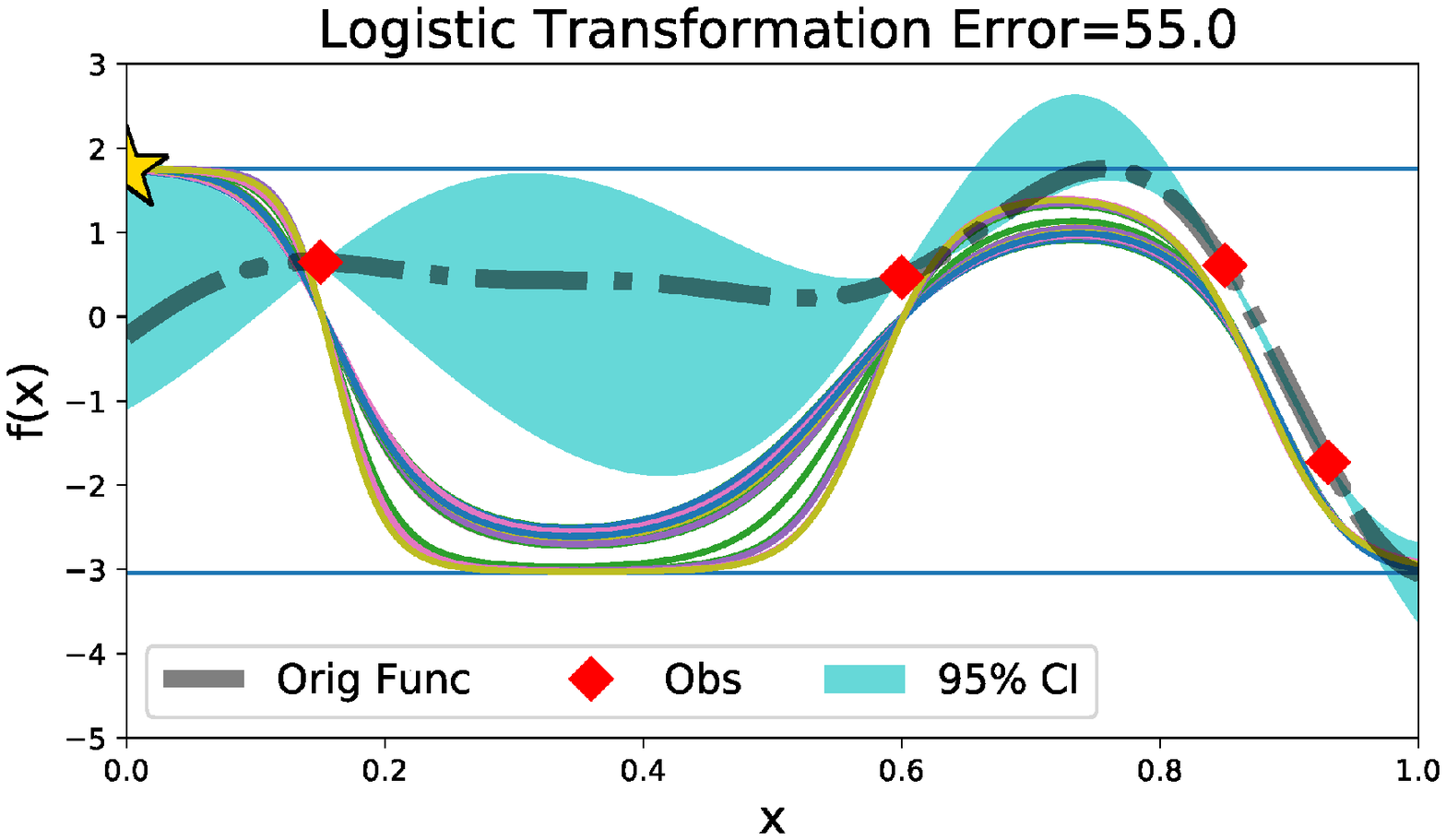}
\par\end{centering}
\vspace{-10pt}
\caption{\gls{GP} posterior sampling with $f^{+},f^{-}$ transformations by sinusoidal and
sigmoid functions. They both result in not good estimation although
they satisfy the bounded constraints. The major issue with these transformations is that the \gls{GP} samples are stretched out too much at the two tails $f^+$ and $f-$. \label{fig:GP-sampling-with_sin_sigmoid-1}}
\end{figure*}
We present two other ways to transform the \gls{GP} posterior samples by incorporating
the knowledge about $f^{+}$ and $f^{-}$. We utilize the sin wave
and sigmoid transformations. For such transformations, we define
a forward function to compute $f(\bx)$ given $h(\bx)$ which is constrained by either $\arcsin$ or $\log$. Then, we define a reverse function to compute $h(\bx)$
given $f(\bx)$. These forward and reverse functions are summarized
below:
\begin{itemize}
\item Using sinusoidal transformation: we define $\frac{f(\bx)-f^{-}}{f^{+}-f^{-}}=\frac{1}{2}\sin[h(\bx)]+\frac{1}{2}$
and $h(\bx)=\arcsin\left\{ 2\left[\frac{f(\bx)-f^{-}}{f^{+}-f^{-}}-0.5\right]\right\} $.
\item Using sigmoid transformation: we define $\frac{f(\bx)-f^{-}}{f^{+}-f^{-}}=\frac{1}{1+\exp \bigl(-h(\bx) \bigr)}$
and $h(\bx)=\log\frac{f(\bx)-f^{-}}{f^{+}-f^{-}}$.
\end{itemize}
Given $f^{+}$ and $f^{-}$, we transform the output $\by$ from the original space of $f(\cdot)$ to the space of $h(\cdot)$ as $h(\bx)=\arcsin\left\{ 2\left[\frac{f(\bx)-f^{-}}{f^{+}-f^{-}}-0.5\right]\right\} $
for sinusoidal and $h(\bx)=\log\frac{f(\bx)-f^{-}}{f^{+}-f^{-}}$
for sigmoid, respectively. We then draw samples from \gls{GP} posterior of
$h(\cdot)$ given $\left\{ \bx_i,h(\bx_i)\right\}_{i=1}^N$. Next, we transform these samples back to the original space of $f(\cdot)$.

We note that sinusoidal and sigmoid transformations \textit{require} the knowledge
about both $f^{+}$ and $f^{-}$ while square-root transformations and the proposed  weighting strategy
 are more \textit{flexible} to utilize either of $f^{+}$, $f^{-}$
or both of them.

We visualize the \gls{GP} samples by using sinusoidal and sigmoid functions in Fig. \ref{fig:GP-sampling-with_sin_sigmoid-1}.
However, we note that such transformations will not result in promising
samples as we have encoded the periodicity by sin wave or being 
stretched out by the sigmoid curve. These additional results justify our
choice of squared-root transformation and weighting using $f^+$ and $f^-$ in Section
\ref{subsec:Sampling-from-GP_rejection_sampling}. We  have also presented an empirical comparison in Fig. \ref{fig:gp_sampling_comparison}.




\subsection{Implementation and Computational Complexity} \label{app_sec:implementation_complexity}
We discuss the implementation aspect of our acquisition function, defined in Eq. (\ref{eq:final_acq_BES}) which involves sampling $M$ \gls{GP} posterior samples $g_1,...g_M$. The cost for this sampling is similar to that of \cite{wilson2020efficiently}. 

The Gaussian likelihood $\pi(g_m)$ is computed for each \gls{GP} sample $g_m(\cdot)$ once. We then optimize the acquisition function $\alpha^{\gls{BES}}(\bx)$ by iteratively evaluating it at test points $\bx$. The evaluation from the \gls{GP} posterior sample is cheap, thanks to the scalability of \gls{GP} posterior sampling using Matheron's rule that goes linearly with the number of test points \citep{wilson2020efficiently,wilson2021pathwise}.

In particular, the computation of Eq. (\ref{eq:final_acq_BES}) reduces to Eqs. (\ref{eq:BES_density_gmax_given_yx},\ref{eq:BES_density_gmax}) for which we need to compute the \gls{GP} posterior predictive mean $\mu(\bx^+_m)$ and variance $\sigma^2(\bx^+_m)$ in Eqs. (\ref{eq:tgp_mu}, \ref{eq:tgp_sigma})  with and without the presence of the considered location $\bx$.

Note that in computing Eq. (\ref{eq:BES_density_gmax_given_yx}), we need to efficiently estimate the variance at $\bx^+_m$ with the inclusion of $\bx$, i.e., $\sigma_{\bx}^2(\bx^+_m)$. This can be efficiently computed by using the Block-wise matrix inversion lemma
\begin{align} \label{eq:matrix_inversion}
K_{\bX\cup\bx,\bX\cup\bx}^{-1} & =\left[\begin{array}{cc}
K^{-1} & 0\\
0 & 0
\end{array}\right]+\frac{1}{\underbrace{k_{\bx,\bx}-\bk_{\bx,\bX}\bK_{\bX,\bX}^{-1}\bk_{\bx,\bX}^{T}}_{\sigma^2(\bx)}}\left[\begin{array}{cc}
\bK^{-1}\bk_{\bx,\bX}^{T}\bk_{\bx,\bX}\bK^{-1} & -\bK^{-1}\bk_{\bx,\bX}^{T}\\
-\bk_{\bx,\bX}\bK^{-1} & 1
\end{array}\right].
\end{align}
Particularly, we decompose the inverse covariance matrix of $K_{\bX\cup\bx,\bX\cup\bx}^{-1}$ given $K^{-1}$ which is precomputed from the previous iteration. We then compute the remaining terms in the right handside using matrix multiplication to update $K_{\bX\cup\bx,\bX\cup\bx}^{-1}$.

We report the time taken for computing \gls{BES} per iteration in Fig. \ref{fig:time_bes}. We can see that the computational cost for \gls{BES} increases with dimension. This is because the number of time required to evaluate the acquisition function will grow with dimension. Nevertheless, the overall cost for suggesting a point is still less than two minutes for $d=10$ while the black-box evaluation time is significantly higher, such as it takes a few hours to train a deep learning model.

\begin{figure}
\centering
\includegraphics[width=0.6\textwidth]{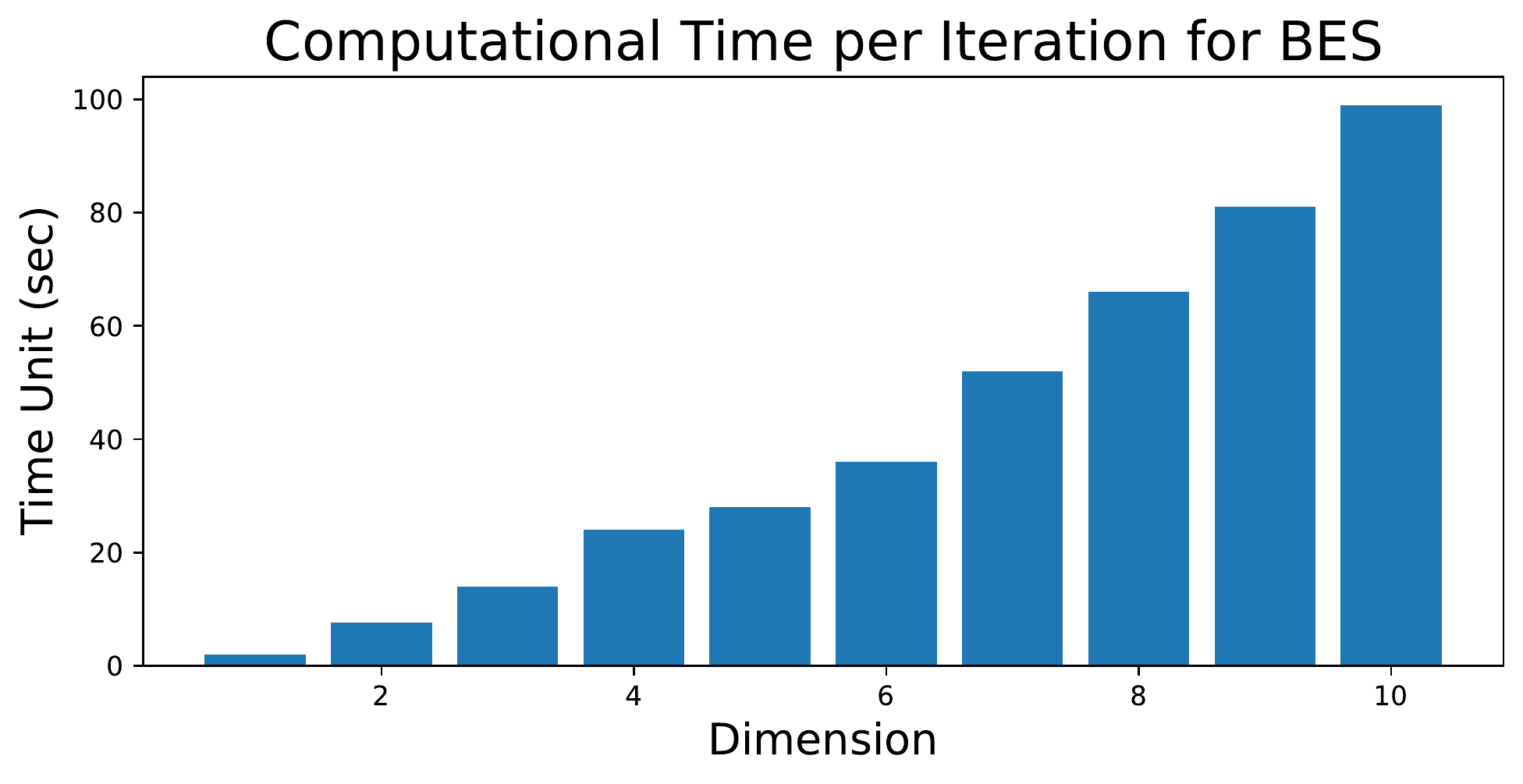}
\vspace{-5pt}
\caption{The computational cost for \gls{BES} increases with dimension. Given an expensive black-box function $f(\cdot)$ which can take hours for evaluation, our computational time is reasonable within less than two minutes for $10$ dimensions. The experiments are running on the Intel Core i7, 64GB Ram.\label{fig:time_bes}}
\end{figure}


\subsection{Decoupling Approach for \gls{GP} Posterior Sampling} \label{appendix_subsec_decouple_gp_posterior}

The central idea of decoupled sampling relies on the Matheron\textquoteright s rule for
Gaussian random variables \citep{journel1978mining,chiles2009geostatistics,doucet2010note} that uses variational Fourier features for an approximate prior and a deterministic data-dependent update term to efficiently draw samples.
Let $\phi(\bx_{i})=\sqrt{\frac{2}{l}}\cos\left(\theta_{i}^{T}\bx_{i}+\tau_{i}\right)$,
where $\theta_{i}$ is sampled proportional to the kernel\textquoteright s
spectral density, $\tau_{i}\sim \mathbb U(0,2\pi)$ and $l$ is the random feature dimension. We draw samples $f_{*}\mid\bX,\by,\bx_{*},\omega_m $ from a \gls{GP} posterior and evaluate them at any test input $\bx_{*}\in\mathcal{X}$ as
\begin{align}
\vspace{-10pt}
g_m(\bx_*) = \sum_{i=1}^{l} \underbrace{w_{i}\phi_{i}(\bx_{*})}_{\text{random}}+\sum_{j=1}^{N} \underbrace{v_{j}k(\bx_{*},\bx_{j})}_{\text{deterministic update}}\label{eq:GP_sample_decoupled}
\vspace{-10pt}
\end{align}
where $w_i \overset{\textrm{iid}}{\sim} \mathcal{N}(0,1)$, $\bw=[w_i,...,w_l]^T \in\mathbb{R}^{l}$, $\Phi=\phi(\bX)\in\mathbb{R}^{N\times l}$ and  $v=\big(K+\sigma_{f}^{2}\idenmat\big)^{-1}\left(\by-\Phi\bw\right)$. The sampling process above relies on the randomness of the Fourier features \citep{Rahimi_2007Random} via $\omega_m \sim p(\omega)$ 
\begin{align}
\mathbb{E}_{p(\omega)}\left[f_*\mid\bX,\by,\omega\right] & \approx \frac{1}{M}\sum_{\omega_{m} \sim p(\omega) }g(.\mid\bX,\by,\omega_{m})\label{eq:GPsampling}
\end{align}
where $\omega_{m}=\{\theta_{m},\tau_{m}\}$.  


\subsection{Tightening the variances\label{app_subsec:proof_variance}}

To put things in context, the base distribution $p_\textrm{base}$ for drawing \gls{GP} posterior samples in Eq. (\ref{eq:weighted_averaging}) can be used as (i) standard \gls{GP} \citep{wilson2020efficiently}, (ii) weighting using \gls{GP} (presented in Section \ref{subsec:defining_p_target}) or (iii) weighting using \gls{SRGP} (Section \ref{subsec:Square-root-Transformation-and}), respectively. We denote these approaches as  $p_{\gls{GP}}$, $p_{\textrm{w-}\gls{GP}}$ and $p_{\textrm{w-}\gls{SRGP}}$.   We characterize the sample variance of the proposed approach and show that our strategy in weighting the \gls{GP} posterior samples will tighten the variance of the \gls{GP} samples considering in the $2d$ space formed shown in Fig. \ref{fig:Illustration_2d_prob_space} (left). 
We adapt the Popoviciu's-inequality \citep{popoviciu1935equations}
to bound the variance in our two dimensional space considered in Fig.
\ref{fig:Illustration_2d_prob_space} (left).

\begin{lem}
\label{app_lem:(Popoviciu's-inequality-)} Let 
$l\in\mathbb{R}^{2}$ be a random variable  restricted to $[L_{1},U_{1}]$ in the 1st
dimension and $[L_{2},U_{2}]$ in the 2nd dimension, we bound the
variance of $l$ as \textup{$\textrm{Var}[l_{1}]\le\frac{\left(U_{1}-L_{1}\right)^{2}}{4}$
and $\textrm{Var}[l_{2}]\le\frac{\left(U_{2}-L_{2}\right)^{2}}{4}$.}
\end{lem}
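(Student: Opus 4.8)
The plan is to reduce the statement to the classical one-dimensional Popoviciu inequality, applied separately to each coordinate of $l$. Indeed, the two claimed bounds only involve the marginal variances $\textrm{Var}[l_1]$ and $\textrm{Var}[l_2]$ together with the per-coordinate ranges $[L_1,U_1]$ and $[L_2,U_2]$, so no joint or correlation argument between $l_1$ and $l_2$ is needed; it suffices to prove the one-dimensional bound and specialize it twice.

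So first I would fix a coordinate $i\in\{1,2\}$ and set $X:=l_i$, $m:=L_i$, $M:=U_i$, so that $X\in[m,M]$ almost surely. The key elementary fact I would use is that the variance is the minimal centered second moment: for every constant $c$, expanding gives $\mathbb{E}\bigl[(X-c)^2\bigr]=\textrm{Var}[X]+(\mathbb{E}[X]-c)^2\ge\textrm{Var}[X]$. I would then take $c$ to be the midpoint $\tfrac{m+M}{2}$. Since $X\in[m,M]$ pointwise, we have $\bigl|X-\tfrac{m+M}{2}\bigr|\le\tfrac{M-m}{2}$ pointwise, hence $\bigl(X-\tfrac{m+M}{2}\bigr)^2\le\tfrac{(M-m)^2}{4}$ pointwise; taking expectations, $\mathbb{E}\bigl[(X-\tfrac{m+M}{2})^2\bigr]\le\tfrac{(M-m)^2}{4}$. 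Combining the two inequalities gives $\textrm{Var}[X]\le\tfrac{(M-m)^2}{4}$. Setting $i=1$ then $i=2$ yields $\textrm{Var}[l_1]\le\tfrac{(U_1-L_1)^2}{4}$ and $\textrm{Var}[l_2]\le\tfrac{(U_2-L_2)^2}{4}$, as claimed.

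I do not expect any genuine obstacle here, since this is just the coordinatewise form of a textbook inequality; the only mild subtlety is interpreting ``the variance of $l$'', which the statement resolves for us by asking only for the two marginal bounds (a bound on, e.g., the trace of the covariance matrix would then follow by simply adding the two). An alternative, equivalent route I could take instead of the midpoint trick is to observe that $(X-m)(M-X)\ge 0$ almost surely, take expectations to get $\textrm{Var}[X]\le(\mathbb{E}[X]-m)(M-\mathbb{E}[X])$, and then bound the right-hand side by its maximum $\tfrac{(M-m)^2}{4}$ via the AM--GM inequality; I would present the midpoint argument since it is the shortest.
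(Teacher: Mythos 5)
Your proposal is correct and follows essentially the same route as the paper's proof: both reduce to the one-dimensional Popoviciu bound applied coordinatewise, observe that the centered second moment $\mathbb{E}\bigl[(l_i-t)^2\bigr]$ is minimized at $t=\mathbb{E}[l_i]$ where it equals the variance, and then bound its value at the midpoint $\tfrac{U_i+L_i}{2}$ by $\tfrac{(U_i-L_i)^2}{4}$. The only cosmetic difference is that the paper derives the midpoint bound by writing $l_i-\tfrac{U_i+L_i}{2}=\tfrac{1}{2}\bigl((l_i-L_i)+(l_i-U_i)\bigr)$ and using the signs of the two terms, whereas you use the equivalent direct pointwise bound $\bigl|l_i-\tfrac{U_i+L_i}{2}\bigr|\le\tfrac{U_i-L_i}{2}$.
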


\begin{proof}
Let us denote $[L_{1},L_{2}]=\inf \, l$ and $[U_{1},U_{2}]=\sup \, l$.
We define a function $h$ by $h(t)=\mathbb{E}\left[(l-t)^{2}\right]$
that $h''>0$.

Computing the derivative $h'$, and solving $h'(t)=-2\mathbb{E}[l]+2t=0$,
we find that $g$ achieves its minimum at $t=\mathbb{E}[l]$. We then
consider the value of the function $h$ at the specific point $t=\left[\frac{U_{1}+L_{1}}{2},\frac{U_{2}+L_{2}}{2}\right]$.
Next, we derive in the first dimension that
\begin{align*}
\textrm{Var}[l_{1}] & =h\left(\mathbb{E}[l_{1}]\right)\le h\left(\frac{U_{1}+L_{1}}{2}\right)\\
 & =\mathbb{E}\left[\left(l_{1}-\frac{U_{1}+L_{1}}{2}\right)^{2}\right] \quad \quad \quad \quad \quad \quad \quad \quad \textrm{by\thinspace the\,definition\thinspace of}\thinspace h()\\
 & =\frac{1}{4}\mathbb{E}\left[\left((l_{1}-L_{1})+(l_{1}-U_{1})\right)^{2}\right]\\
 & \le\frac{1}{4}\mathbb{E}\left[\left(U_{1}-L_{1}\right)^{2}\right]=\frac{\left(U_{1}-L_{1}\right)^{2}}{4}.
\end{align*}
In the last inequality, we have $\bigl((l_{1}-L_{1})+(l_{1}-U_{1})\bigr)^{2}\le\bigl((l_{1}-L_{1})-(l_{1}-U_{1})\bigr)^{2}=\bigl(U_{1}-L_{1}\bigr)^{2}$ due to $l_{1}-L_{1}\ge0$ and $l_{1}-U_{1}\le0$.
Similarly, we have the term in the second dimension bounded $\textrm{Var}[l_{2}]\le\frac{\left(U_{2}-L_{2}\right)^{2}}{4}$.
This concludes our proof.
\end{proof}
\begin{lem} (Lemma \ref{lem:var_target_better_var_proposal} in the main paper)
In the 2d
space formed by $\left[ f^{+},f^{-} \right]$ and the \gls{GP} posterior samples $\{g^+_m, g^-_m \}_{m=1}^M$,
the sample variances by weighting our \gls{SRGP} and \gls{GP} are smaller than the
\gls{GP} posterior sampling \citep{wilson2020efficiently} making no use of the bounds: ${\displaystyle \var\left[p_{\textrm{w-}\gls{GP}}\right]} \le{\displaystyle \var\left[p_{\gls{GP}}\right]}$ and ${\displaystyle \var\left[p_{\textrm{w-}\gls{SRGP}}\right]}\le{\displaystyle \var\left[p_{\gls{GP}}\right]}$.
\end{lem}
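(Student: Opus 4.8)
The strategy is to recast all three quantities as variances of a distribution over the two-dimensional space of sample extrema $\left[g^-_m,g^+_m\right]$ depicted in Fig.~\ref{fig:Illustration_2d_prob_space} (left), and then apply the two-dimensional Popoviciu bound of Lemma~\ref{app_lem:(Popoviciu's-inequality-)}, so that the whole comparison reduces to comparing the effective supports of the three schemes coordinate-wise. First I would fix notation: in coordinate $+$ the raw sampler $p_{\gls{GP}}$ produces values $g^+_m$ lying in an interval $\left[L^{\gls{GP}}_+,U^{\gls{GP}}_+\right]$ (the spread of the unconstrained posterior maxima), and likewise $\left[L^{\gls{GP}}_-,U^{\gls{GP}}_-\right]$ in coordinate $-$. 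For $p_{\textrm{w-}\gls{GP}}$ the weight $\pi(g_m)$ of Eq.~(\ref{eq:likelihood_gp_sample}) is a Gaussian density centred at $\left[f^-,f^+\right]$; under the rejection-sampling reading given after Eq.~(\ref{eq:likelihood_gp_sample}), a sample contributes essentially only when $g^+_m\in[f^+-2\eta_+,f^+ +2\eta_+]$ and $g^-_m\in[f^- -2\eta_-,f^- +2\eta_-]$, so the support of the reweighted distribution in coordinate $+$ is contained in $\left[L^{\textrm w}_+,U^{\textrm w}_+\right]:=\left[L^{\gls{GP}}_+,U^{\gls{GP}}_+\right]\cap[f^+-2\eta_+,f^+ +2\eta_+]\subseteq\left[L^{\gls{GP}}_+,U^{\gls{GP}}_+\right]$, and analogously in coordinate $-$.

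Next I would invoke Lemma~\ref{app_lem:(Popoviciu's-inequality-)} coordinate-wise. Since the reweighted distribution is supported on $\left[L^{\textrm w}_+,U^{\textrm w}_+\right]$, its variance in that coordinate is at most $\tfrac14(U^{\textrm w}_+-L^{\textrm w}_+)^2\le\tfrac14(U^{\gls{GP}}_+-L^{\gls{GP}}_+)^2$, the last step by the interval inclusion; the same holds in coordinate $-$, which gives $\var\left[p_{\textrm{w-}\gls{GP}}\right]\le\var\left[p_{\gls{GP}}\right]$ at the level of the Popoviciu bound. For $p_{\textrm{w-}\gls{SRGP}}$ I would additionally use that the square-root parametrisation of Eq.~(\ref{eq:transformed_GP}), $f(\cdot)=f^+ +2\eta_+ -\tfrac12 h^2(\cdot)$, already forces every sample to satisfy $g^+_m\le f^+ +2\eta_+$ \emph{before} any weighting, so the upper end of the admissible interval is clipped by construction (the orange region in Fig.~\ref{fig:Illustration_2d_prob_space}, left, is excised); the two-sided band induced by $\pi(g_m)$ still applies on top of this and the clipping can never enlarge the support, so $\var\left[p_{\textrm{w-}\gls{SRGP}}\right]$ obeys the same chain of inequalities and is $\le\var\left[p_{\gls{GP}}\right]$. (Lemma~\ref{lem:var_SRGP_better_var_GP} then sharpens the SRGP case against $p_{\textrm{w-}\gls{GP}}$ itself, using that the SRGP interval in each coordinate is a subinterval of the w-\gls{GP} one.)

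The main obstacle is the precise sense of the claimed inequality. Popoviciu's inequality yields an \emph{upper bound} on a variance in terms of the width of its support, so the clean, unconditional statement one proves this way is that weighting tightens the variance \emph{bound}. To promote this to an ordering of the realised sample variances one needs the extra observation that, with no bound information, the unconstrained posterior extrema concentrate around the \gls{GP}'s own predictive maximum/minimum — typically far from $\left[f^-,f^+\right]$ — so $\var\left[p_{\gls{GP}}\right]$ is not small and is in fact comparable to its range, whereas the weighted schemes are confined to a band of half-width $2\eta_\pm$ about $\left[f^-,f^+\right]$; I would carry out that comparison in the mild regime where $\eta_\pm$ is smaller than the spread of the unconstrained extrema (the empirically relevant case, cf. Table~\ref{tab:Acceptance-ratio} and Fig.~\ref{fig:gp_sampling_comparison}) and otherwise retreat to the bound-level statement. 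The remaining steps — verifying the interval inclusions and the coordinate-wise application of Lemma~\ref{app_lem:(Popoviciu's-inequality-)} — are routine.
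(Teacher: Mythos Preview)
Your approach is essentially the paper's: both arguments work coordinate-wise in the $\left[g^-_m,g^+_m\right]$ plane, both invoke the two-dimensional Popoviciu bound of Lemma~\ref{app_lem:(Popoviciu's-inequality-)}, and both then argue that weighting (and, for \gls{SRGP}, the transformation $f=f^++2\eta_+-\tfrac12 h^2$) shrinks the admissible interval in each coordinate, so that $U^{(d)}_{\textrm{w-\gls{GP}}}-L^{(d)}_{\textrm{w-\gls{GP}}}\le U^{(d)}_{\gls{GP}}-L^{(d)}_{\gls{GP}}$ and likewise for \gls{SRGP}. The paper simply asserts $U^{(d)}_{\textrm{w-\gls{GP}}}<U^{(d)}_{\gls{GP}}$ and $L^{(d)}_{\textrm{w-\gls{GP}}}>L^{(d)}_{\gls{GP}}$ and concludes the variance inequality directly from there.

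You go further than the paper in one respect: you correctly flag that Popoviciu only delivers an \emph{upper bound} on each variance, so from $\tfrac14(U^{(d)}_{\textrm{w}}-L^{(d)}_{\textrm{w}})^2\le\tfrac14(U^{(d)}_{\gls{GP}}-L^{(d)}_{\gls{GP}})^2$ one obtains an ordering of the bounds, not of the variances themselves. The paper's proof does not address this and passes straight from the interval inclusion to $\var[p_{\textrm{w-\gls{GP}}}]\le\var[p_{\gls{GP}}]$. Your proposed workaround --- treating the claim as a statement about the Popoviciu bound, and otherwise appealing to the regime where $\eta_\pm$ is small relative to the spread of the unconstrained extrema --- is a reasonable reading of what the lemma is really asserting (the paper's surrounding text and Fig.~\ref{fig:Illustration_2d_prob_space} also speak of ``tightening the sample variance bound''), and is in fact more careful than the paper's own argument.
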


\begin{proof}
We first bound the sample variance in each dimension $d=\{1,2\}$ in Lemma \ref{app_lem:(Popoviciu's-inequality-)} as
\begin{minipage}[t]{0.48\columnwidth}%
\hspace{-20pt}
\begin{align*}
{\displaystyle \var\left[p_{\textrm{\gls{GP}}}^{(d)}\right]} \le\frac{\left(U_{\textrm{\gls{GP}}}^{(d)}-L_{\textrm{\gls{GP}}}^{(d)}\right)^{2}}{4},
\end{align*}
\end{minipage}%
\begin{minipage}[t]{0.48\columnwidth}%
\begin{align*}
{\displaystyle \var\left[p_{\textrm{w-\gls{GP}}}^{(d)}\right]} & \le\frac{\left(U_{\textrm{w-\gls{GP}}}^{(d)}-L_{\textrm{w-\gls{GP}}}^{(d)}\right)^{2}}{4}
\end{align*}
\end{minipage}

\hspace{-1pt}where the base distribution can be used as (i) a standard \gls{GP} , i.e., ${\displaystyle p_{\textrm{\gls{GP}}}}=p\bigl(f(\cdot) \mid\bX,\by \bigr)$
or (ii) a weighted \gls{GP} with the bounds information ${\displaystyle p_{\textrm{w-\gls{GP}}}}:=\pi(g_m)=p\bigl( f(\cdot) \mid\bX,\by,f^{+},f^{-}\bigr)$.

In our \gls{GP} sampling and weighting setting,
we have $U_{\textrm{w-\gls{GP}}}^{(d)}<U_{\textrm{\gls{GP}}}^{(d)},\forall d=1,2$
and $L_{\textrm{w-\gls{GP}}}^{(d)}>L_{\textrm{\gls{GP}}}^{(d)},\forall d$.
This means $U_{\textrm{w-\gls{GP}}}^{(d)}-L_{\textrm{w-\gls{GP}}}^{(d)}\le U_{\textrm{\gls{GP}}}^{(d)}-L_{\textrm{\gls{GP}}}^{(d)},\forall d$
and thus $\var\left[p_{\textrm{w-\gls{GP}}}\right]\le\var\left[p_{\textrm{\gls{GP}}}\right]$. Using similar derivations, we get ${\displaystyle \var\left[p_{\gls{GP}}\right]}\ge{\displaystyle \var\left[p_{\textrm{w-}\gls{SRGP}}\right]}$, thus conclude the proof.
\end{proof}

This can also be seen in Fig. \ref{fig:Illustration_2d_prob_space}
that indeed $\pi(g_m)$ is bounded by the density of $p_{\textrm{\gls{GP}}}$. 
Using \textsc{srgp}, we have flexibly
transformed the surrogate toward the desirable area. Under
such transformation, the $p_{\textrm{w-\gls{SRGP}}}$
has a tighter upper bound than $p_{\textrm{w-\gls{GP}}}$. We next
show that the sample variance of the base distribution using weighting \gls{SRGP}
is smaller than using weighting \gls{GP}, ${\displaystyle \var\left[p_{\textrm{w-\gls{SRGP}}}\right]\le\var\left[p_{\textrm{w-\gls{GP}}}\right]}$.
Thus, the \gls{SRGP} is more sample-efficient than \gls{GP} in generating accepted
samples. This variance property can be intuitively seen from Fig.
\ref{fig:Illustration_2d_prob_space} (left) in the main paper. In addition, we have numerically demonstrated this comparison in Table \ref{tab:Acceptance-ratio}.
\begin{lem} (Lemma \ref{lem:var_SRGP_better_var_GP} in the main paper)
In the 2d space formed
by $\left[ f^{+},f^{-} \right]$ and the \gls{GP} posterior samples $\{g^+_m, g^-_m \}_{m=1}^M$,
the sample variance of weighting \gls{SRGP} is smaller than weighting \gls{GP}: $\var\left[p_{\textrm{w-\gls{SRGP}}}\right]\le{\displaystyle \var\left[p_{\textrm{w-\gls{GP}}}\right]}$.
\end{lem}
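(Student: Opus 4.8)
The plan is to compare the two weighted distributions $p_{\textrm{w-\gls{SRGP}}}$ and $p_{\textrm{w-\gls{GP}}}$ coordinate-wise in the $2d$ space of $(g^-_m, g^+_m)$ and show that the \gls{SRGP}-based base distribution is supported on a strictly smaller interval in each coordinate, then invoke the Popoviciu-type bound of Lemma \ref{app_lem:(Popoviciu's-inequality-)}. First I would record that, by construction in Eq.~(\ref{eq:transformed_GP}), every \gls{GP} sample $g_m(\cdot)$ drawn through the square-root transformation automatically satisfies $g_m(\bx) \le f^+ + 2\eta_+$ for all $\bx$, so the empirical range of $\{g^+_m\}$ (the second coordinate) is bounded above by $f^+ + 2\eta_+$, whereas an untransformed \gls{GP} sample carries no such ceiling and its maximum can exceed this value with positive probability. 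Hence $U^{(2)}_{\textrm{w-\gls{SRGP}}} \le U^{(2)}_{\textrm{w-\gls{GP}}}$, and an analogous (looser) argument applies to the lower coordinate and lower endpoints, giving $U^{(d)}_{\textrm{w-\gls{SRGP}}} - L^{(d)}_{\textrm{w-\gls{SRGP}}} \le U^{(d)}_{\textrm{w-\gls{GP}}} - L^{(d)}_{\textrm{w-\gls{GP}}}$ for $d=1,2$.

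Next I would apply Lemma \ref{app_lem:(Popoviciu's-inequality-)} in each dimension to both distributions:
\begin{align*}
\var\left[p^{(d)}_{\textrm{w-\gls{SRGP}}}\right] \le \frac{\left(U^{(d)}_{\textrm{w-\gls{SRGP}}} - L^{(d)}_{\textrm{w-\gls{SRGP}}}\right)^2}{4} \le \frac{\left(U^{(d)}_{\textrm{w-\gls{GP}}} - L^{(d)}_{\textrm{w-\gls{GP}}}\right)^2}{4},
\end{align*}
and then sum the per-dimension variances (the weighting probability $\pi(g_m)$ is the same isotropic bivariate Gaussian in both cases, so the normalized weights act identically and the total variance is the sum of the two coordinate variances). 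This yields $\var\left[p_{\textrm{w-\gls{SRGP}}}\right] \le \var\left[p_{\textrm{w-\gls{GP}}}\right]$, as claimed, and I would point back to Fig.~\ref{fig:Illustration_2d_prob_space}~(left) and Table~\ref{tab:Acceptance-ratio} for the numerical corroboration.

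The main obstacle is making the containment of supports rigorous rather than merely heuristic: the Popoviciu bound only controls the \emph{worst-case} variance via the endpoints, so to conclude a genuine inequality between the \emph{actual} variances one must argue that the \gls{SRGP} samples really do concentrate inside the \gls{GP} support — not just that their a priori range is narrower. I would handle this by noting that the square-root map $f = f^+ + 2\eta_+ - \tfrac12 h^2$ is a deterministic contraction toward $f^+ + 2\eta_+$ applied to a \gls{GP} in $h$-space, so the induced law on $(g^-_m, g^+_m)$ is a pushforward that cannot inflate either coordinate's spread beyond the transformed ceiling; combined with the endpoint comparison this closes the gap. A secondary subtlety is that after weighting by $\pi(g_m)$ the effective support could in principle differ, but since $\pi(g_m)$ is a fixed Gaussian reweighting centered at $[f^-, f^+]$ and the \gls{SRGP} support is already the tighter one, reweighting only sharpens the comparison, so no additional work is needed there.
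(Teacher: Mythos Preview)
Your approach is essentially the paper's: compare the coordinate-wise ranges of the two weighted sample distributions and apply the Popoviciu bound (Lemma~\ref{app_lem:(Popoviciu's-inequality-)}) in each dimension. Two minor points of divergence are worth noting. First, for the $g^-_m$ coordinate there is no ``analogous (looser)'' containment argument --- the square-root map $f = f^+ + 2\eta_+ - \tfrac12 h^2$ only caps samples from above and says nothing about $f^-$ --- and the paper instead simply asserts that the endpoints in that dimension are \emph{equal} for the two schemes ($U^{(2)}_{\textrm{w-\gls{GP}}}=U^{(2)}_{\textrm{w-\gls{SRGP}}}$ and $L^{(2)}_{\textrm{w-\gls{GP}}}=L^{(2)}_{\textrm{w-\gls{SRGP}}}$), so the range comparison there holds trivially with equality; the paper also takes the lower endpoints in the $g^+_m$ dimension to coincide. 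Second, the obstacle you flag --- that Popoviciu only bounds each variance by its range and does not by itself order two actual variances --- is present in the paper's proof too and is not addressed there: the paper passes directly from the endpoint comparison to $\var\bigl[p^{(d)}_{\textrm{w-\gls{SRGP}}}\bigr]\le\var\bigl[p^{(d)}_{\textrm{w-\gls{GP}}}\bigr]$ without the contraction/pushforward patch you propose, so that extra step of yours goes beyond what the paper supplies.
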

\begin{proof}
To prove $\var\left[p_{\textrm{w-\gls{SRGP}}}\right]\le{\displaystyle \var\left[p_{\textrm{w-\gls{GP}}}\right]}$, it is equivalent to show that the inequality is true in all dimensions of the distributions considered. Let us denote the first dimension $(1)$ is formed by $f^{+}-\max g_m(\cdot)$ and the second dimension $(2)$ is by $f^{-}-\min g_m(\cdot)$, see Fig. \ref{fig:Illustration_2d_prob_space}. In the first dimension, we have the upper bound $U_{\textrm{w-\gls{SRGP}}}^{(1)}=f^{+}+2\eta_+\le U_{\textrm{w-\gls{GP}}}^{(1)}$,
the equality happens when $\eta_+\rightarrow\infty$ and the lower
bound $L_{\textrm{w-\gls{GP}}}^{(1)}=L_{\textrm{w-\gls{SRGP}}}^{(1)}$.
In the second dimension, we have both the upper and lower bounds unchanged $U_{\textrm{w-\gls{GP}}}^{(2)}=U_{\textrm{w-\gls{SRGP}}}^{(2)}$
and $L_{\textrm{w-\gls{GP}}}^{(2)}=L_{\textrm{w-\gls{SRGP}}}^{(2)}$.

Using Lemma \ref{app_lem:(Popoviciu's-inequality-)}, we conclude
that $\var\left[p_{\textrm{w-\gls{SRGP}}}^{(1)}\right]\le{\displaystyle \var\left[p_{\textrm{w-\gls{GP}}}^{(1)}\right]}$
and $\var\left[p_{\textrm{w-\gls{SRGP}}}^{(2)}\right]\le{\displaystyle \var\left[p_{\textrm{w-\gls{GP}}}^{(2)}\right]}$.
In words, the base distribution by \gls{SRGP} (Section \ref{subsec:Square-root-Transformation-and})
$p_{\textrm{w-\gls{SRGP}}}$ has a tighter sample variance than $p_{\textrm{w-\gls{GP}}}$
\citep{wilson2020efficiently}.
\end{proof}



\subsection{Derivations of the posterior approximation in Eq. (\ref{eq:tgp_mu}) and Eq. (\ref{eq:tgp_sigma})} \label{app_sec:posterior_approximation}

We follow existing works, such as \cite{gunter2014sampling,ru2018fast,nguyen2020knowing} to approximate $p \bigl( f(\bx_*) \mid \bX,\by, \eta_+, f^+ \bigr)$ with a Gaussian distribution by using a Taylor approximation.

Given $f(\bx)=f^+ + 2 \eta_+ - \frac{1}{2} h^2(\bx)$ from Eq. (\ref{eq:transformed_GP}), we perform a local linearisation of the parabolic transformation $q \bigl(h(\bx) \bigr) = f^+ + 2\eta_+ - \frac{1}{2} h^2(\bx)$ around $h_0$ to obtain $f(h) \approx q(h_0) + q'(h_0)(h-h_0) + ...$ where the gradient $q'(h_0)=-h$. 
Then we set $h_0 = m_h$ where $m_h$ is defined in Eq. (\ref{eq:transformed_GP}) to obtain an expression for
$f$ which is linear in $h$
\begin{align}
    f(\bx) &\approx q \bigl(h_0(\bx) \bigr) + q'\bigl(h_0(\bx) \bigr) \bigl(h(\bx)-h_0(\bx) \bigr) \\
       & = \bigl[f^+ + 2\eta_+ - \frac{1}{2} m_h^2(\bx) \bigr] - m_h(\bx) \bigl[h(\bx) - m_h(\bx) \bigr] \\
       & = f^+ + 2\eta_+ + \frac{1}{2} m_h^2(\bx) - m_h(\bx) h(\bx)
\end{align}

We make use an important property that the affine transformation of a Gaussian process remains Gaussian. We state the property as follows: let $y = A \bx + t$ , $\bx \sim \mathcal{N}(\mu_\bx, \sigma_\bx)$, then $y= \mathcal{N}(A \mu_\bx + t, A \sigma_\bx A)$.

Let apply this property into our setting. Given $ h \sim \mathcal{N}( \mu_h, \sigma^2_h)$ and $f \sim \mathcal{N}(\mu_f, \sigma_f) $, we obtain
\begin{align}
    \mu_f &= f^+ + 2\eta_+ + \frac{1}{2} \mu_h^2 - \mu_h^2 = f^+ + 2\eta_+ - \frac{1}{2} \mu_h^2  \\
    \sigma^2_f &= \mu_h \sigma^2_h \mu_h
\end{align}
as shown in Eq. (\ref{eq:tgp_mu}) and Eq. (\ref{eq:tgp_sigma}) in the main paper.

\section{Ablation Studies}

\subsection{Ablation studies with different numbers of \gls{GP} samples $M$}\label{app_subsec:wrtM}

We study the sensitivity with varying the number of \gls{GP} samples $M\in[20,50,200,300,500]$
in Fig. \ref{fig:GP_sampling_ablation_wrt_M}. We show that if the
number of $M$ is set too small, such as $M\le50$. The performance
can be negatively affected. On the other hand, if we set $M$ large enough to
be $[200,500]$, they tend to perform similarly well. This is the recommended
range for $M$ in our paper. We also note that increasing $M$ will
require more computation. However, such computation can be well handled
by using parallel infrastructure since drawing \gls{GP} samples are independent
of each other.

\begin{figure}
\includegraphics[width=0.33\textwidth]{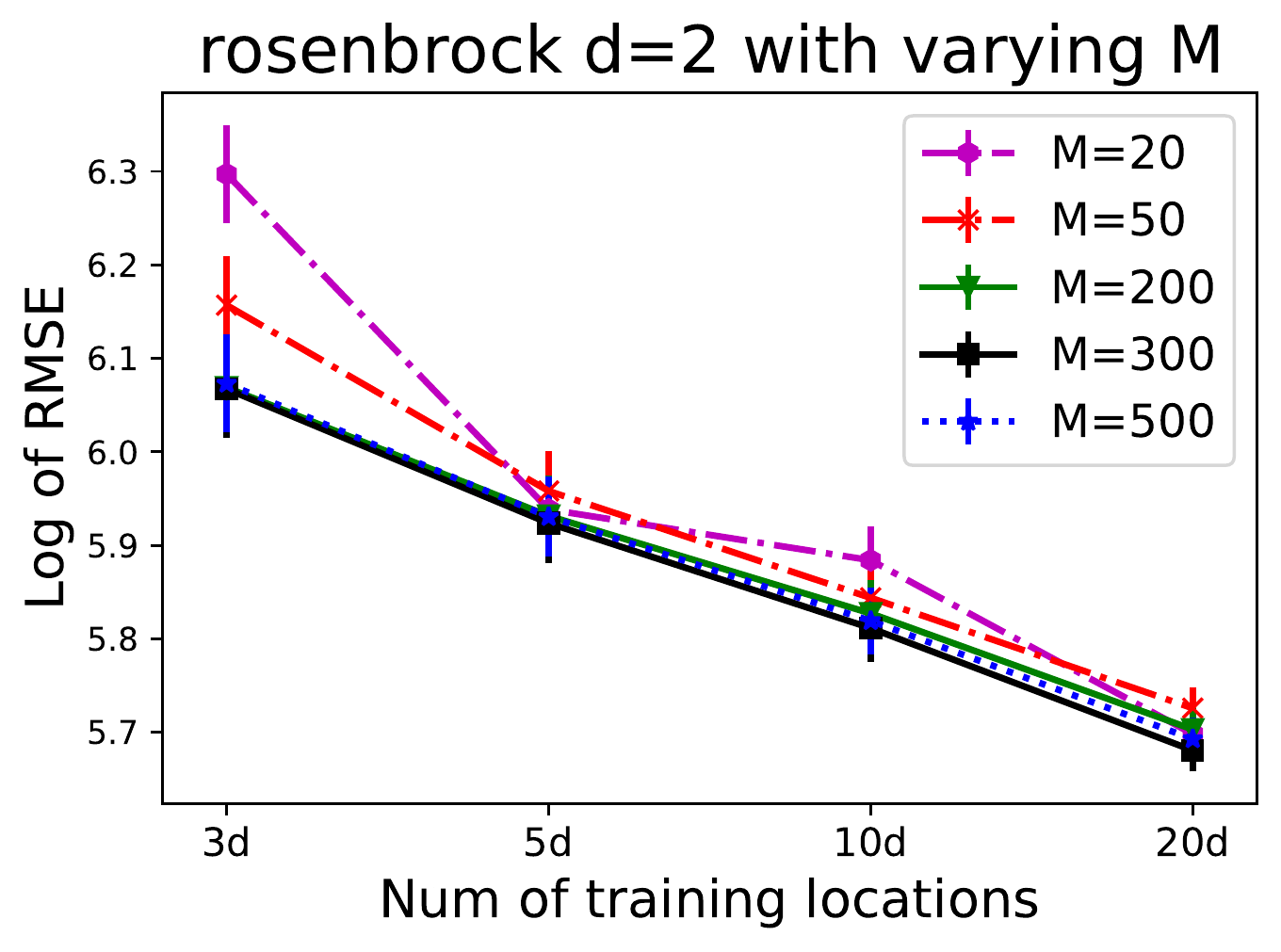}\includegraphics[width=0.33\textwidth]{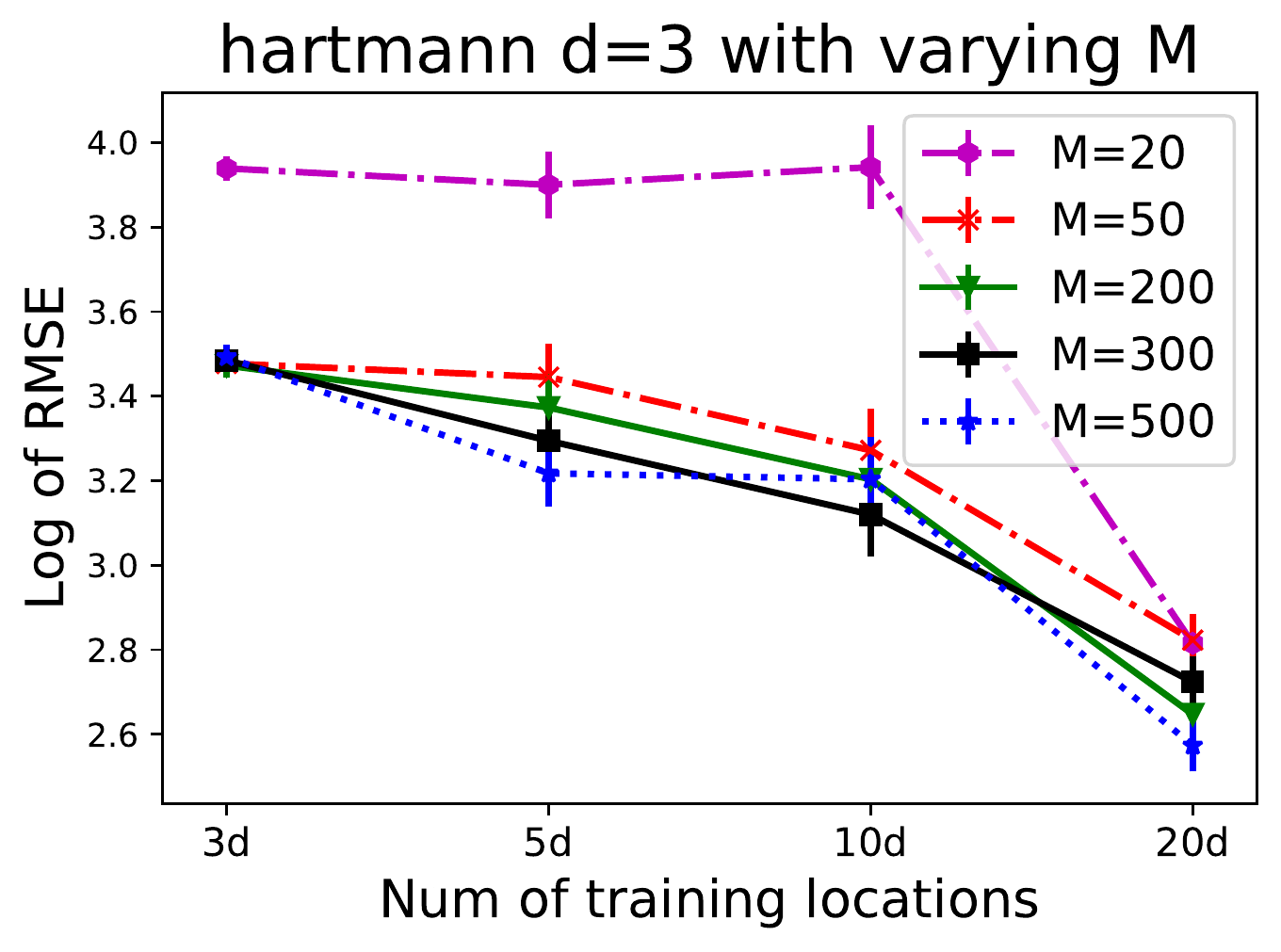}\includegraphics[width=0.33\textwidth]{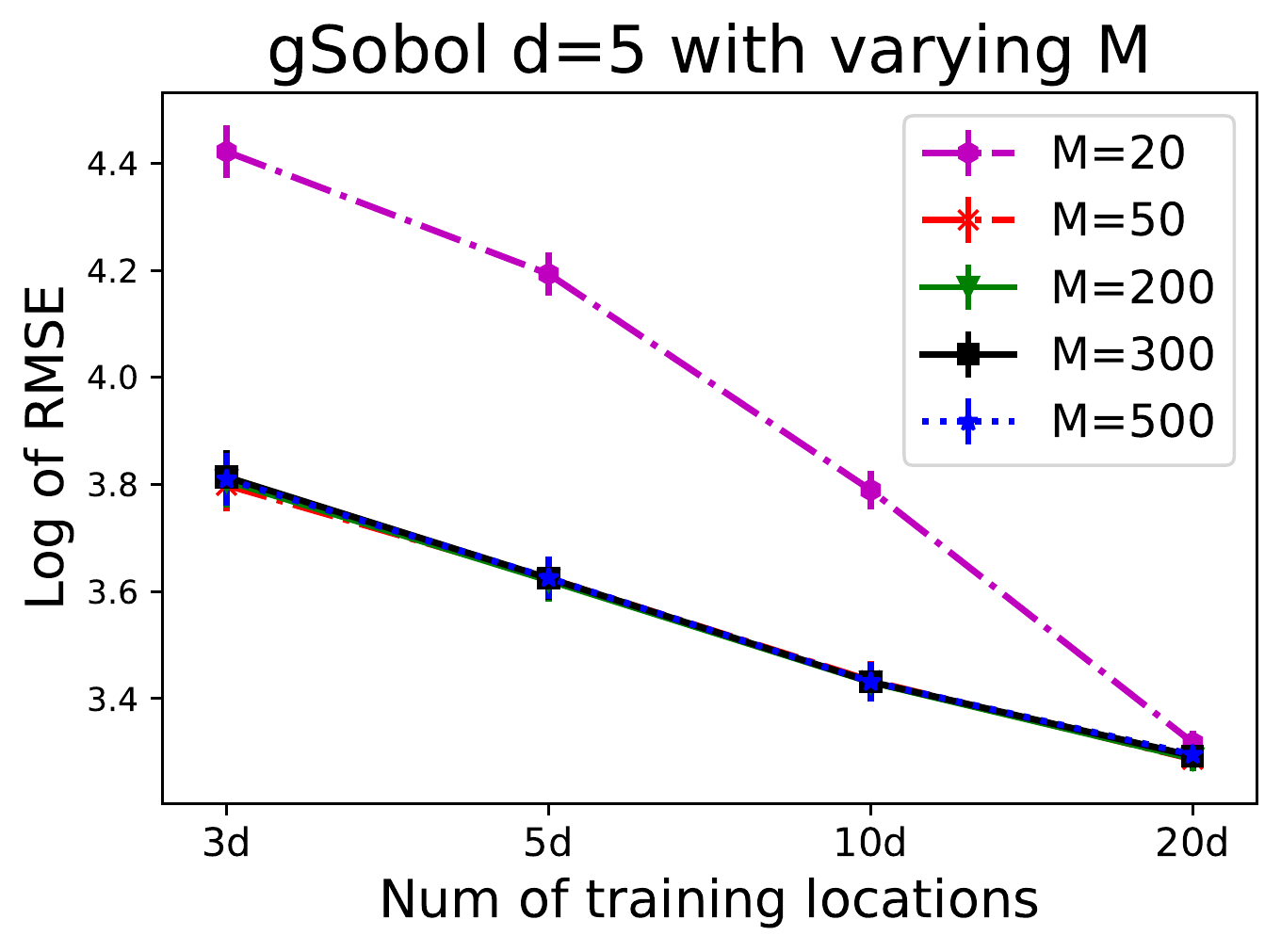}
\vspace{-7pt}
\caption{\gls{GP} posterior sampling with different number of \gls{GP} samples $M$. Our approach using $M\in[200,500]$ will result in the best performance while $M=20$ is the worst.\label{fig:GP_sampling_ablation_wrt_M}}
\vspace{-7pt}
\end{figure}

\begin{figure}
\begin{centering}
\includegraphics[width=0.6\textwidth]{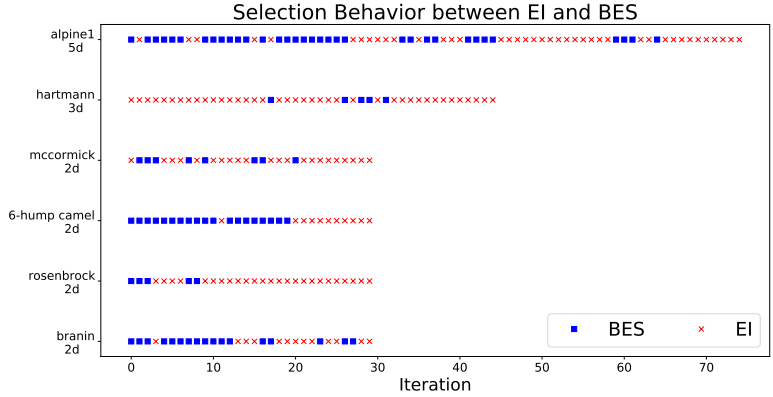}
\par\end{centering}
\caption{When none of the \gls{GP} sample is accepted (falling into the two standard deviation of $\pi(g_m)$), we propose to
use the \gls{EI} instead of \gls{BES}. This illustration indicates the switching
behavior between \gls{EI} vs \gls{BES}.\label{fig:Switching-between-EI_BES}}
\end{figure}

\subsection{Switching between \gls{EI} vs \gls{BES} \label{subsec:Switching-between-EI}}

We have discussed in Section \ref{sec:conclusion} that there
will be some iterations when we don't have any accepted sample for \gls{BES}. There
are several reasons, such as the observations are biased to reconstruct
the black-box function, the estimated \gls{GP} hyperparameters are not correct.
When the number of accepted samples is zero, we will perform Expected
improvement \citep{Jones_1998Efficient} instead of \gls{BES}. Our choice
is motivated by the strong and robust performance of \gls{EI}. We note that an alternative solution is to keep sampling the \gls{GP} until we get at least one accepted sample. 

We illustrate this behavior in Fig. \ref{fig:Switching-between-EI_BES}. We use the number of \gls{GP} samples $M=200$, $\eta^2_+=0.03d$ and $\eta^2_-=0.6d$
where $d$ is the input dimension. 
We use a blue square to indicate if \gls{BES} is used for this iteration
and red cross if \gls{EI} is taken.

\begin{figure*}
\begin{centering}
\includegraphics[width=0.48\textwidth]{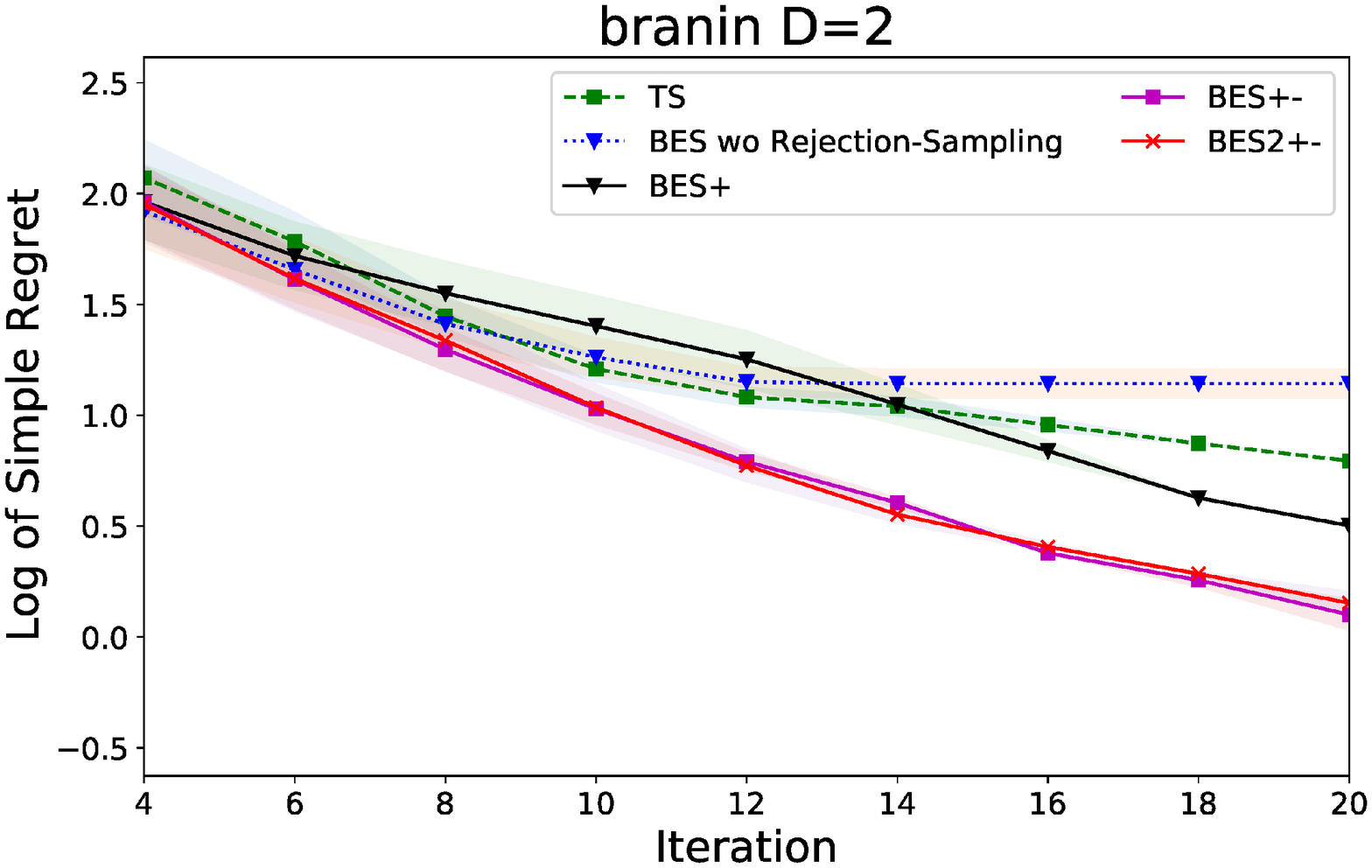}\includegraphics[width=0.48\textwidth]{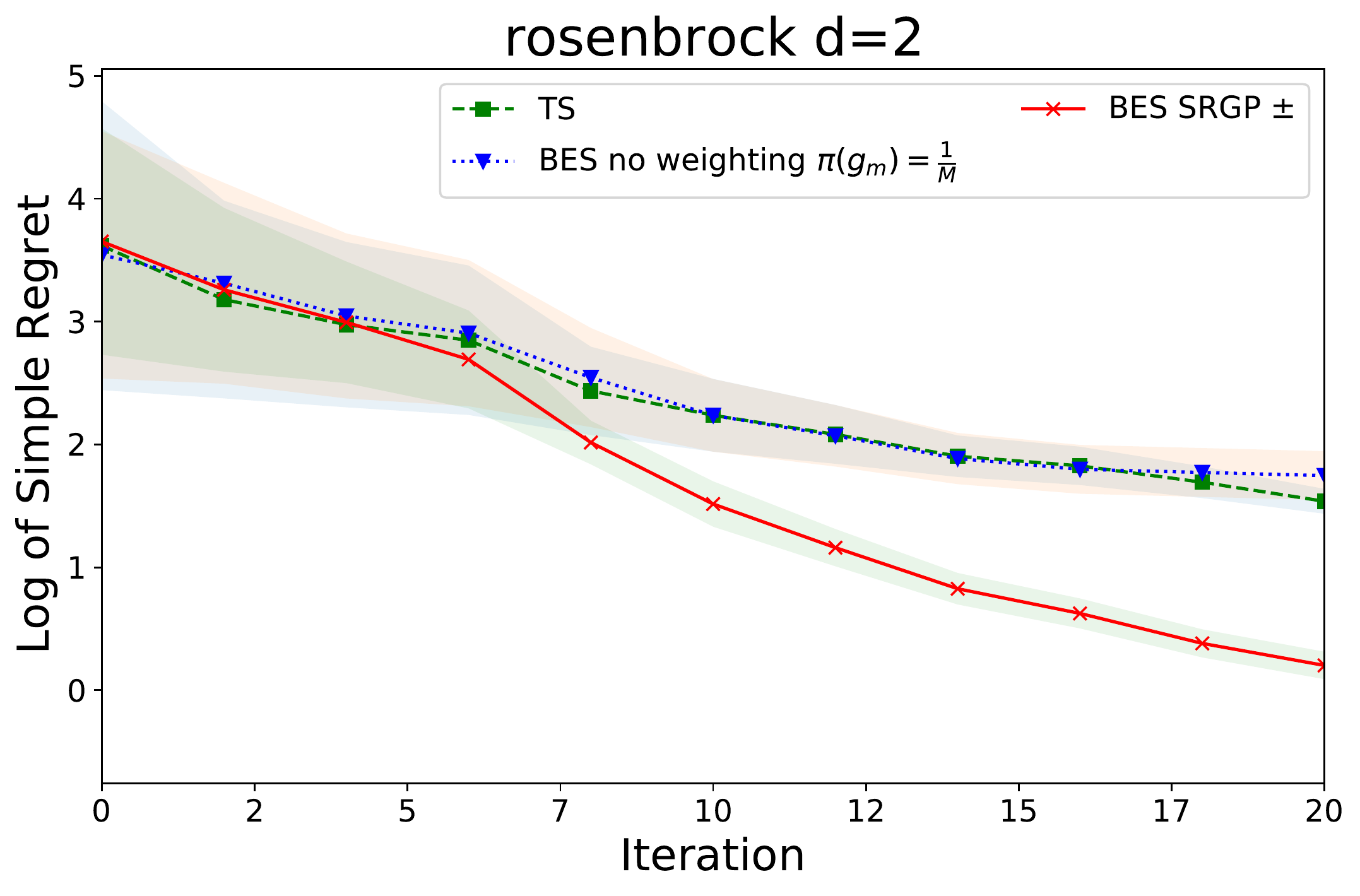}
\par\end{centering}
\begin{centering}
\includegraphics[width=0.48\textwidth]{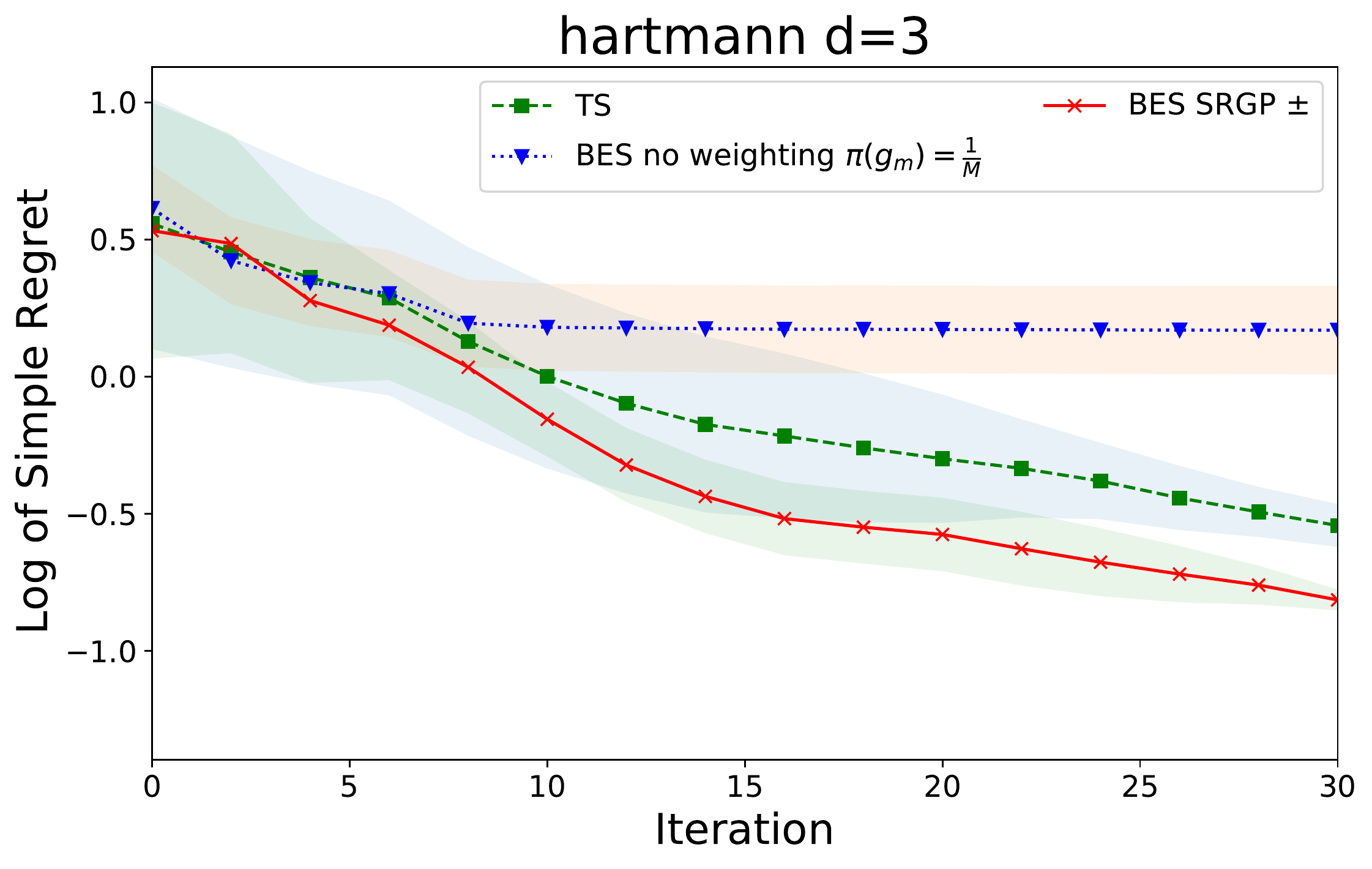}\includegraphics[width=0.48\textwidth]{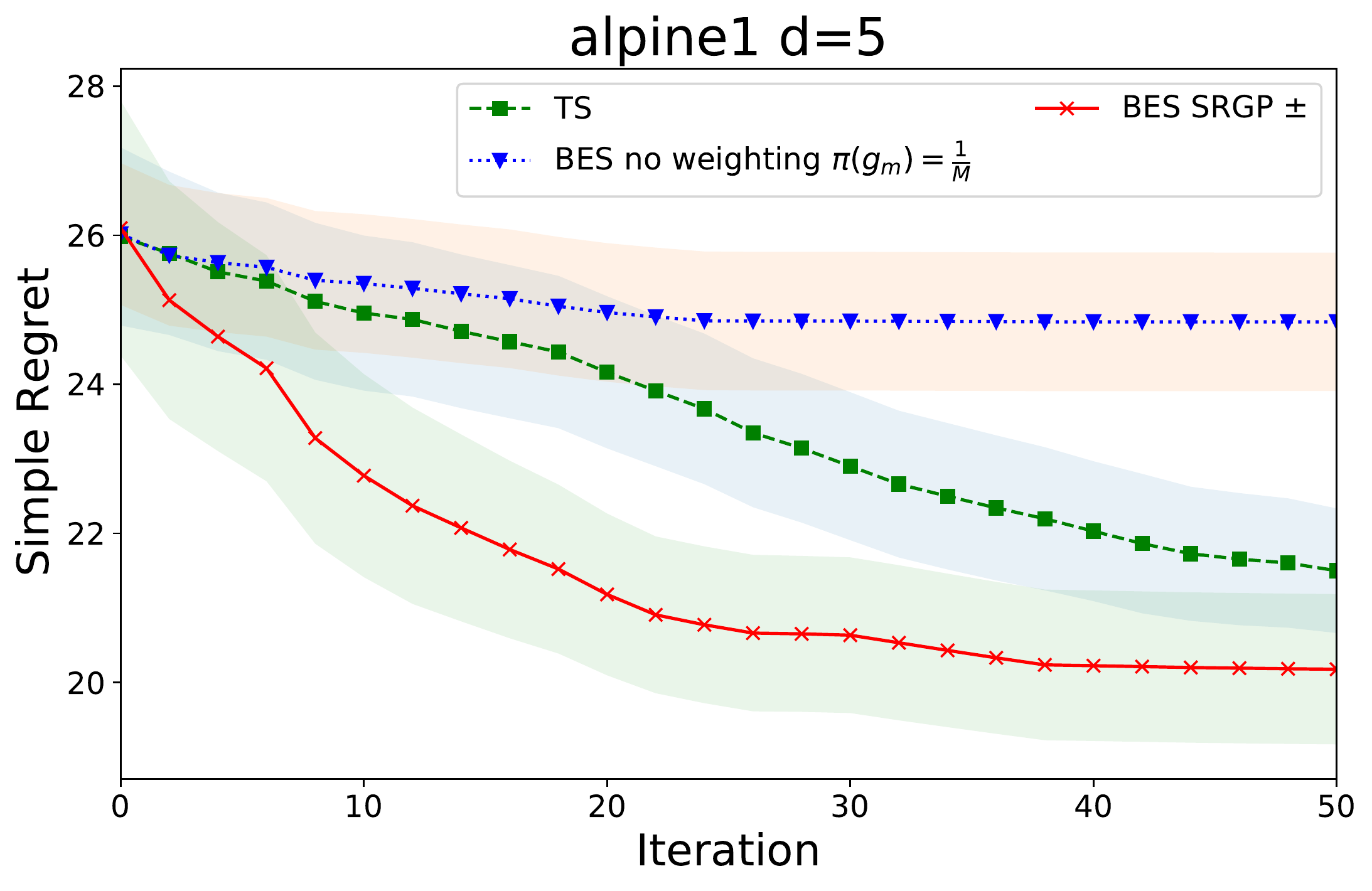}
\par\end{centering}
\vspace{-3pt}
\caption{Ablation studies of \gls{BES} without weighting or $\pi(g_m)=\frac{1}{M}$. The weighting step significantly
boosts the performance of \gls{BES} to achieve significantly better
than \gls{BES} without weighting (blue curve). \textit{Simple regret} is defined as the gap  between the optimum value and the best found value, i.e., $| \max_{\forall \bx \in \mathcal{X}} f(\bx) -  \max_{\forall \bx_i \in D_t} f(\bx_i)| $.\label{fig:Ablation-studies-of_wo_RS}}
\vspace{-7pt}
\end{figure*}

\subsection{BES without weighting \label{subsec:BES-without-rejection_ablation}}

We provide another set of experiments to compare the performance with a variant version of \gls{BES} without weighting and Thompson sampling (or \gls{BES} with $M=1$). The \gls{BES} setting without weighting is when we ignore the $f^+$, $f^-$ and set the weighting probability $\pi(g_m)=\frac{1}{M}$ uniformly.

Fig. \ref{fig:Ablation-studies-of_wo_RS} shows that
\gls{BES} without weighting (blue curve) will perform poorly. Particularly,
this version will accept all \gls{GP} posterior samples and gain information
about all of them, including the `bad' samples. Thompson sampling (\gls{TS})
is a special version of \gls{BES} in which we only draw a single \gls{GP} posterior
sample $g(\cdot)$. Then, we select a next point to evaluate as the mode
of this sample, i.e., $\bx_{t}=\arg\max_{\bx\in\mathcal{X}}g(\bx)$.
Because drawing a single \gls{GP} sample will have very high variance especially
in high-dimension, \gls{TS} will generally perform inferior in a sequential
\gls{BO} setting, as discussed in \cite{Hernandez_2014Predictive,Nguyen_ICDM2019}.

Exploiting the knowledge about both $f^{+},f^{-}$ will result in
better performance than exploiting a single value of $f^{+}$. Especially,
our \gls{BES} version using square-root transformation \gls{GP} will achieve the
best performance. This is because it can take into account the knowledge
of $f^{+}$ to tailor the \gls{GP} sample toward the desirable shape. This
will lead to better theoretical property as presented in Lemma \ref{lem:var_target_better_var_proposal}
and Lemma \ref{lem:var_SRGP_better_var_GP}.

The result in this experiment again highlights our choice of \gls{BES} in Section \ref{sec:BES} that gains information only about good \gls{GP} posterior samples.

\begin{figure*}
\begin{centering}
\includegraphics[width=0.98\textwidth]{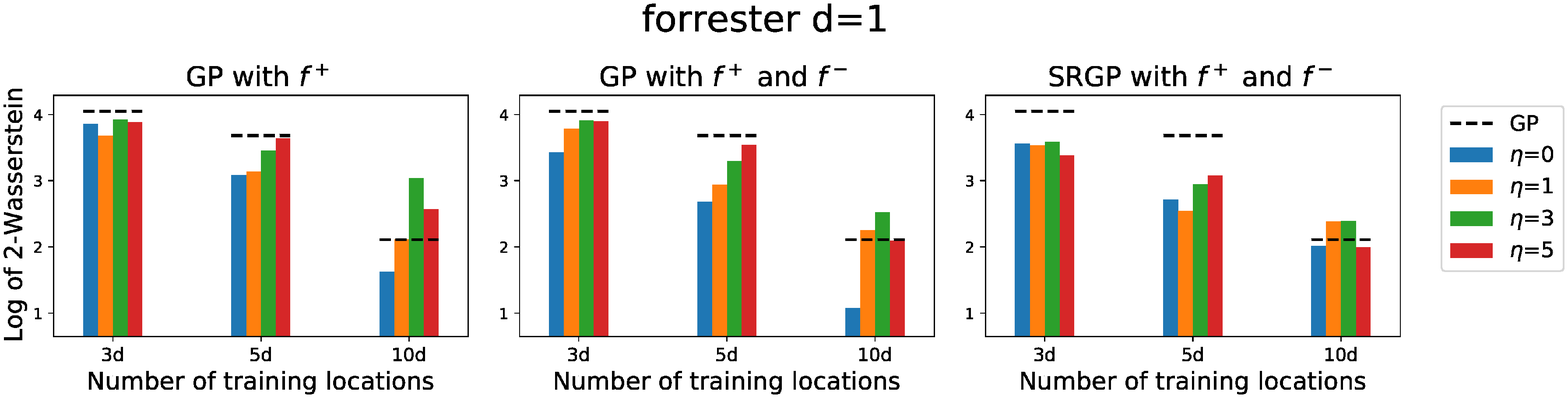}
\par\end{centering}
\begin{centering}
\includegraphics[width=0.98\textwidth]{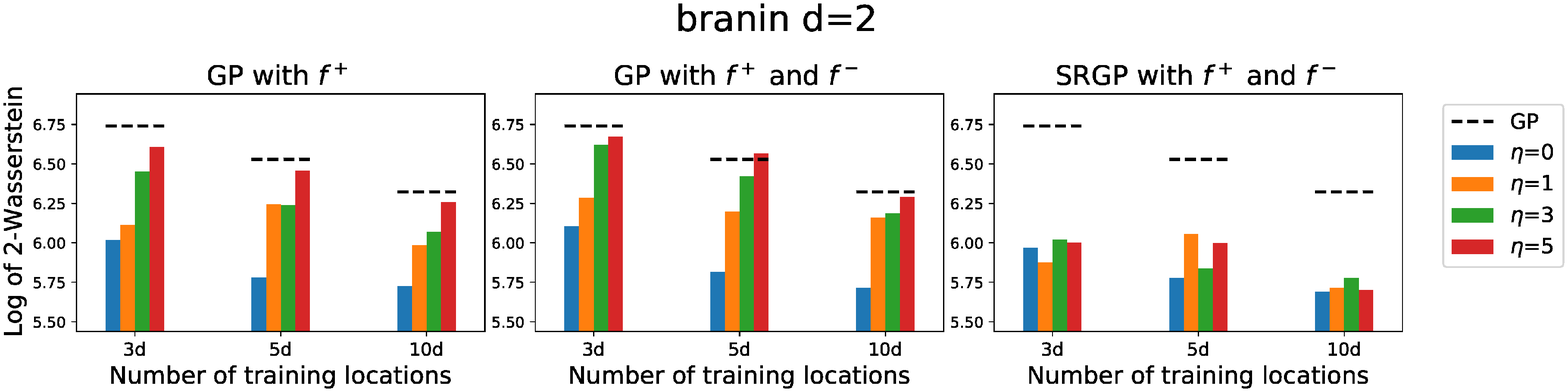}
\par\end{centering}
\begin{centering}
\includegraphics[width=0.98\textwidth]{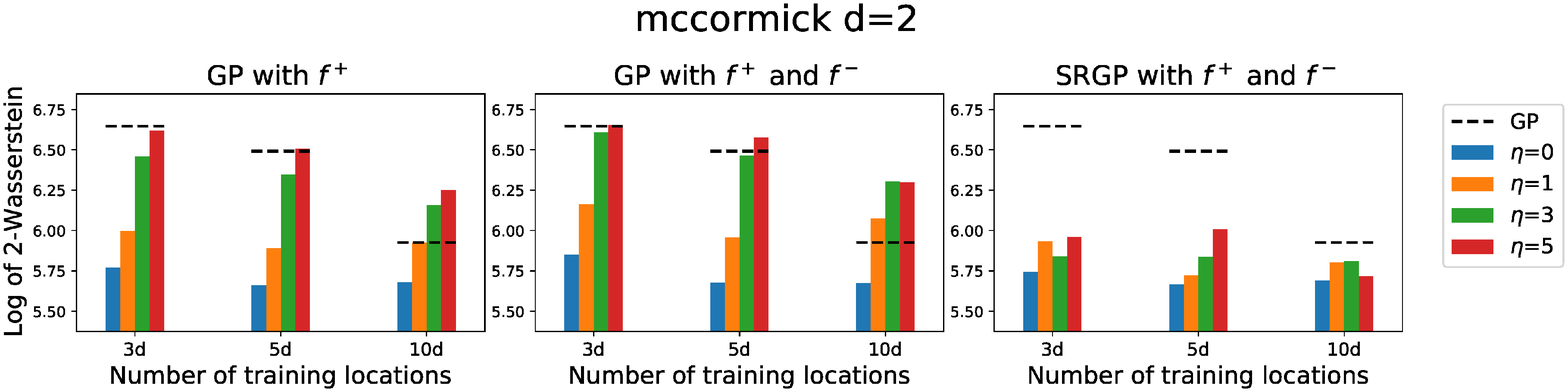}
\par\end{centering}
\begin{centering}
\includegraphics[width=0.98\textwidth]{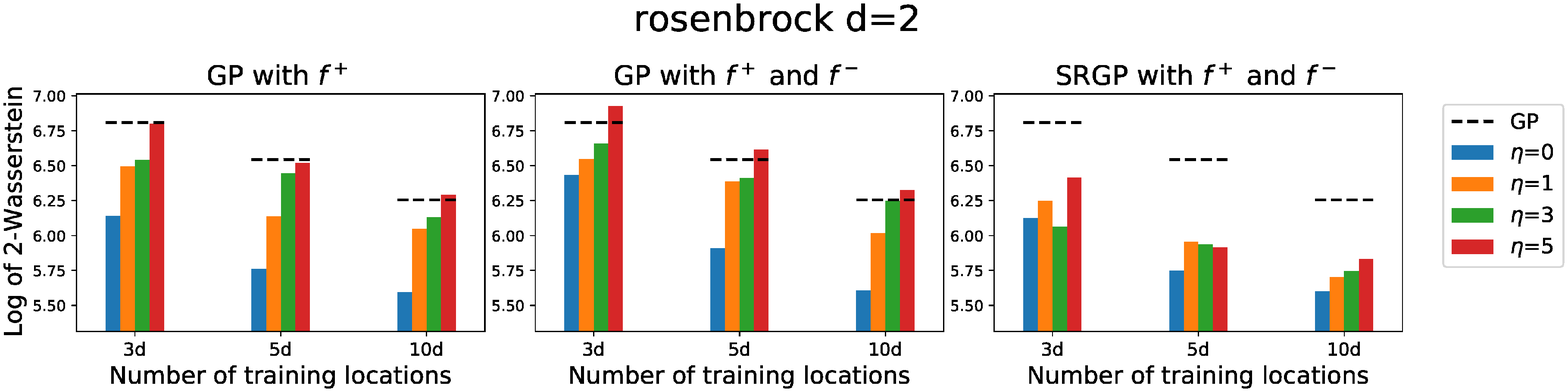}
\par\end{centering}
\begin{centering}
\includegraphics[width=0.98\textwidth]{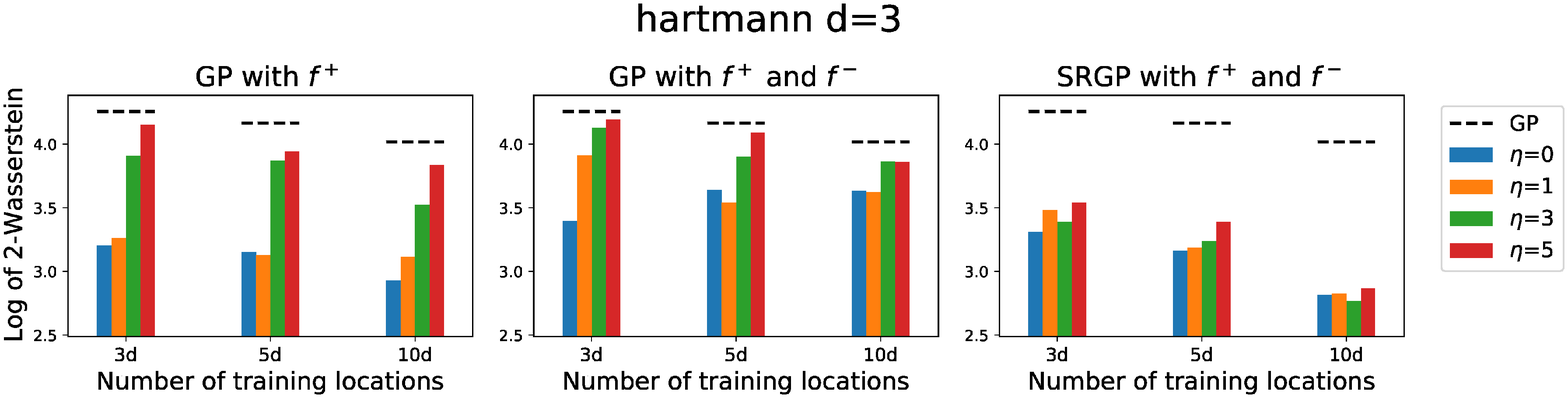}
\par\end{centering}
\caption{We study the sensitivity of the model by misspecifying the $f^{+}$
and $f^{-}$ with different levels $\eta^2:=\eta^2_+=\eta^2_-\in[0,1,3,5]$.
Increasing the misspecified levels will degrade the performance. It is important to note that our w-\gls{GP} with misspecification still performs better than the vanilla \gls{GP} sampling \citep{wilson2020efficiently} which does not use the bounds,
especially when the number of training locations is small, $3d$ or
$5d$. Asymptotically, when $\eta\rightarrow\infty$, our w-\gls{GP} will
accept all samples and become the case of \cite{wilson2020efficiently}.
\label{fig:ablation_gp_sampling_noise}}
\end{figure*}

\subsection{Sensitivity analysis of misspecifying $f^{+}$ and $f^{-}$ for
\gls{GP} posterior sampling}

We study the sensitivity  to
the \gls{GP} posterior sampling performance by misspecifying the upper and lower bounds. In the benchmark function considered, we know the true $\max f(\bx)$ and $\min f(\bx)$ in advance. We make use of these optimum values and vary the misspecified levels
$\eta^2_+=\eta^2_-\in\left[0,1,3,5\right]$ for which $f^{+}=\max f(\bx)+\mathcal{N}(0,\eta^2_+)$
and $f^{-}=\min f(\bx)+\mathcal{N}(0,\eta^2_-)$. This level is considered
in a standardized output space $y\sim\mathcal{N}(0,1)$. We present
the result in Fig. \ref{fig:ablation_gp_sampling_noise}. It is expected
that increasing the misspecified levels will lead to degrading the performance.
However, our model still performs  better than the vanilla \gls{GP} sampling
\citep{wilson2020efficiently} in most cases. Especially when we
have limited observations of less than $5d$ training locations, our
model outperforms \gls{GP} sampling when $\eta^2\le3$.

Relaxing this misspecify level $\eta\rightarrow\infty$ will let
the performance of w-\gls{GP} at least reduce to the case of \cite{wilson2020efficiently}
while the performance of w-\gls{SRGP} will be more sensitive as the transformation
depends on $\eta_+$.


\section{Experimental evaluation}

\subsection{Experimental setting \label{app_subsec:Experimental-setting}}

The benchmarks function and their ranges are available online.\footnote{https://www.sfu.ca/\textasciitilde ssurjano/optimization.html}
We summarize the hyperparameters and their min-max ranges used in
the machine learning tuning experiments in Table \ref{tab:Hyperparameters-setting}.

\begin{itemize}
\item Support vector regression \citep{Smola_1997Support} on Abalone dataset.
The Abalone dataset is available publicly.\footnote{https://archive.ics.uci.edu/ml/datasets/abalone}
We specify the upper bound as $f^{+}=-1.92$ and do not set the lower bound. 
\item \textsc{xgboost} classification \citep{chen2016xgboost}: we use Pima Indians
Diabetes database.\footnote{https://www.kaggle.com/uciml/pima-indians-diabetes-database}
We set the bounds $f^{+}=0.75$, $f^{-}=0.65$ for the result presented in Fig. \ref{fig:BO_result}. Then, we vary the upper bound $f^+ \in [0.7, 0.73,0.75,0.77,0.8,0.9]$ to study the sensitivity with different choices of $f^+$ presented in Fig. \ref{fig:BO_misspecified}.
\item \textsc{cnn} classification \citep{lecun1995convolutional} on a subset (10\%)
of \textsc{cifar10}. We use the bounds of $f^{+}=61.5$ and  $f^{-}=0$. 
\end{itemize}

\begin{table}
\centering{}\caption{The ranges for each hyperparameter used for tuning machine learning
algorithms. \label{tab:Hyperparameters-setting}}
\begin{tabular}{ccc}
\toprule 
\multicolumn{3}{c}{\textsc{xgboost}}\tabularnewline
\midrule 
Variables & Min & Max\tabularnewline
\midrule
min\_child\_weight & $1$ & $20$\tabularnewline
colsample\_bytree & $0.1$ & $1$\tabularnewline
max\_depth & $5$ & $15$\tabularnewline
subsample & $0.5$ & $1$\tabularnewline
gamma & $0$ & $10$\tabularnewline
alpha & $0$ & $10$\tabularnewline
\end{tabular}\hspace{20pt}%
\begin{tabular}{ccc}
\toprule 
\multicolumn{3}{c}{ \textsc{cnn}}\tabularnewline
\midrule 
Variables & Min & Max\tabularnewline
\midrule
batch size & $16$ & $100$\tabularnewline
learning rate & $1e^{-6}$ & $1e^{-2}$\tabularnewline
max iteration & $5$ & $50$\tabularnewline
\end{tabular}\hspace{20pt}%
\begin{tabular}{ccc}
\toprule 
\multicolumn{3}{c}{Support vector regression}\tabularnewline
\midrule 
Variables & Min & Max\tabularnewline
\midrule
$C$ & $0.1$ & $1000$\tabularnewline
epsilon & $1e^{-6}$ & $1$\tabularnewline
gamma & $1e^{-6}$ & $5$\tabularnewline
\bottomrule
\end{tabular}
\end{table}

\subsection{Additional experiments \label{subsec:Additional-experiments}}

We present additional experiments for \gls{GP} posterior
sampling in Fig. \ref{fig_app:gp_sampling_comparison} and Bayesian
optimization in Fig. \ref{fig_app:BO_result}. The additional results
are consistent with the results presented in the main paper. \gls{BES}$\pm$
(exploiting both $f^{+}$ and $f^{-}$) achieves the best performance.

\begin{figure*}
\begin{centering}
\includegraphics[width=0.33\textwidth]{figs/Six-hump-camel_2_RMSE}\includegraphics[width=0.33\textwidth]{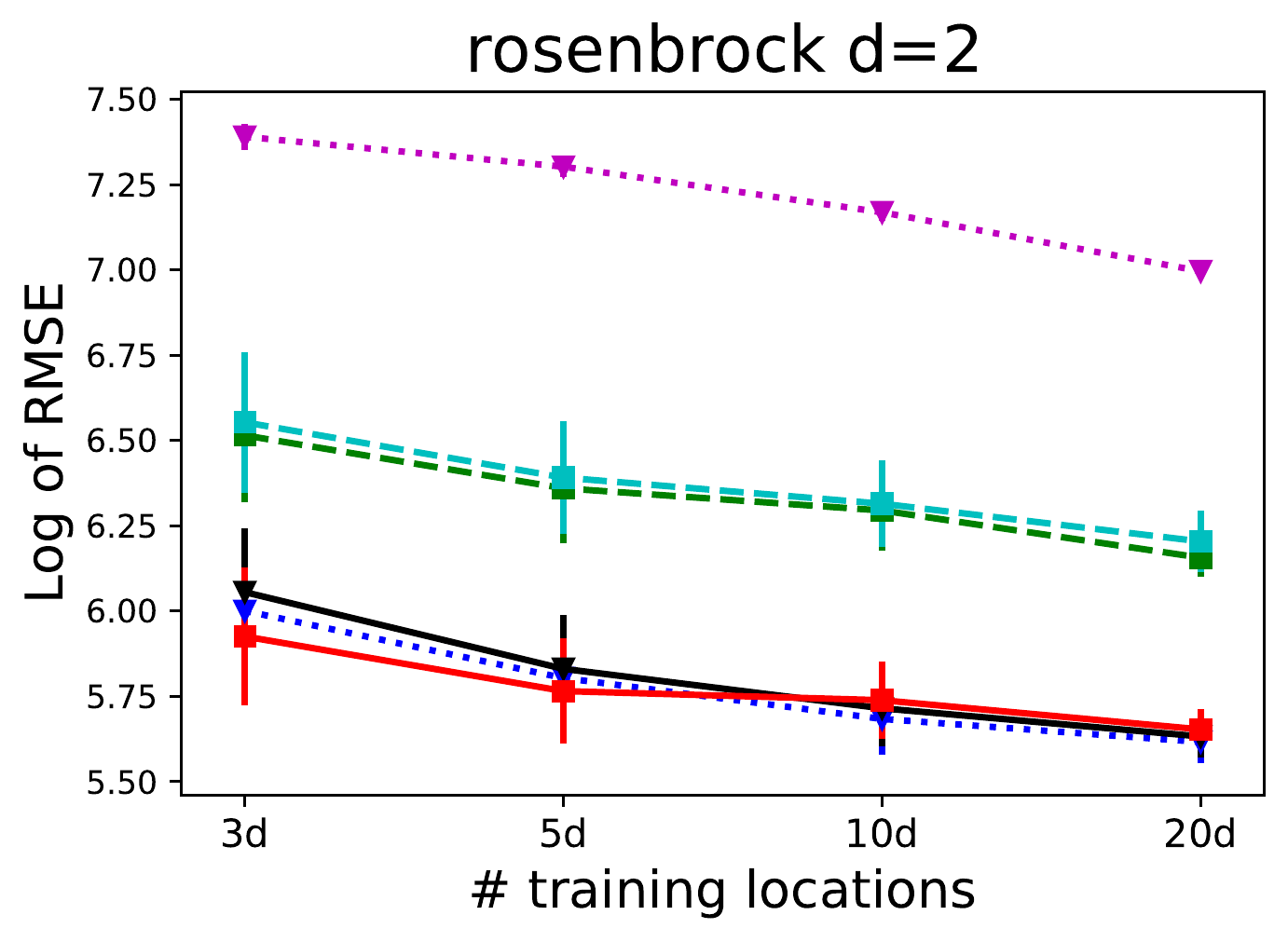}\includegraphics[width=0.33\textwidth]{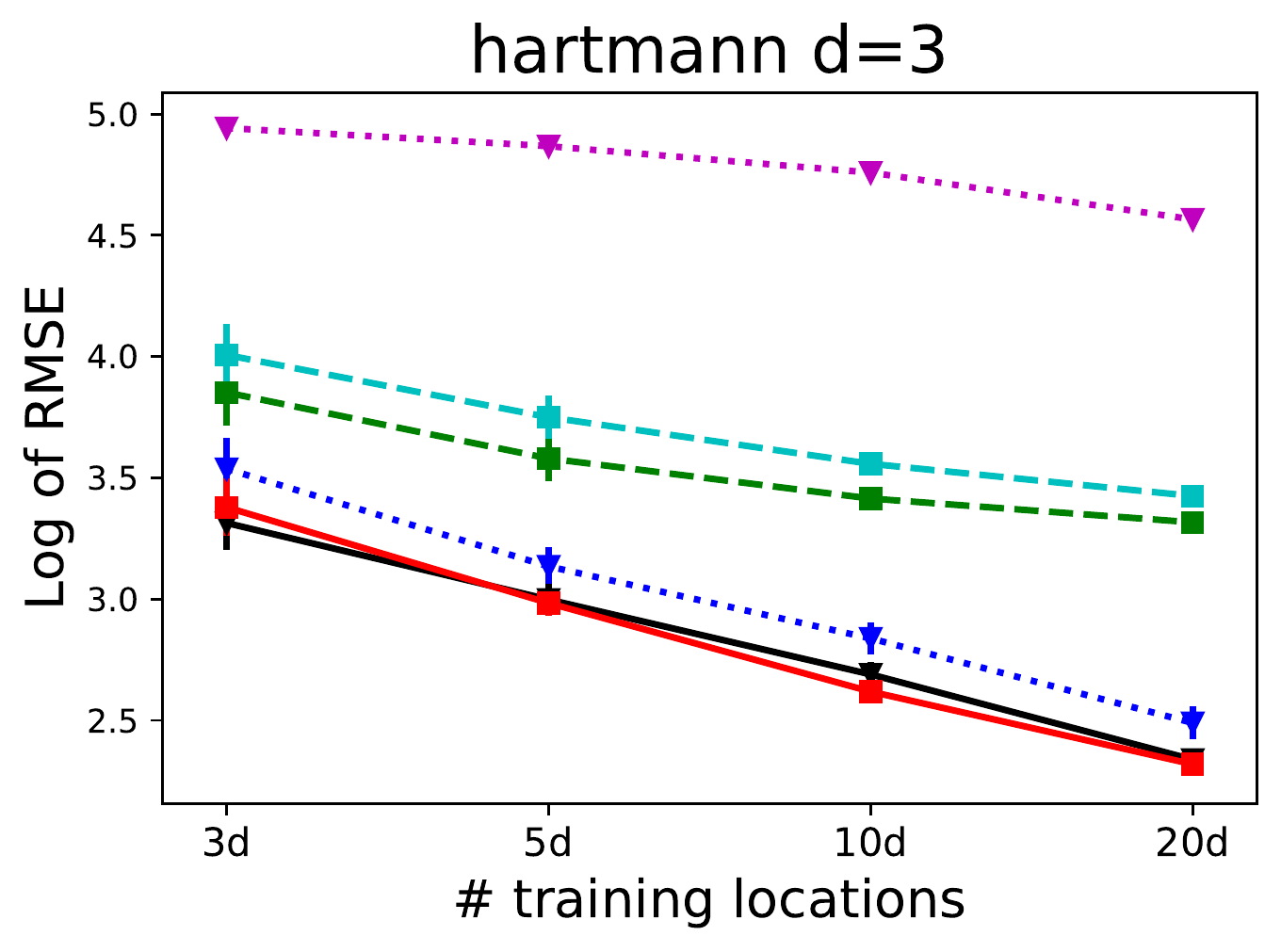}
\par\end{centering}
\begin{centering}
\includegraphics[width=0.33\textwidth]{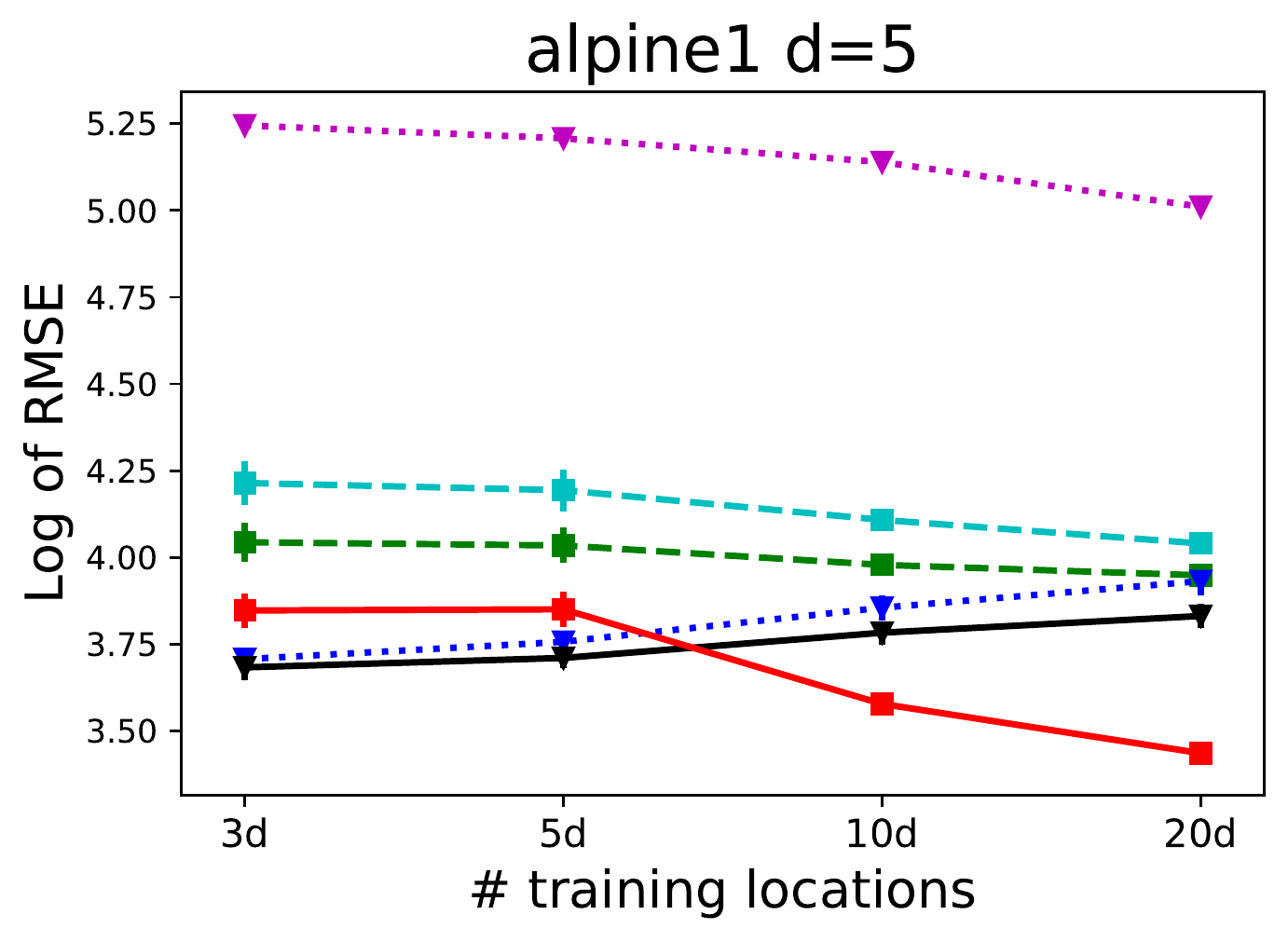}\includegraphics[width=0.33\textwidth]{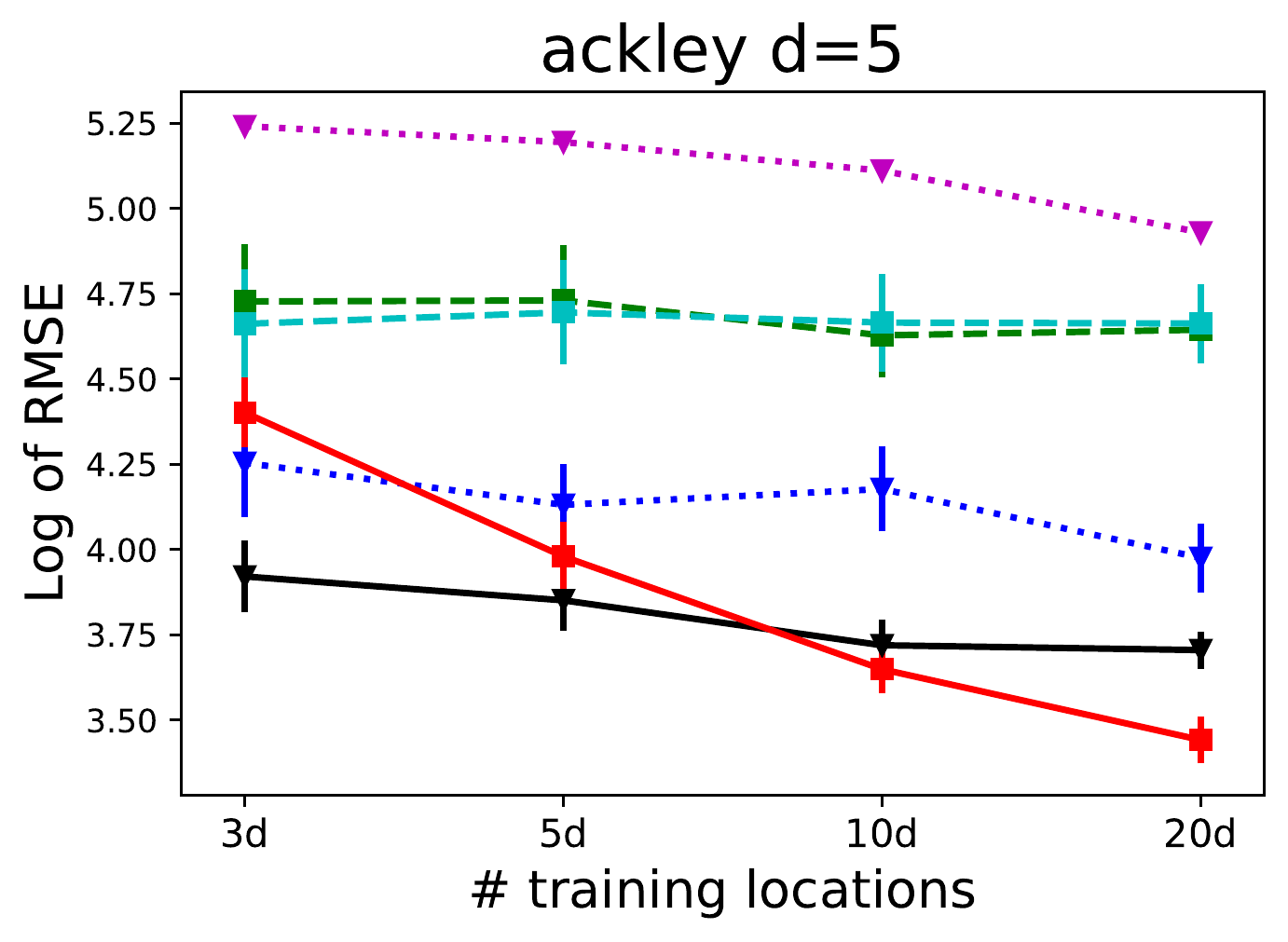}\includegraphics[width=0.33\textwidth]{figs/hartmann_6_RMSE.pdf}

\includegraphics[width=0.99\textwidth]{figs/gpsampling_legend.pdf}
\par\end{centering}
\caption{We measure the root mean squared error (\gls{RMSE}) loss  from
$M=200$ \gls{GP} samples $g(\cdot)$ against the true function $f(\cdot)$ with different
number of training points $[3d,5d,10d,20d]$ where $d$ is the number
of input dimensions. The $f^{+}$ and $f^{-}$ are known from the black-box
function. The overall trends indicate that (i) the loss is smaller
with increasing number of training points and (ii) exploiting the external
knowledge about $f^{+}$ and $f^{-}$ will lead to better estimation
than exploiting $f^{+}$ alone. When we get sufficient observations, e.g., the number of training locations is $20d$, the performance of w-\gls{GP} $f^{+},f^{-}$ resembles w-\gls{SRGP} $f^{+},f^{-}$. The error bar is estimated over
$30$ independent runs.\label{fig_app:gp_sampling_comparison}}
\end{figure*}

\begin{figure*}
\centering

\includegraphics[width=0.325\textwidth]{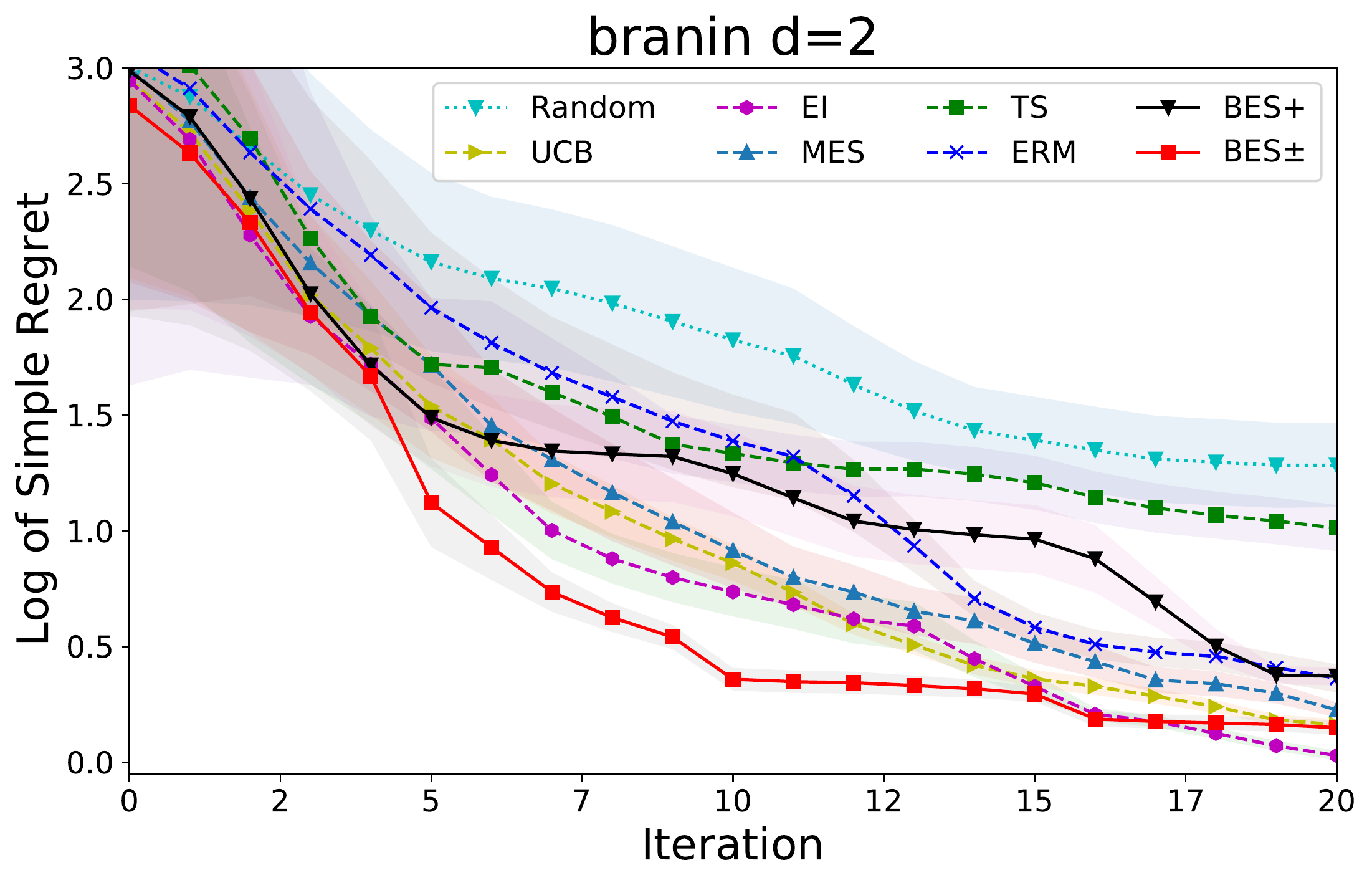}
\includegraphics[width=0.325\textwidth]{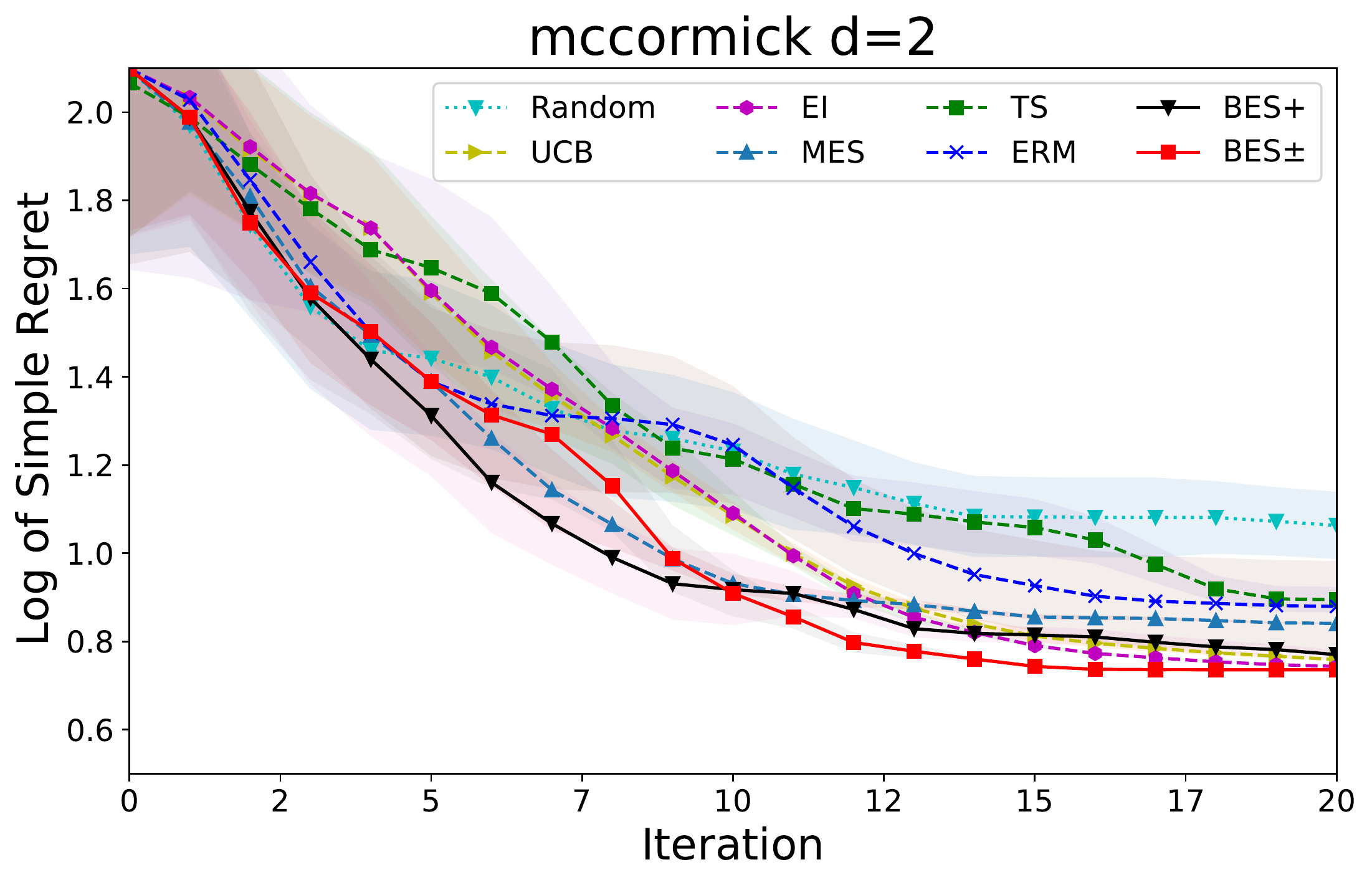}
\includegraphics[width=0.325\textwidth]{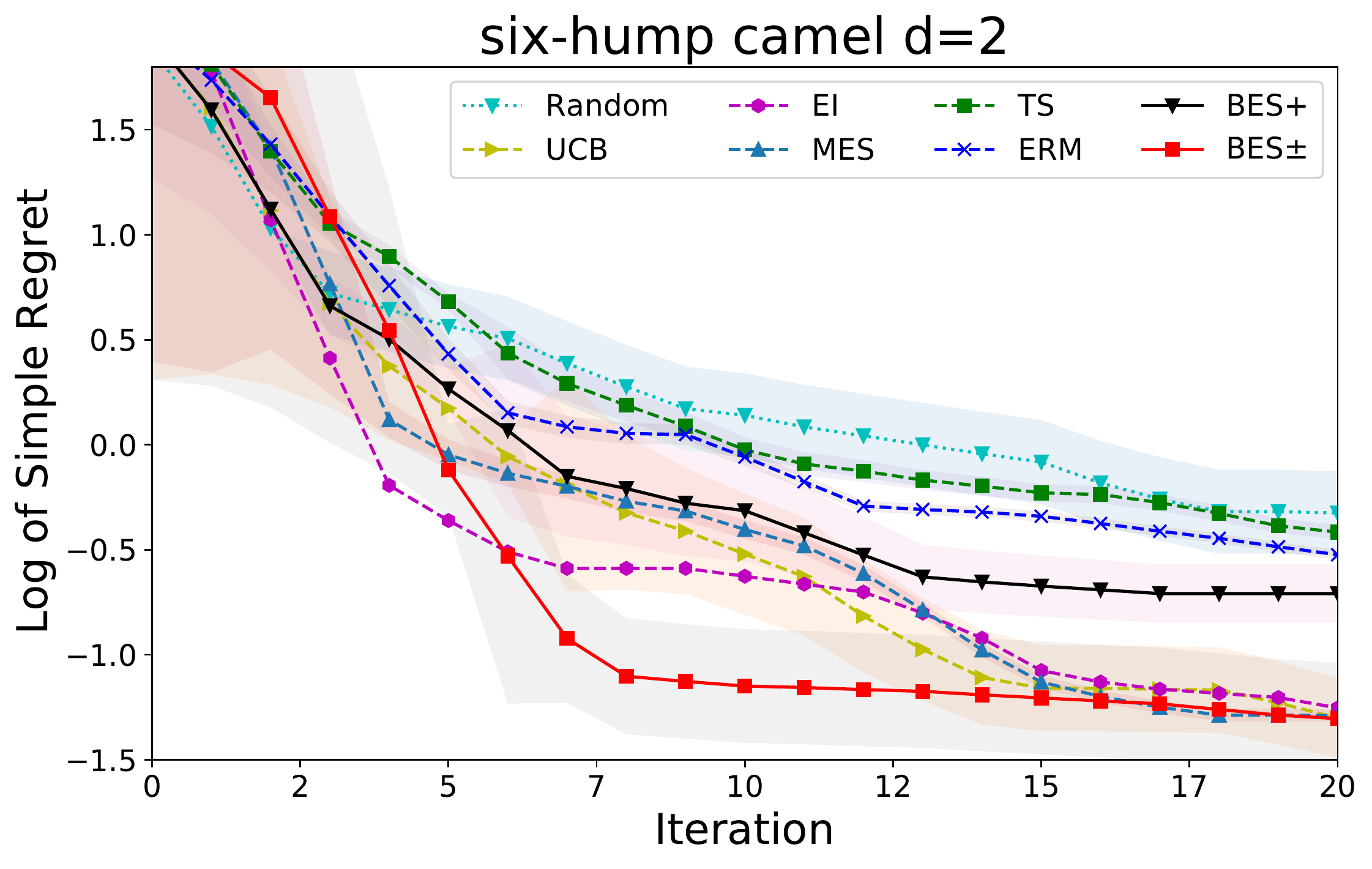}

\includegraphics[width=0.325\textwidth]{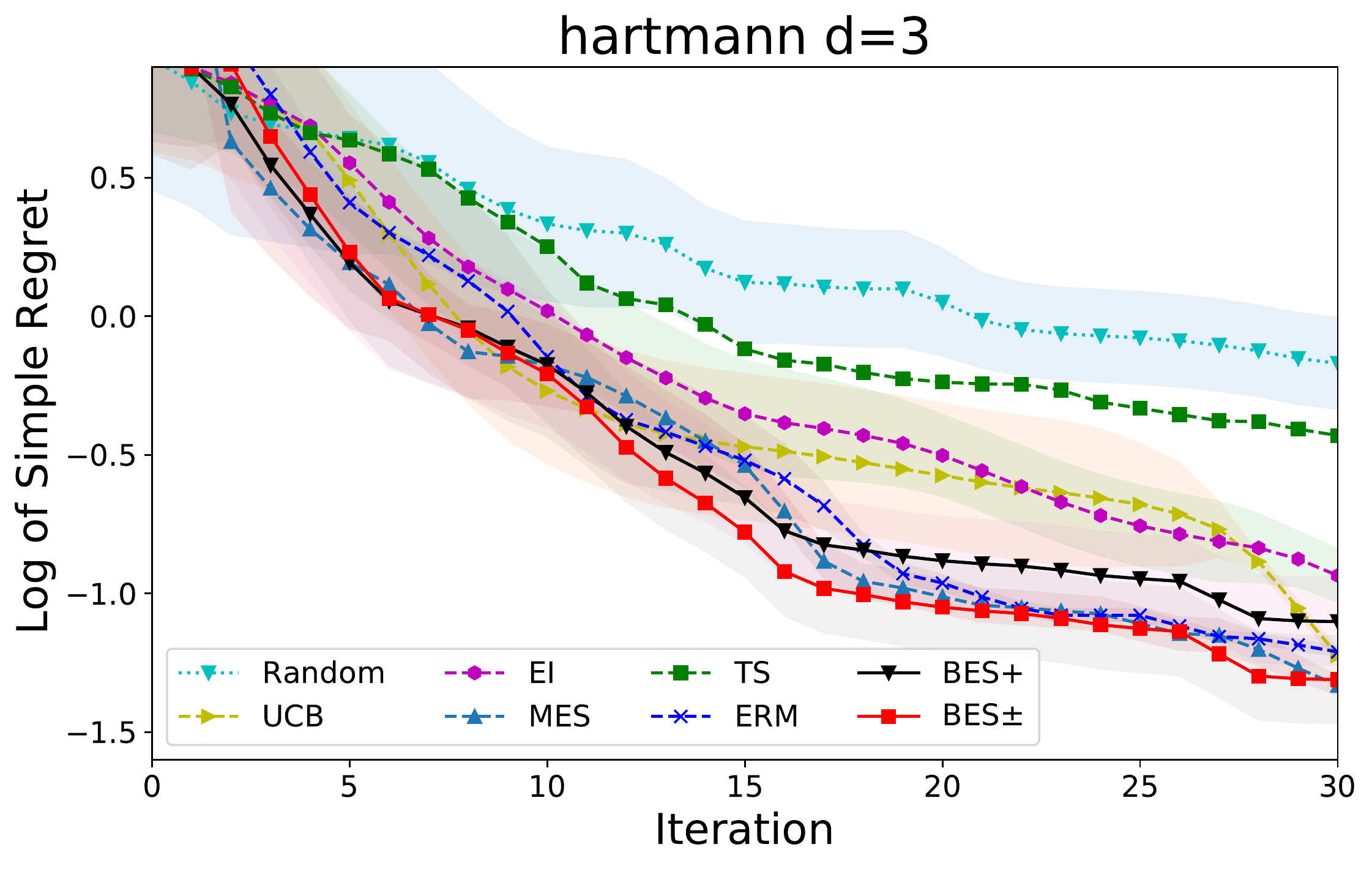}
\includegraphics[width=0.325\textwidth]{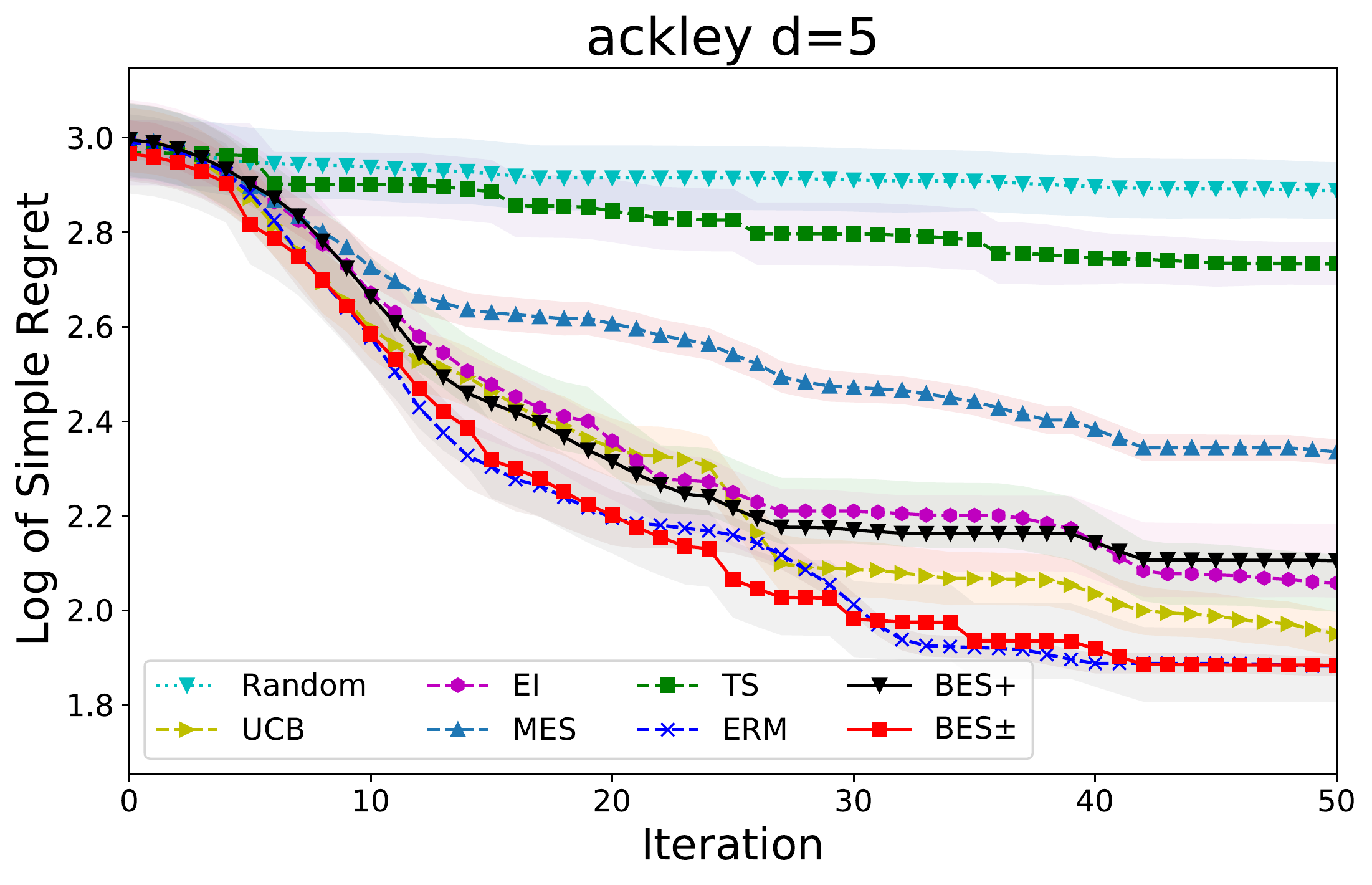}
\includegraphics[width=0.325\textwidth]{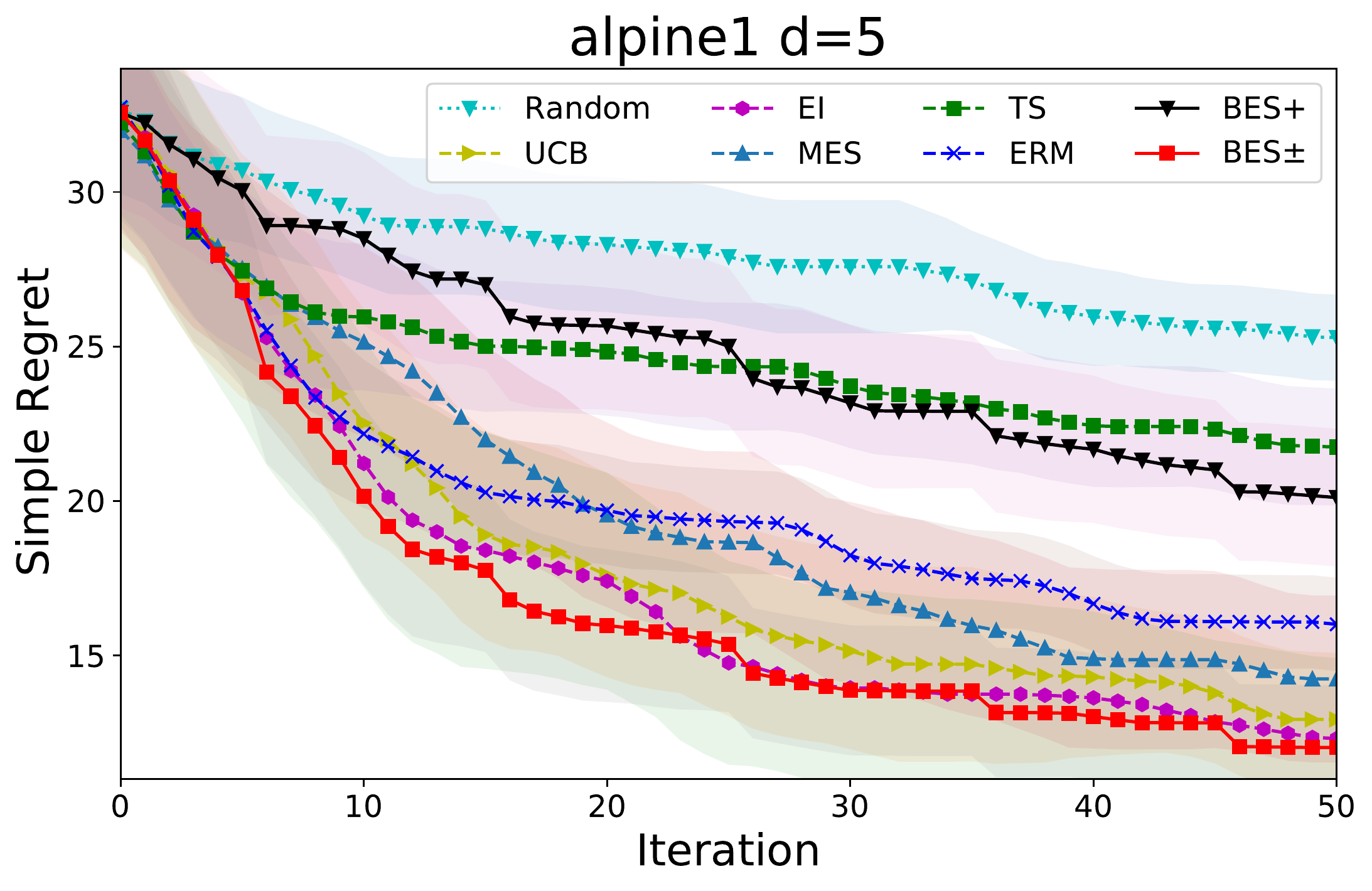}

\caption{We present additional results for Bayesian optimization using benchmark
functions. Our \gls{BES}$\pm$ (using both $f^{+},f^{-}$)
achieves better performance than \gls{BES}$+$ (using $f^{+}$ alone) and \gls{ERM}
(using $f^{+}$ alone) \citep{nguyen2020knowing}. \textit{Simple regret} is defined as $| \max_{\forall \bx \in \mathcal{X}} f(\bx) -  \max_{\forall \bx_i \in D_t} f(\bx_i)| $. \label{fig_app:BO_result}}
\end{figure*}

\end{document}